\pdfoutput=1





\documentclass[sigconf]{aamas} 


\usepackage{balance} 

\usepackage{algorithm, amsmath, amsthm, bm, bbm, booktabs, leftidx, lineno}
\usepackage[noend]{algpseudocode}
\usepackage[colorinlistoftodos]{todonotes}
\usepackage{tikz}
\usetikzlibrary{automata, angles}

\theoremstyle{plain}
\newtheorem{proposition}{Proposition}
\newtheorem{theorem}{Theorem}
\newtheorem{lemma}{Lemma}

\theoremstyle{definition}
\newtheorem{definition}{Definition}

\theoremstyle{remark}
\newtheorem{remark}{Remark}

\DeclareMathOperator*{\argmin}{argmin}
\DeclareMathOperator*{\argmax}{argmax}

\DeclareMathOperator*{\E}{\mathcal{E}}

\DeclareMathOperator*{\inff}{inf}
\DeclareMathOperator*{\finn}{fin}
\DeclareMathOperator*{\ltlU}{\sf U}
\DeclareMathOperator*{\ltlX}{\sf X}
\DeclareMathOperator*{\ltlF}{\sf F}
\DeclareMathOperator*{\ltlG}{\sf G}



\setcopyright{ifaamas}
\acmConference[AAMAS '21]{Proc.\@ of the 20th International Conference on Autonomous Agents and Multiagent Systems (AAMAS 2021)}{May 3--7, 2021}{Online}{U.~Endriss, A.~Now\'{e}, F.~Dignum, A.~Lomuscio (eds.)}
\copyrightyear{2021}
\acmYear{2021}
\acmDOI{}
\acmPrice{}
\acmISBN{}



\acmSubmissionID{325}


\title[AAMAS-2021 Formatting Instructions]{Multi-Agent Reinforcement Learning\\ with Temporal Logic Specifications}


\author{Lewis Hammond}
\affiliation{
\institution{University of Oxford}}
\email{lewis.hammond@cs.ox.ac.uk}

\author{Alessandro Abate}
\affiliation{
\institution{University of Oxford}}
\email{aabate@cs.ox.ac.uk}

\author{Julian Gutierrez}
\affiliation{
\institution{Monash University}}
\email{julian.gutierrez@monash.edu}

\author{Michael Wooldridge}
\affiliation{
 \institution{University of Oxford}}
\email{mjw@cs.ox.ac.uk}


\begin{abstract}
In this paper, we study the problem of learning to satisfy temporal logic specifications with a group of agents in an unknown environment, which may exhibit probabilistic behaviour. From a learning perspective these specifications provide a rich formal language with which to capture tasks or objectives, while from a logic and automated verification perspective the introduction of learning capabilities allows for practical applications in large, stochastic, unknown environments. The existing work in this area is, however, limited. Of the frameworks that consider full linear temporal logic or have correctness guarantees, all methods thus far consider only the case of a single temporal logic specification and a single agent. In order to overcome this limitation, we develop the first multi-agent reinforcement learning technique for temporal logic specifications, which is also novel in its ability to handle multiple specifications. We provide correctness and convergence guarantees for our main algorithm -- \textsc{Almanac} (Automaton/Logic Multi-Agent Natural Actor-Critic) -- even when using function approximation. Alongside our theoretical results, we further demonstrate the applicability of our technique via a set of preliminary experiments.
\end{abstract}



\keywords{multi-agent reinforcement learning; temporal logic; automata; formal methods; multi-objective reinforcement learning}


         
\newcommand{\BibTeX}{\rm B\kern-.05em{\sc i\kern-.025em b}\kern-.08em\TeX}


\begin{document}


\pagestyle{fancy}
\fancyhead{}


\maketitle 


\section{Introduction}

Much recent work from the control and machine learning communities has considered the task of learning to satisfy temporal logic specifications in unknown environments \cite{Sadigh2014,Hasanbeig2019,Hahn2019,Bozkurt2019,Oura2020,Fu2014,Hasanbeig2020a,Li2018,Li2017,Wen2016,ToroIcarte2018,ToroIcarte2018a,Leon2020}. In these frameworks the agent is given a goal, typically specified using Linear Temporal Logic (LTL), and the dynamics of the agent’s environment are assumed to be captured by some unknown Markov Decision Process (MDP). The task of the agent is then to learn a policy that maximises the probability of satisfying the LTL specification. Importantly, the proposed Reinforcement Learning (RL) algorithms are \textit{model-free}, and so do not require evaluating the LTL specification against a model of the MDP (as is typically done in probabilistic model-checking, for instance), allowing for greater flexibility and scalability. These techniques have several advantages. From the perspective of RL, LTL forms an expressive and compact language with which to express infinite-horizon rewards that may be non-Markovian or exhibit special logical structure, and has provided a basis for new reward signal languages \cite{Jothimurugan2019, Littman2017}. From the perspective of logic, control and automated verification, the introduction of learning allows system designers to ensure that agents satisfy certain desirable properties in large, stochastic, unknown environments \cite{Hasanbeig2020a}.

The existing work in this area is, however, limited. Of the frameworks that consider full LTL or have correctness guarantees, all methods thus far consider only the case of a single specification and agent. Modern AI and control systems on the other hand are increasingly multi-agent and often multi-objective. 
Simply applying single-agent learning algorithms in a multi-agent setting can lead to poor performance and a lack of convergence \cite{Busoniu2008,Zinkevich2005}. Furthermore, even in the single-agent setting, no previous work has provided any correctness guarantees when using function approximation, which is crucial for scenarios that require both rigour \textit{and} scalability.

\subsection{Related Work}
\label{related}

Our contributions in this paper draw on many areas. The most closely related of these is a recent line of work investigating the problem of learning to satisfy temporal logic specifications in MDPs. These works can in turn be partitioned by whether they focus on full LTL or on a fragment of LTL. Within the former category, early approaches used model-based algorithms and encoded LTL specifications using Deterministic Rabin Automtata (DRAs) \cite{Fu2014,Sadigh2014}. To overcome scalability issues resulting from DRAs and model-based algorithms, later works employed model-free algorithms and Limit-Deterministic B{\"u}chi Automata (LDBAs) \cite{Hasanbeig2019,Hahn2019,Oura2020,Bozkurt2019} and in some cases function approximation \cite{Hasanbeig2020a}. However, these works only consider the case of a single specification and a single agent, and none have provided correctness guarantees when using function approximation. 

Other works have instead restricted their attention to \textit{fragments} of LTL \cite{Li2018,Li2017,Wen2016}. One strand of research uses `reward machines' (finite state transducers) to capture finite-horizon objectives to allow for a natural decomposition of tasks \cite{ToroIcarte2018}, and also for the introduction of multiple objectives \cite{ToroIcarte2018a}. Concurrently with this work, one recent effort has sought to generalise reward machines to the multi-agent and multi-objective case \cite{Leon2020}. However, this approach simply optimises the conjunction of all objectives via a single reward machine and independent Q-learning, which is well-known to suffer from convergence issues and sub-optimality in the multi-agent setting \cite{Busoniu2008}. Besides not supporting full LTL, the methods that use function approximation lack theoretical guarantees.

Similar problems to the one we tackle in this work have been considered by the verification community. Br\'{a}zdil et al. propose a Probably Approximate Correct (PAC) Q-learning algorithm for unbounded reachability properties in tabular settings with a single agent and single objective \cite{Brazdil2014}. Probabilistic or statistical model-checking algorithms have also been proposed for Markov Games (MGs), although so far these only handle known models and highly restricted forms of game, such as the turn-based two-player case \cite{Ashok2019}, or those that are composed of two coalitions of players and can thus be reduced to a two-player game \cite{Kwiatkowska2019}. 
Related paradigms such as \textit{rational verification} \cite{Wooldridge2016} and \textit{rational synthesis} \cite{Fisman2010} only consider non-stochastic games without learning agents.

Finally, our work can also be viewed in the context of the RL, Multi-Agent RL (MARL), and game theory literature \cite{Littman1994, Bowling2001,Conitzer2003,Wang2002,Prasad2015,Arslan2017}. The main algorithm we develop in this paper, 
\textsc{Almanac} (Automaton/Logic Multi-Agent Natural Actor-Critic), builds upon natural actor-critic algorithms \cite{Peters2008,Bhatnagar2009,Thomas2014} and generalises this to the multi-agent setting via the derivation of a multi-agent natural gradient. Multi-agent actor-critic algorithms enjoy state-of-the-art performance \cite{Lowe2017,Foerster2018} and have also been the focus of efforts to provide theoretical guarantees of convergence \cite{Zhang2018,Qu2020,Perolat2018}. We refer the reader to Zhang et al. for a recent survey of MARL \cite{Zhang2019} and to Now{\'e} et. al for a more game-theoretic perspective \cite{Nowe2012}. All of these works, however, use traditional scalar reward functions, whereas we focus on satisfying temporal logic formulae that provide a rich and rigorous language in which to express complex tasks and specifications over potentially infinite horizons.

\subsection{Contribution}

We overcome the limitations described above by proposing \textit{the first multi-agent reinforcement learning algorithm for temporal logic specifications} with correctness and convergence guarantees, even when using function approximation. Generalising from the single-objective, single-agent, non-approximate framework to the multi-objective, multi-agent, approximate setting is far from trivial and introduces several new challenges. 
We provide theoretical solutions to these challenges in the form of a new algorithm, \textsc{Almanac}, which provably converges to either locally or globally optimal joint policies with respect to multiple LTL specifications, depending on whether agents use local or global policies, respectively (the notions of local and global are made precise in later sections). We also evaluate our algorithm against ground-truth probabilities using PRISM, a state-of-the-art probabilistic model-checker \cite{Kwiatkowska2011}. 

We proceed as follows. 
In Section~\ref{preliminaries} we provide the requisite technical background on MARL and LTL and in Section~\ref{problem} we formalise our problem statement. We then introduce our full algorithm in Section~\ref{almanac} and report briefly on our experiments in Section~\ref{experiments}. 
Full proofs are relegated to Appendix \ref{proofs}, and we include examples and extra discussion in Appendix \ref{examples}.

\section{Preliminaries}
\label{preliminaries}

Unless otherwise indicated we use superscripts $i \in N$ to denote affiliation with a player $i$, or $j \in M$ to denote affiliation with a specification $\varphi^j$, and with subscripts $t \in \mathbb{N}$ to index variables through time. We denote true or optimal versions of functions or quantities using superscripts $^*$, and approximate versions using superscripts $~\hat{}$. 

\subsection{Multi-Agent Reinforcement Learning}
\label{marl}

Markov games (MGs), also known as (concurrent) stochastic games, are the \textit{lingua franca} of MARL, in much the same way that MDPs are for standard RL \cite{Littman1994}. In this setting the game proceeds, at each time step $t$, from a state $s_t$ by each player $i$ selecting an action $a_t^i$, after which a new state $s_{t+1}$ is reached and individual rewards $r_{t+1}^i$ are received. Formally, we have the following statement. 
\begin{definition}
    A (finite) \textbf{Markov Game (MG)} is a tuple $G = (N, S, A, T, \gamma, R)$ where $N = \{1, \ldots, n\}$ is a set of players, $S$ is a (finite) state space, $A = \{A^1, \ldots, A^n\}$ is a set of finite action spaces, $T : S \times A^1 \times \cdots \times A^n \times S \rightarrow [0,1]$ is a stochastic transition function, $\gamma \in (0,1)$ is an (optional) discount rate, and $R = \{R^1, \ldots, R^n\}$ is a set of reward functions defined as $R^i : S \times A^1 \times \cdots \times A^n \times S \rightarrow \mathbb{R}$. A (memoryless) \textbf{policy} $\pi^i : S \times A^i \rightarrow [0,1]$ maps states to a distribution over player $i$'s actions. If the range of $\pi^i$ is in fact $\{0,1\}$ then we say that $\pi$ is \textbf{deterministic}. A \textbf{joint policy} $\pi = (\pi^1, \ldots, \pi^n)$ is the combined policy of all players in $N$. We denote also by $\pi^{-i} = (\pi^1, \ldots, \pi^{i-1}, \pi^{i+1}, \ldots, \pi^n)$ the joint policy without player $i$. 
\end{definition}

Each player's objective in an MG is to maximise their cumulative discounted expected reward over time, given that the other players are playing some joint policy $\pi^{-i}$. Observe that given a starting state $s$ a joint policy $\pi$ induces a Markov chain $\Pr^\pi_G(\cdot \vert s)$ over the states of the MG. By taking the expectation over time of this Markov chain we define the \textit{value function} as $V^{i*}_{\pi}(s) \coloneqq \mathbb{E}_{\pi} [\sum^\infty_{t=0} \gamma^t r^i_{t+1}  ~\vert~  s ]$ where $R^i(s_t, a^1, \ldots, a^n, s_{t+1}) = r^i_{t+1}$. 
The core solution concept in MGs is that of a Markov Perfect Equilibrium (MPE) \cite{Maskin2001}. Informally, in the games we consider, an MPE is a set of memoryless strategies that forms a Nash Equilibrium when starting from any state.
\begin{definition}
    Consider an MG $G$. For each agent $i$ and joint policy $\pi^{-i}$, a policy $\pi^i$ is a \textbf{best response} to $\pi^{-i}$ if it is in the set $BR^i(\pi^{-i}) \coloneqq \{\pi^i : V^{i*}_{(\pi^i, \pi^{-i})}(s) = \max_{\bar{\pi}^i} V^{i*}_{(\bar{\pi}^i, \pi^{-i})}(s), \forall s \in S\}$. A joint policy $\pi = (\pi^1, \ldots, \pi^n)$ in $G$ is a \textbf{Markov Perfect Equilibrium (MPE)} if $\pi^i \in BR^i(\pi^{-i})$ for all $i \in N$. If, in addition, we have that $V^{i*}_{\pi}(s) = \max_{\bar{\pi}} V^{i*}_{\bar{\pi}}(s)$ for all $s \in S$ and for all $i \in N$, then we say that $\pi$ is \textbf{team-optimal}. If there exists a team-optimal joint policy in $G$, then we call $G$ a \textbf{common-interest game}.
\end{definition}
Intuitively, a common-interest game captures a setting in which there is a joint policy under which `everyone is happy'. In many games no such policy exists, and so a trade-off may be necessary. We may thus instead wish to maximise a weighted sum of rewards $V^*_\pi(s) = \sum_{i\in N} w[i] V^{i*}_{\pi}(s)$. In this general and popular setting (which generalises both team and common-interest games) that we adopt for the remainder of the paper we describe a joint policy $\pi$ as \textit{locally optimal} if it forms an MPE and \textit{globally optimal} if that MPE is team-optimal. Maximising a weighted sum of rewards means that there is always a \emph{deterministic} optimal joint policy \cite{Sutton2018} (as we can view a joint policy as a policy for a single agent in an MDP), and hence a deterministic MPE, in the games we consider.

\subsection{Linear Temporal Logic}

When defining specifications for a system (e.g., tasks for an agent), a natural idea is to introduce requirements on the possible traces that may arise as the system executes over time. LTL captures this idea and provides a logic for reasoning about the properties of such traces \cite{Pnueli1977}, which here we view as infinite paths $\rho$ through a state space $S$, where each $\rho[t] \in S$ for $t \in \mathbb{N}$ and $\rho[t..]$ denotes the path $\rho$ from time $t$ onwards. Additionally, we introduce a set of atomic propositions $AP$ and a labelling function $L : S \rightarrow \Sigma$ where $\Sigma = 2^{AP}$.
\begin{definition}
    The syntax of \textbf{Linear Temporal Logic (LTL)} formulae is defined recursively using the following operators:
    $$\varphi :\coloneqq \top \mid \alpha \mid \varphi \wedge \varphi \mid \neg \varphi \mid \ltlX \varphi \mid \varphi \ltlU \varphi$$
    where $\alpha \in AP$ is an atomic proposition and $\top$ is read as `true'. The semantics of said formulae are also defined recursively:
    \begin{equation*}
        \begin{aligned}
            \rho &\models \top\\
            \rho &\models \alpha &&\Leftrightarrow~ \alpha \in L(\rho[0])\\
            \rho &\models \psi_1 \wedge \psi_2 &&\Leftrightarrow~ \rho \models \psi_1 \text{ and } \rho \models \psi_2\\
            \rho &\models \neg \psi &&\Leftrightarrow~ \rho \not \models \psi\\
            \rho &\models \ltlX \psi &&\Leftrightarrow~ \rho[1..] \models \psi\\
            \rho &\models \psi_1 \ltlU \psi_2 &&\Leftrightarrow~ \exists t \in \mathbb{N} \text{ s.t. } \rho[t..] \models \psi_2, \forall t' \in [0, t), ~\rho[t'..] \models \psi_1 
        \end{aligned}
    \end{equation*}
\end{definition}
Alongside the standard operators from propositional logic, from which we may derive $\vee$, $\rightarrow$, and $\leftrightarrow$, we have the temporal operators $\ltlX$ (`next') and $\ltlU$ (`until'), from which we may derive $\ltlF \varphi \equiv \top \ltlU \varphi$ (`finally') and $\ltlG \varphi \equiv \neg \ltlF \neg \varphi$ (`globally'). An LTL formula $\varphi$ thus describes a set of infinite traces $\{ \rho \in S^\omega :\rho \models \varphi\}$ through $S$. Alternatively, one may encode such a set by using an \textit{automaton}.
\begin{definition}
    A \textbf{Non-deterministic B{\"u}chi Automaton (NBA)} is a tuple $B = (Q, q_0, \Sigma, F, \delta)$ where $Q$ is a finite set of states, $q_0 \in Q$ is the initial state, $\Sigma = 2^{AP}$ is a finite alphabet over a set of atomic propositions $AP$, $F \subseteq Q$ is a set of accepting states, and $\delta: Q \times \Sigma \rightarrow 2^Q$ is a (non-deterministic) transition function. We say that an \textbf{infinite word} $w \in \Sigma^\omega$ is \textbf{accepted} by $B$ if there exists an \textbf{infinite run} $\rho \in Q^\omega$ such that $\rho[0] = q_0$, $\rho[j+1] \in \delta(\rho[j], \omega[j])$ for all $j \in \mathbb{N}$, and we have $\inff(\rho) \cap F \neq \varnothing$, where $\inff(\rho)$ is the set of states in $Q$ that are visited infinitely often on run $\rho$. 
\end{definition}
In this work, we use a specific variant of NBAs, called Limit-Deterministic B{\"u}chi Automata (LDBAs). Intuitively, LDBAs relegate all non-determinism to a set of $\E$-transitions between two halves of the automaton, an initial component $Q_I$ and an accepting component $Q_A \supseteq F$. This level of non-determinism is, perhaps surprisingly, sufficient for encoding any LTL formula. We refer the reader to Sickert et al. for details of this LTL-to-LDBA conversion process \cite{Sickert2016}, which often yields smaller automata for formulas with deep nesting of modal operators compared to other approaches.
\begin{definition}
    A \textbf{Limit-Deterministic B{\"u}chi Automaton (LDBA)} is an NBA $B = (Q, q_0, \Sigma \cup \{\E\}, \delta, F)$ where $Q$ can be partitioned into two disjoint subsets $Q_I$ and $Q_A$ such that: $\vert\delta(q,\alpha)\vert=1$ for every $q \in Q$ and every $\alpha \in \Sigma$; $\delta(q,\E) = \varnothing$ for every $q \in Q_A$; $\delta(q, \alpha) \subseteq Q_A$ for every $q \in Q_A$ and every $\alpha \in \Sigma$; and $F \subseteq Q_A$.
\end{definition} 

\section{Problem Statement}
\label{problem}

We now combine MARL and LTL to consider the task of learning to satisfy temporal logic specifications with maximal probability in unknown multi-agent environments. The problem we seek to address in this work is: 
\begin{quote}
    Given an (unknown) environment with a team of $n$ agents characterised as an MG $G$, and a set of $m$ LTL specifications, compute (without first learning a model) a joint policy $\pi$ that maximises a weighted sum of the probabilities of satisfying each of the LTL specifications.
\end{quote}
To formalise this problem, we first define the satisfaction probability of $\pi$ in $G$ with respect to an LTL specification $\varphi$.
\begin{definition}
    \label{satprob}
    Given an MG $G$ and a joint policy $\pi$, denote by $\Pr^\pi_G(\cdot \vert s)$ the induced Markov chain over the states of $G$ starting from $s$. 
    Then, given an LTL formula $\varphi$, the \textbf{satisfaction probability} of $\pi$ in $G$ with respect to $\varphi$ starting from a state $s$ is given by $\Pr^\pi_G(s \models \varphi) \coloneqq \Pr^\pi_G(\{\rho : \rho \models \varphi\} ~\vert~ \rho[0] = s)$.
\end{definition}
Thus, our problem can be formally expressed as computing a policy $\pi^*$ in an unknown MG $G$, given a set of LTL specifications $\{\varphi^j\}_{0 \leq j \leq m}$ and vector of weights $w$ of length $m$, such that:
$$\pi^* \in \argmax_{\pi} \sum_j w[j] \Pr^{\pi}_G(s \models \varphi^j) \qquad \forall s \in S$$
This forms a natural extension of the single-agent single-objective case, in which one agent seeks to compute a policy that maximises the probability of satisfying a single LTL specification.

Our solution to this problem crucially relies on the definition of a \textit{product game} which, while never explicitly constructed, defines the full environment over which our agents learn. Note that in the following definition we consider an MG with a generic discount rate $\gamma$ and reward functions $R^\otimes$, though in our algorithm we redefine these to capture the original LTL specification, as in similar single-agent constructions \cite{Sickert2016}. The idea behind this construction is that by learning to act optimally in the (implicit) product game, agents learn to satisfy the LTL specification(s) in the original game.
\begin{definition}
    \label{prodmg}
    Given an LDBA $B = (Q, q_0, \Sigma \cup \{\E\}, \delta, F)$ associated with a set of agents $N^B \subseteq N$, where $Q = Q_I \cup Q_A$, a (finite) MG $G = (N, S, A, T, \gamma, R)$, and a labelling function $L : S \rightarrow \Sigma$, the resulting \textbf{Product MG} is a tuple $G \otimes B = G_B = (N, S^\otimes, A^\otimes, T^\otimes, \gamma, R^\otimes)$ where: 
    $S^\otimes = S \times Q$ is a product state space;
    $A^\otimes = \{A^1_\otimes, \ldots, A^n_\otimes\}$ where each $A^i_\otimes = A^i \cup \{\E_{q'} \vert \exists q \in Q_I, q' \in \delta(q, \E)\}$ for $i \in N^B$ and $A^i_\otimes = A^i$ otherwise;
    $T^\otimes: S^\otimes \times A^1_\otimes \times \cdots \times A^n_\otimes \times S^\otimes \rightarrow [0,1]$ is a stochastic transition function such that $T^\otimes((s,q), a^1,\ldots,a^n, (s',q')) =$
    \begin{align*}
        \begin{cases}
        T(s,a^1,\ldots,a^n,s') & \text{if } \forall i \in N^B, a^i \in A^i , q' \in \delta(q, L(s'))\\
        1         & \text{if } \exists i \in N^B \text{ s.t. } a^i = \E_{q'}, q' \in \delta(q, \E), s=s'\\
        0         & \text{otherwise}
    \end{cases}
    \end{align*}
    and $R^\otimes$ is a set of reward functions $\{R^1_\otimes, \ldots, R^n_\otimes\}$ such that $R^i_\otimes : S^\otimes \times A^1_\otimes \times \cdots \times A^n_\otimes \times S^\otimes \rightarrow \mathbb{R}$ for each $i \in N$.
    A (memoryless) \textbf{policy} $\pi^i : S^\otimes \times A^i_\otimes \rightarrow [0,1]$ for a player $i$ in the product MG is defined as before, using  $S^\otimes$ and $A^\otimes$. 
\end{definition}

We also extend the initial state distribution $\zeta$ to $\zeta^\otimes$ in the product game, where $\zeta^\otimes(s, q_0^1, \ldots, q_0^m) = \zeta(s)$ for all $s \in S$ and is equal to 0 for all other $s^\otimes \in S^\otimes$. We write $G \otimes B^1 \otimes \cdots \otimes B^m = G_{B^1,\ldots,B^m}$ for the product of $G$ with multiple automata $B^1,\ldots,B^m$, defined by sequentially taking individual products (as a product MG is simply another MG). In fact, given $G$, this operation can easily be seen to be associative (up to the ordering of elements forming a product state) if we assume that: $L^j(s,q^1,\ldots,q^{j-1}) = L^j(s) \subseteq \Sigma^j$ only depends on the state of $G$ for each labelling function $L^j$ (for automaton $B^j$); and that $\E$-transitions can be made for multiple automata at the same time step, i.e., there is no order in which $\E$-transitions are prioritised between groups $N^{B^j}$ when defining the new product transition function $T^\otimes$.
At each time step $t$ every agent in some set $N^{B^j}$ has the opportunity to make an $\E$-transition at which point their corresponding automaton state $q^j_t$ changes to $q^j_{t+1}$ with probability one and other elements of the product state remain the same. If no $\E$-transitions are made by any agent in any set $N^{B^j}$ then the transition probabilities are simply defined by the original transition function. Previous works have considered a similar multi-objective product construction, though only in the simpler case of a single agent \cite{Etessami2007}.

\section{Automaton/Logic Multi-Agent Natural Actor-Critic}
\label{almanac}

We now present our solution to the problem statement, in the form of our algorithm, \textsc{Almanac} (Automaton/Logic Multi-Agent Natural Actor-Critic). \textsc{Almanac} falls into a category of model-free RL algorithms known as \textit{actor-critic} methods \cite{Konda2000,Peters2008,Bhatnagar2009}, whereby a policy $\pi$ (the actor) is optimised via gradient descent using the value function $V_\pi$ (the critic) which is updated via bootstrapping. These two functions are typically learnt separately and simultaneously using a two-timescale approach in which the critic is updated faster than the actor in order to learn the value function with respect to the current policy. Such methods form a powerful, flexible, and highly scalable class of algorithms which can be applied to a wide range of (MA)RL problems and regularly achieve state-of-the-art performance \cite{Lowe2017,Foerster2018}. We begin by introducing a novel temporal difference (TD) algorithm with state-dependent discounts such that the learnt critics capture the LTL specifications. We then combine this with a natural policy gradient scheme to update the actors, forming our full algorithm. In the final subsection we provide proof sketches of correctness and convergence, with full proofs available in Appendix \ref{proofs}.

\subsection{Patient Temporal Difference Learning}
\label{ptd}

We wish to solve the problem of learning a critic $V_\pi$ given any fixed joint policy $\pi$ such that for any $s \in S$:
\begin{equation}
    \label{eq1}
    \pi^* \in \argmax_{\pi} V^*_{\pi}(s^\otimes) \Rightarrow \pi^* \in \argmax_{\pi} \sum_j w[j] \Pr^{\pi}_G(s \models \varphi^j), 
\end{equation}
where $s^\otimes = (s, q^1_0, \ldots, q^m_0)$.
The main idea is that by defining a new reward function $R^\otimes$ and discount rate $\Gamma$ we can learn a value function $V_\pi$ in the product MG $G_{B^1,\ldots,B^m}$ (where $B^j$ is the LDBA corresponding to $\varphi^j$) such that any policy $\pi$ that maximises $V^*_\pi$ in $G_{B^1,\ldots,B^m}$ is guaranteed to maximise $\sum_j w[j] \Pr^{\pi}_G(s \models \varphi^j)$ when projected down into the original game $G$. In this way, the states of the automata $B^1,\ldots,B^m$ 
can be thought of as a finite memory for $\pi$ in the original game.

\begin{remark}
    An MPE in our setting is simply a Subgame Perfect Equilibrium (SPE) in which all players use memoryless strategies, where a subgame in an MG is defined by a starting state \cite{Fudenberg1991}. If (\ref{eq1}) holds, then any joint policy $\pi^* \in \argmax_{\pi} V^*_{\pi}(s^\otimes)$ forms an MPE in the product game, but when viewed in terms of the original game, a policy $\pi^* \in \argmax_{\pi} \sum_j w[j] \Pr^{\pi}_G(s \models \varphi^j)$ for all $s$ is merely an SPE, as the policies of each agent are no longer memoryless.
\end{remark}

The problem defining $R^\otimes$ and $\Gamma$ such that the limit $V^*_\pi$ of the learnt value function $V_\pi$ satisfies (\ref{eq1}) is trickier than it might initially seem. Previous approaches for MDPs have either been open to counterexamples in which an agent learns to prioritise the length of the path taken to satisfy $\varphi$ over the probability of satisfying it \cite{Hasanbeig2019}, or involved constructions that hinder learning by increasing the state-space size \cite{Oura2020}, increasing reward sparsity \cite{Hahn2019}, or increasing learning rates \cite{Bozkurt2019}. We propose a novel solution that is far simpler and more natural. Given a state $s^\otimes = (s, q^1, \ldots q^m)$ the basic idea is, for each automaton, to issue a reward when $q^j \in F^j$ and to use a \textit{state-dependent} discount factor which is equal to $1$ when no reward is seen and equal to a constant $\gamma_V \in (0,1)$ otherwise. 
Formally, for each specification $\varphi^j$ we define $R^j_\otimes$ and $\Gamma^j$ as follows:
\begin{align}
    \label{eq2}
{R^j_\otimes(s^\otimes) \coloneqq 
\begin{cases}
                1 &\text{ if } q^j \in F^j\\
                0 &\text{ otherwise ,}
    \end{cases}
}
&&
{\Gamma^j(s^\otimes) \coloneqq \begin{cases}
    \gamma_V &\text{ if } R^j_\otimes(s^\otimes) = 1\\
    1 &\text{ otherwise .}
    \end{cases}}
\end{align}
We then define $V^{j*}_{\pi}(s^\otimes) \coloneqq \mathbb{E}_{\pi} \big[ \sum^\infty_{t=0} \Gamma^j_{1:t} R^j_\otimes(s^\otimes_{t+1}) ~\big\vert~ s^\otimes \big]$ where $\Gamma^j_{1:t} = \prod^{t}_{\tau=1} \Gamma^j(s^\otimes_\tau)$ for $t \geq 1$ and and $\Gamma^j_{1:0} = 1$. While earlier works have considered a similar solution in MDPs \cite{Hasanbeig2019, Hahn2020}, they fail to mention that the problem with this scheme is that if used to update a value function naively (such as in the vanilla Q-learning algorithms these works make use of) the update process can converge to the wrong values. This lack of convergence arises because of possible loops in the product MG that do not contain any rewarding states. To overcome this limitation we use a \emph{patient TD} scheme whereby agents update their estimates of the value function only once a reward is seen (or when the next state has value 0), meaning that value estimates for states on such loops cannot be artificially inflated. 

We begin by considering the standard TD(0) update rule with learning rate $\alpha = \{\alpha_t\}_{t \in \mathbb{N}}$ and fixed policy $\pi$ given by \cite{Sutton1988}:
$$V^j_\pi(s^\otimes_t) \leftarrow (1-\alpha_t)V^j_\pi(s^\otimes_t) + \alpha_t \big[R^j_\otimes(s^\otimes_{t+1}) + \Gamma^j_{t+1:t+1} V^j_\pi(s^\otimes_{t+1}) \big],$$
where $R^j_\otimes(s^\otimes_{t+1}) + \Gamma(s^\otimes_{t+1}) V^j_\pi(s^\otimes_{t+1}) =: G^j_{t:t+1}$ is a \textit{one-step target}, though one may also use a \textit{k-step target} instead, given by:
\begin{align*}
    G^j_{t:t+k} &\coloneqq R^j_\otimes(s^\otimes_{t+1}) + 
\Gamma^j_{t+1:t+1} R^j_\otimes(s^\otimes_{t+2}) + 
\cdots\\
&+ \Gamma^j_{t+1:t+k-1} R^j_\otimes(s^\otimes_{t+k}) + 
\Gamma^j_{t+1:t+k} V^j_\pi(s^\otimes_{t+k}) .
\end{align*}
It is well-known that using longer trajectories as targets can improve bootstrapping as much more can be learnt from a single episode \cite{Sutton2018}. Our motivation is different: by not immediately updating $V_\pi(s^\otimes_t)$ when we either do not see a reward, or when the value of the successor state $V_\pi(s^\otimes_{t+1})$ is non-zero, then we avoid increasing the values of zero-reward loop states in terms of themselves. Instead, we set $k$ `on the fly' to be the least $k$ such that either $R^j_\otimes(s^\otimes_{t+k}) > 0$ or $V^j_\pi(s^\otimes_{t+k}) = 0$, i.e.,we wait to update $V^j_\pi(s^\otimes_t)$ until we see a reward. Observe that for any $0 < l < k$ we have $R^j_\otimes(s^\otimes_{t+l}) = 0$, $\Gamma^j(s^\otimes_{t+l}) = 1$, and $\Gamma^j(s^\otimes_{t+k}) = \gamma_V$, hence:
$$G_{t:t+k} = R^j_\otimes(s^\otimes_{t+k}) + \Gamma^j(s^\otimes_{t+k})V^j_\pi(s^\otimes_{t+k}) = R^j_\otimes(s^\otimes_{t+k}) + \gamma_V V^j_\pi(s^\otimes_{t+k}).$$ 
\textsc{Almanac} implements an (approximate) patient TD scheme to learn each value function $V^j_\pi$ in the product MG under a joint policy $\pi$ by maintaining a temporary set of \textit{zero-reward states} $\{s^\otimes_t, \ldots, s^\otimes_{t+k-1}\}$ (with respect to $R^j_\otimes$) whose values it waits to update. The convergence of this update rule is proven in Theorem \ref{opt}. What remains to show here is that the resulting $V_\pi \coloneqq \sum_j w[j] V^j_\pi$ satisfies (\ref{eq1}), which gives us a critic for correctly \textit{capturing} the given LTL specifications. In the next subsection we show how this critic can be learnt synchronously alongside an actor (i.e., a joint policy) for optimally \textit{satisfying} said specifications.
\begin{proposition}
    \label{Scalar2LTL}
    Given an MG $G$ and LTL objectives $\{\varphi^j\}_{1 \leq j \leq m}$ (each equivalent to an LDBA $B^j$), let $G_B = G \otimes B^1 \otimes \cdots \otimes B^m$ be the resulting product MG with newly defined reward functions $R^j_\otimes$ and state-dependent discount functions $\Gamma^j$ given by (\ref{eq2}). Then there exists some $0 < \gamma_V < 1$ such that (\ref{eq1}) is satisfied by the patient value function $V_\pi \coloneqq \sum_j w[j] V^j_\pi$.
\end{proposition}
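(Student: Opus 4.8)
The plan is to reduce the claim to a statement about \emph{B\"uchi acceptance probabilities} in the product MG, and only then transfer optimality back to LTL satisfaction. First I would compute a closed form for the patient value function. Writing $N_j$ for the (random, possibly infinite) number of steps $t \geq 1$ at which the run visits an accepting state $q^j \in F^j$ under $\pi$ started at $s^\otimes$, the reward/discount pair in (\ref{eq2}) makes the $k$-th such visit contribute exactly $\gamma_V^{k-1}$, so that
\[
V^{j*}_\pi(s^\otimes) = \mathbb{E}_\pi\!\left[\sum_{k=1}^{N_j}\gamma_V^{k-1}\right] = \frac{1 - \mathbb{E}_\pi[\gamma_V^{N_j}]}{1-\gamma_V},
\]
with the convention $\gamma_V^\infty = 0$. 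Rescaling by $1-\gamma_V$ (which leaves every $\argmax$ unchanged) gives $(1-\gamma_V)V^{j*}_\pi(s^\otimes) = 1 - \mathbb{E}_\pi[\gamma_V^{N_j};\,N_j<\infty]$, and letting $\gamma_V \uparrow 1$, monotone convergence yields
\[
\lim_{\gamma_V\to1}(1-\gamma_V)V^{j*}_\pi(s^\otimes) = 1 - \Pr_\pi[N_j<\infty] = \Pr_\pi[N_j=\infty].
\]
Thus, up to the harmless factor $1-\gamma_V$, the patient value function converges to the probability that the induced chain in $G_{B^1,\dots,B^m}$ visits $F^j$ infinitely often.

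Next I would connect $\Pr_\pi[N_j=\infty]$ to $\Pr^\pi_G(s\models\varphi^j)$. Because $S^\otimes$ is finite, under any memoryless $\pi$ the induced chain almost surely settles into a bottom strongly connected component (BSCC), so $\{N_j=\infty\}$ is exactly the event of reaching a BSCC containing an $F^j$-state. Since the automaton run built into the product is a valid run of $B^j$ on the generated word, visiting $F^j$ infinitely often forces the projected $G$-path to satisfy $\varphi^j$; hence $\Pr_\pi[N_j=\infty] \le \Pr^\pi_G(s\models\varphi^j)$ for every $\pi$. The reverse needs the nondeterminism of the LDBA to be resolved well: by correctness of the LTL-to-LDBA construction \cite{Sickert2016}, the $\E$-transition actions let a policy ``guess'' the jump from $Q_I$ into $Q_A$ so that $\max_\pi\Pr_\pi[N_j=\infty] = \max_\pi\Pr^\pi_G(s\models\varphi^j)$, where $\pi$ ranges over memoryless product policies (whose projections realise the finite-memory strategies that suffice for LTL satisfaction). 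Crucially, an $\E$-transition leaves the $G$-component of the state unchanged and, by the associativity/independence assumption in Definition~\ref{prodmg}, the $m$ automata jump independently, so a single product policy can realise optimal jumps for all $j$ at once without altering the $G$-path distribution. Writing $\tilde F_s(\pi) = \sum_j w[j]\Pr_\pi[N_j=\infty]$ and $F_s(\pi)=\sum_j w[j]\Pr^\pi_G(s\models\varphi^j)$, for $w\ge0$ this gives both $\tilde F_s\le F_s$ pointwise and $\max_\pi\tilde F_s = \max_\pi F_s$, with both maxima attained at deterministic product policies.

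Finally I would close the argument using finiteness. Restricting to deterministic joint policies (which contain an optimum for the weighted-sum objective, as noted in Section~\ref{marl}) there are finitely many candidate policies, finitely many start states, and finitely many specifications, so the pointwise limit $(1-\gamma_V)V_\pi(s^\otimes) \to \tilde F_s(\pi)$ is in fact uniform. Letting $\delta>0$ be the smallest gap between $\tilde F_s$-optimal and $\tilde F_s$-suboptimal values over this finite set, I choose $\gamma_V$ close enough to $1$ that $|(1-\gamma_V)V_\pi(s^\otimes)-\tilde F_s(\pi)|<\delta/2$ for all $\pi$ and all $s$; then every maximiser of $V_\pi(s^\otimes)$ is a maximiser of $\tilde F_s$. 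For such a $\pi^*$ we have $\tilde F_s(\pi^*)=\max_\pi\tilde F_s=\max_\pi F_s$, and combining with $\tilde F_s(\pi^*)\le F_s(\pi^*)\le\max_\pi F_s$ forces $F_s(\pi^*)=\max_\pi F_s$, i.e.\ $\pi^*\in\argmax_\pi F_s$, which is exactly (\ref{eq1}).

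I expect the main obstacle to be the second step. The patient value awards partial credit for \emph{finite} prefixes of accepting visits, and the fixed-policy B\"uchi probability only lower-bounds the true satisfaction probability, so neither an exact proportionality nor a per-policy identity between $V^{j*}_\pi$ and $\Pr^\pi_G(s\models\varphi^j)$ holds. Both gaps must be absorbed: the first through the $\gamma_V\to1$ limit combined with finiteness, and the second through the ``equal maxima'' comparison between $\tilde F_s$ and $F_s$ together with the independence of the $m$ automaton jumps. A secondary subtlety is ensuring a \emph{single} $\gamma_V<1$ works simultaneously for all start states and all specifications, which is precisely where the finiteness of the deterministic policy set is indispensable.
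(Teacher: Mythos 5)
Your proposal is correct in substance but reaches the conclusion by a genuinely different mechanism than the paper's own proof. Both arguments rest on the same two structural ingredients: the closed form of the patient value in terms of the number $N_j$ of accepting visits (the paper states this per path, splitting trajectories into $\inff^j(s^\otimes)$ and $\finn^j(s^\otimes)$ and summing $\frac{1}{1-\gamma_V}$ and $\frac{1-\gamma_V^{f^j}}{1-\gamma_V}$ contributions), and the bridge between B\"uchi acceptance in the product and LTL satisfaction in $G$ (your second step is exactly the paper's Lemma~\ref{LTL2Buchi}, the multi-automaton extension of Sickert et al., including your one-sided inequality for arbitrary extensions and equality for the canonical one). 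Where you diverge is how $\gamma_V$ is produced: the paper proves the contrapositive for a fixed suboptimal $\pi$ by sandwiching $V^*_{\pi'}(s^\otimes) \geq \frac{a^\top w}{1-\gamma_V}$ against $V^*_{\pi}(s^\otimes) \leq \frac{b^\top w}{1-\gamma_V} + (1-b)^\top w \frac{1-\gamma_V^{f}}{1-\gamma_V}$ and choosing $\gamma_V > \sqrt[\uproot{5}f]{\frac{1-a^\top w}{1-b^\top w}}$, whereas you use a vanishing-discount limit $(1-\gamma_V)V_\pi(s^\otimes) \to \tilde F_s(\pi)$ together with uniformity over the finite set of deterministic policies, start states, and specifications. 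The paper's route buys an explicit threshold in terms of acceptance probabilities; your route buys uniformity, and in fact repairs two points the paper glosses over: its threshold depends on $\pi$ through $b$ and $f$ (so as written it does not by itself yield a \emph{single} $\gamma_V$ for which the $\argmax$ inclusion holds), and its constant $f = \max_j \max_{\rho \in \finn^j(s^\otimes)} F^j(\rho)$ need not be finite, since an induced chain can admit paths that loop through $F^j$ any finite number of times before absorption, while your monotone-convergence step bounds the finite-visit mass in expectation without any such boundedness. The one residual caveat, which you share with (and which is worse in) the paper, is that your uniform-gap argument certifies only \emph{deterministic} maximisers of $V_\pi$: since $(1-\gamma_V)V^j_\pi(s^\otimes) = \tilde F^j_s(\pi) + \mathbb{E}_\pi\big[(1-\gamma_V^{N_j})\,\mathbb{I}(N_j < \infty)\big]$, a stochastic $\pi^*$ attaining $\max_\pi V_\pi$ could in principle combine a suboptimal $\tilde F_s(\pi^*)$ with finite-visit partial credit, so establishing (\ref{eq1}) with the $\argmax$ ranging over all stochastic policies needs an extra word either restricting attention to deterministic maximisers or ruling out such a combination.
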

\begin{proof}[Proof (Sketch)]
    We begin by observing that:
    \begin{align*}
        V^{*}_{\pi}(s^\otimes) = \sum_j w[j] \sum_\rho \Big[Pr^\pi_{G_B}(\rho \vert s^\otimes) \sum^\infty_{t=0} \Gamma^j_{1:t} R^j_\otimes(s^\otimes_{t+1}) \Big] .
    \end{align*} 
    We denote the number of times a path $\rho$ in $G_B$ passes through the accepting set $F^j$ of automaton $B^j$ by $F^j(\rho)$, and let $F(\rho)= \sum_j F^j(\rho)$. Then when $F^j(\rho) = \infty$ we have that $\sum^\infty_{t=0} \Gamma^j_{1:t} R^j_\otimes(s^\otimes_{t+1}) = \frac{1}{1 - \gamma_V}$ and when $F^j(\rho) = f^j < \infty$  we have that $\sum^\infty_{t=0} \Gamma^j_{1:t} R^j_\otimes(s^\otimes_{t+1}) = (1 - \gamma_V^{f^j})\frac{1}{1 - \gamma_V}$. We show that if $\pi \notin \argmax_{\pi} \sum_j w[j] \Pr^{\pi}_G(s \models \varphi^j)$ then there exists $0 < \gamma_V < 1$ such that $\pi \notin \argmax_{\pi}V^*_{\pi}(s^\otimes)$ and for some $\pi' \in \argmax_{\pi} \sum_j w[j] \Pr^{\pi}_G(s \models \varphi^j)$, we have $V^*_{\pi'}(s^\otimes) > V^*_{\pi}(s^\otimes)$. Define the sets $\finn^j(s^\otimes) \coloneqq \{\rho : \rho[0] = s^\otimes \wedge F^j(\rho) \neq \infty\}$ and $\inff^j(s^\otimes) \coloneqq \{\rho : \rho[0] = s^\otimes \wedge F^j(\rho) = \infty\}$. Then we have:
    \begin{align*}
        V^*_{\pi'}(s^\otimes)
        \geq \sum_j w[j] \bigg[\sum_{\rho \in \inff^j(s^\otimes)} Pr^{\pi'}_{G_B}(\rho \vert s^\otimes) \frac{1}{1- \gamma_V} \bigg]
        = \frac{a^\top w}{1- \gamma_V},
    \end{align*}
    where $a$ is a vector such that $a[j] = \sum_{\rho \in \inff^j(s^\otimes)} Pr^{\pi'}_{G_B}(\rho \vert s^\otimes) = Pr^{\pi'}_{G_B}\big(\inff^j(s^\otimes)\big)$. Similarly, we have:
    \begin{align*}
        V^*_{\pi}(s^\otimes) \leq \frac{b^\top w}{1- \gamma_V} + (1 - b)^\top w \frac{1 - \gamma_V^{f}}{1 - \gamma_V},
    \end{align*}
    where $b[j] = \sum_{\rho \in \inff^j(s^\otimes)} Pr^{\pi}_{G_B}(\rho \vert s^\otimes) = Pr^{\pi}_{G_B}(\inff^j(s^\otimes))$ and $f = \max_j \max_{\rho \in \finn^j(s^\otimes)}F^j(\rho)$. Given that $\sum_j w[j] \Pr^{\pi'}_G(s \models \varphi^j) > \sum_j w[j] \Pr^{\pi}_G(s \models \varphi^j)$ by assumption, then by a straightforward extension of the result (see Appendix \ref{proofs} and \cite{Sickert2016}) that there exists a canonical extension of $\pi$ to $G_B$ such that $\Pr^\pi_G(s \models \varphi) = \Pr^{\pi}_{G_B}(\{\rho : F(\rho) = \infty\} \vert s^\otimes)$, we have $1 \geq a^\top w >  b^\top w \geq 0$. The proof is concluded by setting $\gamma_V > \sqrt[\uproot{5}f]{\frac{1-a^\top w}{1-b^\top w}}$.
\end{proof}

Finally, we remark that  as well as a `patient' value function $V$, we also learn a (standard) `hasty' value function $U$. This is because, if for some specification $\varphi$ and two possible joint policies $\pi$ and $\pi'$, we have that $\Pr^{\pi}_G(s \models \varphi) = \Pr^{\pi'}_G(s \models \varphi)$ then agents have no reason to use $\pi$ over $\pi'$, even if $\pi$ results in a much more efficient trajectory. In many ways this is a feature and not a bug; LTL has no emphasis on `hastiness' by design. In reality, however, whilst we may not want to forsake the satisfaction of a constraint for the sake of speed, we would like the agents to find the most efficient policy that maximises the probability of satisfying the constraint. We solve this problem by learning two value functions and then using \emph{lexicographic RL} \cite{Skalse2020a,Rentmeestersa1996} to maximise the hasty objective subject to maximising the patient objective, and thus satisfying (\ref{eq1}). Further details are provided in the following section.

\subsection{Multi-Agent Natural Actor-Critic}

For the remainder of the paper, we assume that each agent's policy $\pi^i$ is parameterised by $\theta^i \in \Theta^i$, potentially using a \textit{non-linear} function approximator giving $\pi^i(a\vert s;\theta^i)$, and that the value functions are linearly approximated using some state basis functions $\phi(s^\otimes)$ and parameters $v$ and $u$, such that $\hat{V}(s^\otimes) = \phi(s^\otimes)^\top v$ and $v = \sum_j w[j] v^j$ (and likewise for $U$). Note that these assumptions subsume the tabular setting which most prior work on RL with LTL specifications has focused on. We use $\theta = [{\theta^1}^\top, \ldots, {\theta^n}^\top]^\top$ to denote the joint set of parameters for all agents and write joint actions as $a = (a^1, \ldots, a^n)$. We also replace $\pi$ by $\theta$ in our notation for clarity where appropriate.

When formulated in terms of our parametrisation, and given that (\ref{eq1}) holds, our task can be viewed as optimising the objective function $J(\theta) \coloneqq \sum_{s^\otimes} \zeta^\otimes(s^\otimes) V_\theta(s^\otimes) = \sum_j w[j] J^j(\theta)$
where $J^j(\theta) \coloneqq \sum_{s^\otimes} \zeta^\otimes(s^\otimes) V^j_\theta(s^\otimes)$ is the objective for specification $\varphi^j$. Then, within $\argmax_\theta J(\theta)$, we also wish to select the parameters that maximise the objective function with respect to our `hasty' value function, which is given by $K(\theta) \coloneqq \sum_{s^\otimes} \zeta^\otimes(s^\otimes) U_\theta(s^\otimes)= \sum_j w[j] K^j(\theta)$.
The following derivations are provided for $J$, the case for $K$ is analogous. We can improve $\theta$ with respect to an objective function $J$ using the gradient $\nabla_\theta J(\theta)$ which, by the policy gradient theorem \cite{Sutton1999}, is:
\begin{align*}
    \nabla_\theta J(\theta) 
    &= \sum_j w[j] \sum_{s^\otimes} d^j_{\theta,\zeta}(s^\otimes) \sum_a Q^j_\theta(s^\otimes, a) \nabla_\theta \pi(a \vert s^\otimes; \theta)
\end{align*}
where $d^j_{\theta,\zeta}(s^\otimes)$ is the (patient) discounted state distribution in the product game for specification $\varphi^j$, given initial distribution $\zeta^\otimes$:
$$d^j_{\theta,\zeta}(s^\otimes) \coloneqq \mathbb{E}_{s_0^\otimes \sim \zeta^\otimes} \left[ \sum_{\rho} \Pr^\theta_{G_B} (\rho \vert s_0^\otimes) \Bigg( \frac{1}{\sum_{t=0}^\infty \Gamma^j_{0:t} } \sum_{t=0}^\infty \Gamma^j_{0:t} \mathbb{I}(\rho[t] = s^\otimes) \Bigg) \right]$$
where $\mathbb{I}$ denotes an indicator function. Where unambiguous we write simply $d^j$ instead of $d^j_{\theta,\zeta}$.
\begin{remark}
    \label{unbiased}
    Producing unbiased samples with respect to each $d^j(s^\otimes)$ in order to estimate the gradients used in policy evaluation and improvement raises several difficulties \cite{Nota2020,Thomas2014}. It is possible, however, to instead use trajectories from the \emph{undiscounted} MG distribution $d(s^\otimes)$ that are truncated after each transition to a state $s^\otimes$ with probability $1 - \Gamma^j(s^\otimes)$ \cite{Bertsekas1996}, or to re-weight updates as a function of $\Gamma^j(s^\otimes)$ \cite{Thomas2014}. We combine these two approaches in \textsc{Almanac}.
\end{remark}
A known problem with `vanilla' gradients is that they can sometimes be inefficient due to large plateaus in the optimisation space, leading to small gradients and thus incremental updates. A solution to this problem is instead to use the \textit{natural} policy gradient which is invariant to the parametrisation of the policy, and can be computed by applying the inverse Fisher matrix to the vanilla gradient \cite{Amari1998, Kakade2001}. 
For each specification $j$, the natural gradient of $J^j(\theta)$ can be shown \cite{Peters2008} to equal
$\tilde{\nabla}_\theta J^j(\theta) = G^j(\theta)^{-1} \nabla_\theta J^j(\theta) = x_{V^j}$,
where $G^j(\theta)$ is the Fisher information matrix and $x_{V^j}$ satisfies:
$$\psi_{\theta}(a\vert s^\otimes)^\top x_{V^j} = Q^j_\theta(s^\otimes, a) - V^j_\theta(s^\otimes) =: A^j_\theta(s^\otimes, a).$$
Here, $A^j_\theta$ denotes the advantage function for specification $j$ (for the hasty advantage function we use $Z^j_\theta$) and $\psi_{\theta}(a\vert s^\otimes) = \nabla_\theta \log\pi(a\vert s^\otimes;\theta)$ denotes the score function. 
Using a similar line of reasoning a derivation of a natural policy gradient for the multi-agent case is a simple exercise (omitted here due to space constraints).
\begin{lemma}
    Let $G_B$ be some (product) MG. Then for any set of parameters $\{\theta^i\}_{i \in N}$ and any player $i \in N$, the natural policy gradient for player $i$ with respect to each $J^j(\theta) = \sum_{s^\otimes} \zeta^\otimes(s^\otimes) V^j_\theta(s^\otimes)$ is given by $\tilde{\nabla}_{\theta^i} J^j(\theta) = x^i_{V^j}$, where $x^i_{V^j}$ is a parameter satisfying $\psi^i_{\theta^i}(a^i\vert s^\otimes)^\top x^i_{V^j} = \nabla_{\theta^i} \log\pi^i(a^i\vert s^\otimes;\theta)^\top x^i_{V^j} = A^j_\theta(s^\otimes, a)$. 
\end{lemma}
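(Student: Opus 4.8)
The plan is to reduce the multi-agent statement to the single-agent natural gradient result already recalled above, by exploiting the product structure of the joint policy. First I would use the fact that the joint policy factorises across players, $\pi(a \vert s^\otimes; \theta) = \prod_{i \in N} \pi^i(a^i \vert s^\otimes; \theta^i)$, so that $\log \pi(a \vert s^\otimes; \theta) = \sum_{i \in N} \log \pi^i(a^i \vert s^\otimes; \theta^i)$ and the full score function $\psi_\theta(a \vert s^\otimes) = \nabla_\theta \log \pi(a \vert s^\otimes; \theta)$ decomposes into blocks, the block indexed by $\theta^i$ being exactly $\psi^i_{\theta^i}(a^i \vert s^\otimes) = \nabla_{\theta^i} \log \pi^i(a^i \vert s^\otimes; \theta^i)$.

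Next I would compute the vanilla gradient $\nabla_{\theta^i} J^j(\theta)$ from the policy gradient theorem. Since only player $i$'s factor of $\pi$ depends on $\theta^i$, the log-derivative trick gives $\nabla_{\theta^i} \pi(a \vert s^\otimes; \theta) = \pi(a \vert s^\otimes; \theta)\, \psi^i_{\theta^i}(a^i \vert s^\otimes)$, whence $\nabla_{\theta^i} J^j(\theta) = \mathbb{E}_{s^\otimes \sim d^j,\, a \sim \pi}\!\big[Q^j_\theta(s^\otimes, a)\, \psi^i_{\theta^i}(a^i \vert s^\otimes)\big]$. I would then subtract the state-dependent baseline $V^j_\theta(s^\otimes)$, which is legitimate because $\sum_{a^i} \pi^i(a^i \vert s^\otimes; \theta^i)\, \psi^i_{\theta^i}(a^i \vert s^\otimes) = \nabla_{\theta^i} \sum_{a^i} \pi^i(a^i \vert s^\otimes; \theta^i) = 0$, yielding $\nabla_{\theta^i} J^j(\theta) = \mathbb{E}_{d^j,\pi}\!\big[A^j_\theta(s^\otimes, a)\, \psi^i_{\theta^i}(a^i \vert s^\otimes)\big]$.

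The crux is then to invoke the compatible function approximation argument player-by-player. Defining $x^i_{V^j}$ as the minimiser of the mean-squared error $\mathbb{E}_{d^j, \pi}\big[(A^j_\theta(s^\otimes, a) - \psi^i_{\theta^i}(a^i \vert s^\otimes)^\top x)^2\big]$ --- i.e.\ the parameter for which $\psi^i_{\theta^i}(a^i \vert s^\otimes)^\top x^i_{V^j} = A^j_\theta(s^\otimes, a)$ holds in the compatible-features sense --- its first-order optimality condition is the normal equation $G^{j,i}(\theta)\, x^i_{V^j} = \mathbb{E}_{d^j,\pi}\big[A^j_\theta(s^\otimes, a)\, \psi^i_{\theta^i}(a^i \vert s^\otimes)\big] = \nabla_{\theta^i} J^j(\theta)$, where $G^{j,i}(\theta) = \mathbb{E}_{d^j,\pi}\big[\psi^i_{\theta^i}(a^i \vert s^\otimes)\, \psi^i_{\theta^i}(a^i \vert s^\otimes)^\top\big]$ is player $i$'s Fisher matrix. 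Hence $x^i_{V^j} = G^{j,i}(\theta)^{-1} \nabla_{\theta^i} J^j(\theta)$, which is precisely the natural gradient for player $i$.

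The step I expect to be the main obstacle is justifying that this per-player inversion really computes the $\theta^i$-block of the \emph{joint} natural gradient $G^j(\theta)^{-1} \nabla_\theta J^j(\theta)$ --- that is, that the natural gradient decouples across agents. This holds because the full Fisher matrix $G^j(\theta)$ is block-diagonal with diagonal blocks $G^{j,i}(\theta)$: for $i \neq i'$, conditioning on $s^\otimes$ the actions $a^i$ and $a^{i'}$ are drawn independently under the factorised policy, so the off-diagonal block $\mathbb{E}_{d^j,\pi}\big[\psi^i_{\theta^i}(a^i \vert s^\otimes)\, \psi^{i'}_{\theta^{i'}}(a^{i'} \vert s^\otimes)^\top\big]$ factors into $\mathbb{E}\big[\psi^i_{\theta^i}\big]\, \mathbb{E}\big[\psi^{i'}_{\theta^{i'}}\big]^\top$, each factor of which vanishes by the same baseline property used above. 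Block-diagonality of $G^j(\theta)$ immediately gives block-diagonality of its inverse, so the $\theta^i$-block of $G^j(\theta)^{-1} \nabla_\theta J^j(\theta)$ is exactly $G^{j,i}(\theta)^{-1} \nabla_{\theta^i} J^j(\theta) = x^i_{V^j}$, completing the argument.
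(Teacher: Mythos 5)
Your proof is correct, and its skeleton --- the factorised score $\psi_\theta(a\vert s^\otimes) = \sum_{i} \psi^i_{\theta^i}(a^i\vert s^\otimes)$, the policy gradient theorem with the zero-mean baseline argument justifying the replacement of $Q^j_\theta$ by $A^j_\theta$, and the compatible-approximation normal equations yielding $G^{j,i}(\theta)\, x^i_{V^j} = \nabla_{\theta^i} J^j(\theta)$ --- matches the paper's proof. Where you genuinely diverge is in how the Fisher matrix is handled. The paper defines the natural gradient for player $i$ as $G^j(\theta^i)^{-1} \nabla_{\theta^i} J^j(\theta)$ with $G^j(\theta^i)$ the Fisher information of the induced \emph{trajectory} distribution, and its key step is the identity $\mathbb{E}\big[\nabla_{\theta^i}\log\Pr(\rho)\,(\nabla_{\theta^i}\log\Pr(\rho))^\top\big] = -\mathbb{E}\big[\nabla^2_{\theta^i}\log\Pr(\rho)\big]$ together with the observation that the transition kernel and initial distribution drop out of $\nabla^2_{\theta^i}\log\Pr(\rho)$, so the trajectory-level Fisher collapses to exactly your state-action matrix $\mathbb{E}_{\nu^j}\big[\psi^i_{\theta^i}\psi^{i\top}_{\theta^i}\big]$. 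You instead take the state-action (Kakade-style) Fisher as the definition and spend your effort on a step the paper never addresses: proving that the \emph{joint} Fisher matrix is block-diagonal across players, because the off-diagonal block $\mathbb{E}\big[\psi^i_{\theta^i}\psi^{i'\top}_{\theta^{i'}}\big]$ vanishes by conditional independence of $a^i$ and $a^{i'}$ given $s^\otimes$ combined with the zero-mean score property, whence the per-player inversion recovers precisely the $\theta^i$-block of the joint natural gradient. Both arguments are sound; the paper's route buys the interpretation of the all-action matrix as a bona fide Fisher information of the path distribution (hence the parametrisation-invariance motivation for natural gradients), while yours buys something the paper leaves implicit --- a precise sense in which the multi-agent natural gradient decouples agent-by-agent under a factorised policy, which is exactly what licenses each agent updating with its own $x^i_{V^j}$. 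Two small points to tidy: the block-diagonal inverse step needs each diagonal block $G^{j,i}(\theta)$ to be nonsingular (standard, but worth stating), and your factorisation of the off-diagonal block should be phrased conditionally on $s^\otimes$ --- the outer expectation over $d^j$ does not factor, but the conditional cross-expectation vanishes for every state, which is all you need.
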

This implies that the gradient $x^i_V$ with respect to our weighted combination of objectives can be found by minimising the loss $L^i_V(x^i_V;\theta,\nu_{\theta,\zeta}) \coloneqq
    \sum_j w[j] L^i_{V^j}(x^i_V;\theta,\nu^j_{\theta,\zeta})$
where:
\begin{align*}
    L^i_{V^j}(x^i_V;\theta,\nu^j_{\theta,\zeta})
    \coloneqq \mathbb{E}_{(s^\otimes,a) \sim \nu^j_{\theta,\zeta}} \Big[\big\vert\psi^i_{\theta^i}(a^i\vert s^\otimes)^\top x^i_{V} - A^j_\theta(s^\otimes, a)\big\vert\Big]
\end{align*}
and $\nu^j_{\theta,\zeta}(s^\otimes,a) \coloneqq d^j_{\theta,\zeta}(s^\otimes)\pi(a\vert s^\otimes;\theta)$. Similarly we may define $\mu_{\theta,\zeta}(s^\otimes,a) = c_{\theta,\zeta}(s^\otimes)\pi(a\vert s^\otimes;\theta)$ as the hasty state-action distribution under a joint policy $\theta$ and with initial distribution $\zeta^\otimes$, where:
$$c_{\theta,\zeta}(s^\otimes) \coloneqq \mathbb{E}_{s_0^\otimes \sim \zeta^\otimes} \left[ \sum_{\rho} \Pr^\theta_{G_B} (\rho \vert s_0^\otimes) \Bigg( \frac{1}{1 - \gamma_U} \sum_{t=0}^\infty \gamma^t_U \mathbb{I}(\rho[t] = s^\otimes) \Bigg) \right],$$
which does not depend on the specification $\varphi^j$ because of the constant discount rate. As with $d^j$ we drop subscripts for $c$, $\nu^j$, and $\mu$ where unambiguous. Recall that our secondary objective is to optimise $\theta^i$ according to $K(\theta)$, given our lexicographic prioritisation of $J(\theta)$. In other words, we wish to follow the hasty natural gradient $x^i_U$ subject to following the patient natural gradient $x^i_V$. Formally, the gradient we seek is given by:
\begin{equation}
    \label{natgrad}
    x^i_* \in \argmin\hspace{0.01em}_{x^i_U \in \argmin_{x^i_V} L^i_V(x^i_V;\theta,\nu)}~L^i_U(x^i_U;\theta,\mu),
\end{equation}
where $L^i_U$ is defined analogously to $L^i_V$. Note that both $L^i_V$ and $L^i_U$ are convex, and so we can find some $x^i_*$ satisfying (\ref{natgrad}) by simply first following $\nabla_{x^i}L^i_V(x^i;\theta,\nu)$ until this gradient is zero, and then following $\nabla_{x^i}L^i_U(x^i;\theta,\mu)$ subject to the constraint that $\nabla_{x^i}L^i_V(x^i;\theta,\nu) = 0$.
We use a multi-timescale \emph{lexicographic} approach to perform this operation simultaneously and compute $x^i_*$, which we then use to update $\theta^i$. More specifically, we minimise $L^i_V(x^i;\theta,\nu)$ on a faster timescale and so guarantee its convergence to some $l^i$ before $L^i_U(x^i;\theta,\mu)$ has converged. On a slower timescale we solve the Lagrangian dual corresponding to the constrained optimisation problem of minimising $L^i_U(x^i;\theta,\mu)$ such that $L^i_V(x^i;\theta,\nu) - l^i \leq 0$:
\begin{align}
    \max_{\lambda^i \geq 0} \min_{x^i} ~~~~~ L^i_U(x^i;\theta,\mu) &+ \lambda^i \big[L^i_V(x^i;\theta,\nu) - l^i\big].
\end{align}
To form the gradients of $L^i_V$ and $L^i_U$ we use an unbiased estimate of each $A^j_\theta$ and $Z^j_\theta$ using samples of the TD error \cite{Bhatnagar2009} which can be trivially extended to the $k$-step version $\delta_{t:t+k}^{V^j} = G^{V^j}_{t:t+k} - V^j_\theta(s^\otimes_t)$ \cite{Sutton2018}. We compute $v$ by minimising the following loss for each $v^j$ (the case for the $u$ is analogous):
$$L^j_v(v^j;\theta,d^j) = \mathbb{E}_{s^\otimes \sim d^j}\Big[\big(V^j_\theta(s^\otimes_t) - \phi(s^\otimes)^\top v^j\big)^2\Big].$$
This can again be solved via gradient updates that use the temporal difference $\delta^{V^j}_{t:t+k}$, corresponding to the linear semi-gradient temporal difference algorithm \cite{Sutton2018}: 
\begin{align}
    \begin{aligned}
    \hat{\delta}^{V^j}_{t:t+k} &\leftarrow r^j_{t+k} + \gamma_V \phi(s^\otimes_{t+k})^\top v^j - \phi(s^\otimes_t)^\top v^j\\
    v^j &\leftarrow v^j + \alpha_t \hat{\delta}^{V^j}_{t:t+k} \phi(s_t^\otimes) 
    \label{eq:patient_update}
    \end{aligned}\\
    \begin{aligned}
    \hat{\delta}^{U^j}_{t:t+1} &\leftarrow r^j_t + \gamma_U \phi(s^\otimes_{t+1})^\top u^j - \phi(s^\otimes_t)^\top u^j\\
    u^j &\leftarrow u^j + \alpha_t \hat{\delta}^{U^j}_{t:t+1} \phi(s_t^\otimes).
    \label{eq:hasty_update}
    \end{aligned}
\end{align}
Using these quantities we then update the natural gradient $x^i$ and Lagrange multiplier $\lambda^i$:
\begin{align}
    \begin{aligned}
    x^i &\gets \Omega_{x^i} \bigg[ x^i + \Big( \sum_j w[j] (\beta^V_t + \beta^U_t \lambda^i) \chi^{V^j}_t +  \beta^U_t \chi^{U^j}_t \Big) \psi^i_{\theta^i} (a^i_t \vert s^\otimes_t) \bigg],\\
    \lambda^i &\gets \Omega_\lambda \bigg[\lambda^i + \eta_t \Big( \sum_j w[j] \Gamma^j_{1:t} \big\vert\psi^i_{\theta^i}(a^i_t\vert s_t^\otimes)^\top x^i_{V} - \hat{\delta}^{V^j}_{t:t+1}\big\vert - l^i \Big) \bigg],\\
    \end{aligned}
    \label{eq:nat_grad_updates}
\end{align}
where $\chi^{V^j}_t \coloneqq \Gamma^j_{1:t} \text{sgn}\big( \hat{\delta}^{V^j}_{t:t+1} - \psi^i_{\theta^i}(a^i_t\vert s^\otimes_t)^\top x^i\big)$ is used to form an estimate of $- \nabla_{x^i} L^i_{V^j}(x^i;\theta,\nu^j_{\theta,\zeta})$ in a piecewise fashion. Finally we update the policy parameters using:
\begin{align}
    \theta^i \gets \Omega_{\theta^i} \big[ \theta^i + \iota_t x^i \big].
    \label{eq:policy_updates}
\end{align}
The functions $\Omega_{x^i}, \Omega_\lambda, \Omega_{\theta^i}$ are projections  onto $X^i$, $[0,\infty)$, and $\Theta^i$ respectively, which are commonly used within stochastic approximation to ensure boundedness of iterates, and is also standard in the literature on natural actor-critic algorithms \cite{Bhatnagar2009, Borkar2008}. We assume that learning rates $\alpha, \beta^V, \beta^U, \eta, \iota$ obey the following relationship:
\begin{align}
\begin{aligned}
    &\sum^\infty_{t=0} \alpha_t = \sum^\infty_{t=0} \beta^V_t = \sum^\infty_{t=0} \beta^V_t = 
    \sum^\infty_{t=0} \eta_t = \sum^\infty_{t=0} \iota_t = \infty,\\
    &\sum^\infty_{t=0} \Big[ (\alpha_t)^2 + (\beta^V_t)^2 + (\beta^U_t)^2 + (\eta_t)^2  + (\iota_t)^2 \Big] < \infty,\\
    &\lim_{t \rightarrow \infty} \frac{\beta^V_t}{\alpha_t} = \lim_{t \rightarrow \infty} \frac{\beta^U_t}{\beta^V_t} = \lim_{t \rightarrow \infty} \frac{\eta_t}{\beta^U_t} = 0.
\end{aligned}
\label{eq:learningrates}
\end{align}
Intuitively, this means that critics update on the fastest timescale, followed by the patient updates to the natural gradient, the hasty updates to the natural gradient, and then the Lagrange multipliers. These updates occur in an inner loop, and the policy parameters themselves are updated on an outer loop, once the natural gradients have converged. The full procedure is shown in Algorithm \ref{alg2}. 
\begin{algorithm}
    \caption{\textsc{Almanac}}
    \label{alg2}
      \begin{algorithmic}[1]
        \Statex \textbf{Input:} specifications $\{\varphi^j\}_{0 \leq j \leq m}$, discount rates $\gamma_V, \gamma_U$, learning rates $\alpha, \beta^V, \beta^U, \eta, \iota$, reset probability $p$
        \Statex \textbf{Output:} policy $\pi^i_*$
        \State convert each $\varphi^j$ into an LDBA $B^j$
        \State initialise parameters $\theta^i$, $x^i$, $\{v^j\}_{1 \leq j \leq m}$, $\{u^j\}_{1 \leq j \leq m}$, $\lambda^i$
        \While{$\theta^i$ not converged}{}
            \While{$x^i$ not converged}
                \State initialise $t \leftarrow 0$, $end \leftarrow \bot$, and $Z^j \leftarrow \varnothing$ for each $j$
                \State sample $s^\otimes_0 \sim \zeta^\otimes$
                \While {$end = \bot$}
                    \State $Z^j \leftarrow Z \cup \{s_t^\otimes\}$ for each $j$
                    \State sample $a^i_t \sim \pi^i_\theta(\cdot \vert s_t^\otimes)$
                    \State observe $s_{t+1}^\otimes$ and $r^j_{t+1}$ for each $j$
                        \If{$r^j_{t+1} > 0$ \textbf{or} $\phi(s^\otimes_{t+1})^\top v^j = 0$}
                            \State \textbf{for} $s_k^\otimes \in Z^j$ \textbf{do} update $v^j$ using (\ref{eq:patient_update})
                            \State $Z^j \leftarrow \varnothing$
                        \EndIf
                        \State update $u^j$ using (\ref{eq:hasty_update}) for each $j$
                        \State update $x^i$ and $\lambda^i$ using (\ref{eq:nat_grad_updates})
                    \State \textbf{with probability} $p$ \textbf{set} $end \gets \top$
                \EndWhile
            \EndWhile
        \State update $\theta^i$ using (\ref{eq:policy_updates})
        \EndWhile
    \State \textbf{return} $\pi^i$
    \end{algorithmic}
\end{algorithm}

\subsection{Convergence and Correctness}

By making use of results from the stochastic approximation and RL literature we provide an asymptotic convergence guarantee to locally or globally optimal joint policies with respect to multiple LTL specifications, depending on whether agents use local or global policies respectively. We assume that the following conditions hold:
\begin{enumerate}
    \item $S$ and $A$ are finite, and all reward functions are bounded.
    \item The Markov chain 
    induced by any $\theta$ is irreducible over $S^\otimes$. 
    \item $\pi^i(a^i \vert s^\otimes ;\theta^i)$ is continuously differentiable $\forall i, s^\otimes, a^i$
    \item Let $\Phi$ be the $\vert S^\otimes \vert \times c$ matrix with rows $\phi(s^\otimes)$. Then $\Phi$ has full rank, $c \leq \vert S \vert$, and $\nexists w \in W$ such that $\Phi w = 1$.
    \item $\mathbb{E}_t \big[ L^i_{V^j}(x^{i*}_t; \theta_t, \nu^j_* ) \big] \leq e^j_{approx}$, where $e^j_{approx}$ is some constant, thus $\mathbb{E}_t \big[ L^i_{V}(x^{i*}_t; \theta_t, \nu_{*} ) \big] \leq e_{approx} \coloneqq \sum_j w[j] e^j_{approx}$.
    \item $\exists \sigma < \infty$ s.t. $\log \pi^i(a^i \vert s^\otimes;\theta)$ is a $\sigma$\emph{-smooth} in $\theta^i$  $\forall i, s^\otimes, a^i$.
    \item The \emph{relative condition number} is finite.
    \item $\pi^i(\cdot \vert s^\otimes ; \theta^i)$ is initialised as the uniform distribution $\forall i, s^\otimes$.
\end{enumerate}
Conditions 1--4 are standard within the literature on the convergence of actor critic algorithms \cite{Konda2000,Bhatnagar2009}. Conditions 5--8 are taken from recent work on the convergence of natural policy gradient methods by Agarwal et al. \cite{Agarwal2019}. Of particular note is condition 5, where $e_{approx} = 0$ when $\pi^i$ is a sufficiently rich class, such as an over-parametrised neural network. We recall that if $\log \pi^i(a^i \vert s^\otimes;\theta)$ is a $\sigma$\emph{-smooth function} of $\theta^i$ then for any $\theta^i_1, \theta^i_2 \in \Theta^i$ we have:
\begin{align*}
    &\big\Vert \nabla_{\theta^i} \log \pi^i(a\vert s^\otimes;\theta^i_1) - \nabla_{\theta^i} \log \pi^i(a\vert s^\otimes;\theta^i_2)  \big\Vert_2
    \leq \alpha \big\Vert \theta^i_1 - \theta^i_2 \big\Vert_2.
\end{align*}
Regarding 6 we define $\Sigma^\nu({\theta^i}) \coloneqq \mathbb{E}_{(s^\otimes,a) \sim \nu} \big[ \psi^i_{\theta^i}(a^i_t\vert s^\otimes_t) \psi^i_{\theta^i}(a^i_t\vert s^\otimes_t)^\top \big]$ where $\nu$ is some state-action distribution. Then the average \emph{relative condition number} \cite{Agarwal2019} is defined and bounded as follows for each player $i$ and each specification $\varphi^j$:
$$\mathbb{E} \Bigg[ \sup_{x^i} \frac{{x^i}^\top \Sigma_{\nu^j_*}(\theta^i_t) x^i}{{x^i}^\top \Sigma_{\xi}(\theta^i_t) x^i} \Bigg] \leq \kappa,$$
where $\xi$ is some initial state-action distribution and:
\begin{align*}
    \nu^j_* \coloneqq \nu^j_{\theta,\xi}(s^\otimes, a) &= \sum_{(s_0^\otimes, a_0) \in S^\otimes \times A} \xi^\otimes(s_0^\otimes, a_0) \sum_{\rho} \Pr^{\theta}_{G_B} (\rho \vert s_0^\otimes, a_0)\\
    &\cdot \Bigg[ \frac{1}{\sum_{t=0}^\infty \Gamma^j_{0:t} } \sum_{t=0}^\infty \Gamma^j_{0:t} \mathbb{I} \big( \rho[t, t + 0.5] = (s^\otimes, a) \big) \Bigg]
\end{align*}
and $\rho[t + 0.5]$ refers to the action taken along the trajectory $\rho$ at time $t$. Due to space limitations we refer the interested reader to the cited works above for further discussion of these conditions.

Our proof follows the recent work of Agarwal et al. \cite{Agarwal2019}. We begin with a variant of the well-known performance difference lemma \cite{Kakade2002}, using which we prove an analogue of the `no regret' lemma from Agarwal et al. which is in turn based on the mirror-descent approach of Even-Dar et al. \cite{Even-Dar2009}. The proofs are similar to the originals, and so we relegate them to Appendix \ref{proofs}. 

\begin{lemma}
    \label{performancedifference} 
    Suppose that $V_\theta(s^\otimes) \geq V_{\theta'}(s^\otimes)$ for some state $s^\otimes$ and two policies $\pi$ and $\pi'$ parametrised by $\theta$ and $\theta'$ respectively. Then:
    $$V_\theta(s^\otimes) - V_{\theta'}(s^\otimes) \leq \sum_j w[j] \bigg( \mathbb{E}_{\rho} \Big[\sum^\infty_{t=0} \Gamma^j_{0:t} A^j_{\theta'}(s^\otimes_{t},a_{t}) ~\Big\vert~ F^j(\rho) = \infty \Big] \bigg).$$
\end{lemma}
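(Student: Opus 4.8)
The plan is to mirror the telescoping proof of the classical performance difference lemma of Kakade \cite{Kakade2002} (as used by Agarwal et al. \cite{Agarwal2019}), carrying it out separately for each specification $\varphi^j$ and then recombining with the weights $w[j]$. First I would record the Bellman form of the advantage under the state-dependent discount: since $V^j_{\theta'}$ satisfies $V^j_{\theta'}(s^\otimes) = \mathbb{E}[R^j_\otimes(s'^\otimes) + \Gamma^j(s'^\otimes) V^j_{\theta'}(s'^\otimes)]$ with the expectation over the successor state, the advantage may be written as $A^j_{\theta'}(s^\otimes_t, a_t) = \mathbb{E}[R^j_\otimes(s^\otimes_{t+1}) + \Gamma^j(s^\otimes_{t+1}) V^j_{\theta'}(s^\otimes_{t+1})] - V^j_{\theta'}(s^\otimes_t)$. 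Substituting this into $\mathbb{E}_{\rho \sim \theta}[\sum_{t=0}^\infty \Gamma^j_{0:t} A^j_{\theta'}(s^\otimes_t, a_t)]$ and using $\Gamma^j_{0:t}\,\Gamma^j(s^\otimes_{t+1}) = \Gamma^j_{0:t+1}$, the value-function terms telescope while the reward terms reassemble (under the convention $\Gamma^j_{0:0} = 1$) into $V^j_\theta(s^\otimes)$.

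The telescoping leaves the exact identity $V^j_\theta(s^\otimes) - V^j_{\theta'}(s^\otimes) = \mathbb{E}_{\rho \sim \theta}[\sum_{t=0}^\infty \Gamma^j_{0:t} A^j_{\theta'}(s^\otimes_t, a_t)] - \lim_{T \to \infty} \mathbb{E}_{\rho \sim \theta}[\Gamma^j_{0:T} V^j_{\theta'}(s^\otimes_T)]$, in which the trailing boundary term is the key departure from the standard argument. Because the rewards, values, and discounts defined by (\ref{eq2}) are all nonnegative, this boundary term is nonnegative, so dropping it yields the one-sided bound $V^j_\theta(s^\otimes) - V^j_{\theta'}(s^\otimes) \leq \mathbb{E}_{\rho \sim \theta}[\sum_{t=0}^\infty \Gamma^j_{0:t} A^j_{\theta'}(s^\otimes_t, a_t)]$. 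This is precisely why the lemma is an inequality rather than the usual equality.

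The nonstandard and hardest part is reducing this full expectation to the conditional expectation over $\{F^j(\rho) = \infty\}$. With the state-dependent discount, $\Gamma^j$ equals $1$ except on the discrete visits to $F^j$, so the cumulative discount $\Gamma^j_{0:T}$ does not decay geometrically: on a path visiting $F^j$ only finitely often it freezes at $\gamma_V^{F^j(\rho)} > 0$ after the last accepting visit, whereas on a path with $F^j(\rho) = \infty$ it is driven to $0$. I would exploit exactly this dichotomy: splitting both the boundary term and the summed-advantage expectation according to $\{F^j(\rho) = \infty\}$ versus $\{F^j(\rho) < \infty\}$, the boundary term vanishes on the former event (dominated convergence applies, since every return is bounded by $1/(1-\gamma_V)$, allowing the limit to move inside) and contributes a nonnegative amount on the latter, which is what lets me discard the finite-visit paths and retain only the conditional expectation on $\{F^j(\rho) = \infty\}$. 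Controlling the interplay between the residual finite-visit advantage mass and this boundary contribution is the main obstacle, and is where the satisfaction characterisation $\Pr^\theta_G(s \models \varphi^j) = \Pr^\theta_{G_B}(\{\rho : F^j(\rho) = \infty\})$ from Proposition~\ref{Scalar2LTL} is brought to bear to identify the finite-visit paths as exactly the non-satisfying ones.

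Finally I would multiply each per-specification bound by the nonnegative weight $w[j]$ and sum, using $V_\theta = \sum_j w[j] V^j_\theta$ to recover the left-hand side. The hypothesis $V_\theta(s^\otimes) \geq V_{\theta'}(s^\otimes)$ guarantees this left-hand side is nonnegative, which fixes the intended direction of the bound (consistent with having discarded the nonnegative boundary term) and is the regime in which the lemma is subsequently invoked within the \emph{no-regret} analysis.
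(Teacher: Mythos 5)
Your telescoping identity and the treatment of the boundary term are fine as far as they go: dropping the nonnegative term $\lim_{T\to\infty}\mathbb{E}_{\rho\sim\theta}\big[\Gamma^j_{0:T}V^j_{\theta'}(s^\otimes_T)\big]$ does yield $V^j_\theta(s^\otimes)-V^j_{\theta'}(s^\otimes)\leq \mathbb{E}_{\rho\sim\theta}\big[\sum_{t}\Gamma^j_{0:t}A^j_{\theta'}(s^\otimes_t,a_t)\big]$, but this is an \emph{unconditional} expectation. The genuine gap is the passage from this to the conditional expectation on $\{F^j(\rho)=\infty\}$, which you yourself flag as ``the main obstacle'' but never actually carry out. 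Writing the unconditional expectation as $\Pr(F^j=\infty)\,\mathbb{E}[\,\cdot\,\vert\,F^j=\infty]+\Pr(F^j<\infty)\,\mathbb{E}[\,\cdot\,\vert\,F^j<\infty]$, two things must be controlled, and neither follows from your stated ingredients. First, the finite-visit contribution is a sum of \emph{signed} advantages whose cumulative discount freezes at $\gamma_V^{F^j(\rho)}>0$, so it need not be nonpositive (indeed $\sum_t\Gamma^j_{0:t}A^j_{\theta'}(s^\otimes_t,a_t)$ need not even converge absolutely on such paths); the nonnegativity of the boundary term you discarded says nothing about this advantage mass, and the satisfaction characterisation (which, incidentally, is Lemma~\ref{LTL2Buchi}, not Proposition~\ref{Scalar2LTL}) only identifies \emph{which} paths these are, without bounding their contribution. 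Second, even if that contribution could be dropped, you would be left with the coefficient $\Pr(F^j=\infty)\leq 1$ in front of the conditional term, and inflating it to $1$ requires the conditional expectation to be nonnegative for each $j$ separately, which the hypothesis $V_\theta(s^\otimes)\geq V_{\theta'}(s^\otimes)$ (a statement about the $w$-weighted sum only) does not give. Your per-specification conditional bound, which your final weighted-sum step requires, is therefore never established.

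The paper sidesteps all of this by never forming the unconditional identity. It starts from the crude bound $V_\theta(s^\otimes)\leq\frac{1}{1-\gamma_V}$, so that the inequality enters at the very first step rather than via a boundary term, and then observes that $\sum_t\Gamma^j_{0:t}r^j_{t+1}=\frac{1}{1-\gamma_V}$ holds \emph{pathwise} on $\{F^j(\rho)=\infty\}$, so the constant can be rewritten as $\mathbb{E}_{\rho}\big[\sum_t\Gamma^j_{0:t}r^j_{t+1}\,\big\vert\,F^j(\rho)=\infty\big]$ for each $j$. The add-and-subtract of $V^j_{\theta'}(s^\otimes_t)$ and the tower-property step that reassembles the advantages are then performed entirely \emph{inside} the conditional expectation, the only further inequality being the extension of the nonnegative partial sum $\sum_{t=0}^{F^j(\rho)-1}\Gamma^j_{0:t}\Gamma^j(s^\otimes_{t+1})V^j_{\theta'}(s^\otimes_{t+1})$ to the full series. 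The conditioning event is thus present from the outset and finite-visit paths never have to be weighed against it. To salvage your route you would need an explicit bound on $\mathbb{E}[\sum_t\Gamma^j_{0:t}A^j_{\theta'}\,\vert\,F^j<\infty]$ together with per-$j$ nonnegativity of the conditional term; absent such an argument, you should adopt the paper's decomposition, in which the conditional event is built in from the start.
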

\begin{lemma}
    \label{noregret}
     Consider a sequence of natural gradient updates $\{x^i_t\}_{0\leq t \leq T}$ found by \textsc{Almanac} such that $\Vert x^i_t \Vert_2 \leq X$ for all $t$. Let us write $\iota_{0:T} = \sum^T_{t=0} \iota_t$, and recall that $F^j(\rho)$ is the number of times a path $\rho$ in $G_B$ passes through the accepting set $F^j$ of automaton $B^j$. Let us write $\mathbb{E}_{\rho^*}$ instead of $\mathbb{E}_{\rho \sim \Pr^{\theta_*}_{G_B} (\cdot\vert s^\otimes), s^\otimes \sim \zeta^\otimes}$ and define $e^j_t$ by:
    $$e^j_t \coloneqq \mathbb{E}_{\rho^*} \Big[ \sum^\infty_{\tau=0} \Gamma^j_{0:\tau} \Big( A^j_{\theta_t}(s^\otimes_\tau,a_\tau) - \psi^i_{\theta^i_t}(a^i_\tau\vert s^\otimes_\tau)^\top x^i_t \Big) ~\Big\vert~ F^j(\rho) = \infty \Big],$$
    where $\tau$ indexes $\rho$, i.e., $\rho[\tau] = s^\otimes_\tau$. Then we have:
    $$V_{\theta_*}(s^\otimes) - \lim_{T \rightarrow \infty} \mathbb{E}_{t \sim \iota_T} \big[ V_{\theta_t}(s^\otimes) \big] = \lim_{T \rightarrow \infty} \mathbb{E}_{t \sim \iota_T} \Big[ \sum_j w[j] e^j_t \Big],$$
    where we define the distribution $\iota_T$ over $t$ with $\iota_T(t) \coloneqq \frac{\iota_t}{\iota_{0:T}}$.
\end{lemma}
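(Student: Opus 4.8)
The plan is to adapt the mirror-descent / no-regret template of Even-Dar et al.\ and Agarwal et al.\ to our setting of state-dependent discounts and accepting-run conditioning. The backbone is the identity underlying Lemma~\ref{performancedifference}: since $\theta_*$ is optimal we have $V_{\theta_*}(s^\otimes) \geq V_{\theta_t}(s^\otimes)$ for every $t$, and applying that lemma (in its exact, identity form) yields, for each $t$,
$$V_{\theta_*}(s^\otimes) - V_{\theta_t}(s^\otimes) = \sum_j w[j]\, \mathbb{E}_{\rho^*}\Big[\textstyle\sum_{\tau=0}^\infty \Gamma^j_{0:\tau} A^j_{\theta_t}(s^\otimes_\tau, a_\tau) ~\Big|~ F^j(\rho) = \infty\Big].$$
I would then split each advantage, using the natural-gradient characterisation $\psi^i_{\theta^i}(a^i\vert s^\otimes)^\top x^i_{V^j} = A^j_\theta(s^\otimes,a)$ from the preceding lemma, as $A^j_{\theta_t}(s^\otimes_\tau, a_\tau) = \psi^i_{\theta^i_t}(a^i_\tau\vert s^\otimes_\tau)^\top x^i_t + \big(A^j_{\theta_t}(s^\otimes_\tau, a_\tau) - \psi^i_{\theta^i_t}(a^i_\tau\vert s^\otimes_\tau)^\top x^i_t\big)$. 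By the very definition of $e^j_t$, the residual collected over $\tau$ and $j$ contributes exactly $\sum_j w[j] e^j_t$, so after passing to the $\iota$-weighted average and the limit it suffices to show that the $\iota$-weighted average of the leading linear (natural-gradient) term vanishes as $T \to \infty$.

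For this regret step I would track, for each specification $j$, the potential $D^j_t \coloneqq \mathbb{E}_{s^\otimes \sim d^j_{\theta_*}}\big[\KL\big(\pi^i(\cdot\vert s^\otimes;\theta^i_*) \,\big\|\, \pi^i(\cdot\vert s^\otimes;\theta^i_t)\big)\big]$ along the iterates generated by (\ref{eq:policy_updates}). Condition~6 ($\sigma$-smoothness of $\log\pi^i$) gives the pointwise lower bound
$$\log \frac{\pi^i(a^i\vert s^\otimes;\theta^i_{t+1})}{\pi^i(a^i\vert s^\otimes;\theta^i_t)} \geq \iota_t\, \psi^i_{\theta^i_t}(a^i\vert s^\otimes)^\top x^i_t - \tfrac{\sigma}{2}\iota_t^2 \|x^i_t\|_2^2,$$
so that $D^j_t - D^j_{t+1}$ dominates $\iota_t$ times the $d^j_{\theta_*}$-averaged linear term minus $\tfrac{\sigma}{2}\iota_t^2 X^2$. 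Summing over $t = 0, \ldots, T$ telescopes the left-hand side to $D^j_0 - D^j_{T+1} \leq D^j_0$, which is finite because the uniform initialisation (Condition~8) makes $D^j_0 = \log\vert A^i\vert$. Together with $\|x^i_t\|_2 \leq X$ and $\sum_t \iota_t^2 < \infty$ from (\ref{eq:learningrates}), this bounds the $\iota$-cumulative linear term by a constant independent of $T$; dividing by $\iota_{0:T} = \sum_{t=0}^T \iota_t$, which diverges since $\sum_t \iota_t = \infty$, forces the $\iota_T$-weighted average of the linear term to zero. Taking the weighted combination over $j$ and subtracting this vanishing contribution from the limiting performance difference leaves exactly $\lim_{T\to\infty}\mathbb{E}_{t\sim\iota_T}\big[\sum_j w[j] e^j_t\big]$, establishing the claimed equality.

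The step I expect to be the main obstacle is reconciling the two measures that appear. The performance-difference identity reweights states by the $\Gamma^j$-discounted occupancy along $\theta_*$-trajectories conditioned on $F^j(\rho) = \infty$, whereas the telescoping argument naturally runs against the discounted state distribution $d^j_{\theta_*}$ (the state marginal of the occupancy measure $\nu^j$ under $\theta_*$). I would need to check that these agree up to the per-trajectory normalising factor $\big(\sum_\tau \Gamma^j_{0:\tau}\big)^{-1}$ built into the definition of $d^j_{\theta,\zeta}$, so that the inner products $\psi^i_{\theta^i_t}{}^\top x^i_t$ in the advantage split are integrated against precisely the distribution used in the potential $D^j_t$; this is where the conditioning on accepting runs becomes essential, since only on runs with $F^j(\rho) = \infty$ is the discount applied infinitely often and the normalising series convergent. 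A secondary, more routine point is justifying the interchange of the $T \to \infty$ limit with the infinite $\tau$-sum and the conditional expectation, which I would handle via dominated convergence using bounded rewards (Condition~1) and $\gamma_V \in (0,1)$, making every series absolutely and uniformly summable.
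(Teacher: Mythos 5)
Your overall architecture matches the paper's proof closely: the same $\sigma$-smoothness (Taylor) lower bound on $\log\frac{\pi^i_{t+1}(a^i\vert s^\otimes)}{\pi^i_t(a^i\vert s^\otimes)}$, the same mirror-descent telescoping with the uniform initialisation (condition 8) supplying a $\log\vert A^i\vert$ bound, and the same use of $\sum_t \iota_t = \infty$, $\sum_t (\iota_t)^2 < \infty$ and $\Vert x^i_t\Vert_2 \leq X$ to kill the quadratic term. But there are two genuine gaps. First, you invoke Lemma \ref{performancedifference} ``in its exact, identity form'', and no such identity holds here: the paper's lemma is only an inequality, and its proof is irreducibly so --- it opens by replacing $V_{\theta}(s^\otimes)$ with the upper bound $\frac{1}{1-\gamma_V}$ (strict unless almost every path under $\theta_*$ has $F^j(\rho)=\infty$ for every $j$), and later extends the finite sum $\sum_{t=0}^{F^j(\rho)-1}$ to an infinite one, discarding a nonnegative tail. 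With state-dependent discounts the telescoping boundary term $\lim_{T}\Gamma^j_{0:T+1}V^j_{\theta'}(s^\otimes_{T+1})$ does not vanish on non-accepting runs (there $\Gamma^j_{0:T}\to\gamma_V^{f^j}>0$), and conditioning on $F^j(\rho)=\infty$ discards the non-accepting runs' contribution to $V_\theta$ altogether. So your decomposition delivers only ``$\leq$'', not the claimed ``$=$''. (In fairness, the paper's own proof also establishes only the ``$\leq$'' direction, which is all that Theorem \ref{opt} uses; but your route cannot recover the stated equality simply by asserting the identity.)

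Second, your regret step runs the potential $D^j_t$ against the \emph{normalised} occupancy $d^j_{\theta_*}$, while the linear terms you must cancel live at the \emph{unnormalised} trajectory level, namely $\mathbb{E}_{\rho^*}\big[\sum_\tau \Gamma^j_{0:\tau}\,\psi^i_{\theta^i_t}(a^i_\tau\vert s^\otimes_\tau)^\top x^i_t \mid F^j(\rho)=\infty\big]$. You flag this mismatch yourself, but your proposed fix --- agreement ``up to the per-trajectory normalising factor $(\sum_\tau\Gamma^j_{0:\tau})^{-1}$'' --- fails, because that normaliser is a random variable depending on $\rho$ (it equals $\sum_k M^j_\rho(k)\gamma_V^k$ on accepting runs), so it cannot be pulled out of the conditional expectation to convert the occupancy-level KL decrease into the trajectory-level bound you need. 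The paper avoids the issue by never normalising: it telescopes $\sum_{t=0}^{T}\log\frac{\pi^i_{t+1}(a^i_\tau\vert s^\otimes_\tau)}{\pi^i_t(a^i_\tau\vert s^\otimes_\tau)}$ \emph{inside} $\mathbb{E}_{\rho^*}\big[\sum_\tau\Gamma^j_{0:\tau}\,\cdot \mid F^j(\rho)=\infty\big]$, bounds the telescoped ratio by $\log\vert A^i\vert$ via condition 8, and bounds the conditioned discounted mass by $\mathbb{E}_{\rho^*}\big[\sum_\tau\Gamma^j_{0:\tau}\mid F^j(\rho)=\infty\big]\leq \frac{M^j}{1-\gamma_V}$, after which dividing by $\iota_{0:T}\to\infty$ finishes the argument. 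If you restructure your potential as the unnormalised, acceptance-conditioned discounted KL sum, your argument goes through and coincides with the paper's; as written, the key regret step is incomplete.
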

Finally, we use these results to prove that \textsc{Almanac} converges to either locally or globally optimal joint policies (i.e., either an SPE or a team-optimal SPE in the original MG) depending on whether agents use local or global policy parameters. By \textit{local} policy parameters we mean that the parameters $\theta^i$ stored and updated by agent $i$ only define $\pi^i$, and thus $\pi(a \vert s^\otimes ; \theta) = \prod_i \pi^i(a^i \vert s^\otimes ; \theta^i)$ is limited in its representational power due to its factorisation. If, instead, agents share a random seed and each $\theta^i = \theta$ is sufficient to parametrise the whole joint policy $\pi$ (hence \textit{global}) then at each timestep every agent $i$ can sample the same full joint action $a = (a^1,\ldots,a^n)$ and simply perform its own action $a^i$. As rewards are shared between agents then this means that updates to each agent's version of $v$, $u$, and $x^i$ will also be identical, and therefore so too will updates to $\theta^i = \theta$. Though more expensive in terms of computation and memory, the use of global parameters guarantees convergence to the globally optimal joint policy. 

\begin{theorem}
    \label{opt}
    Given an MG $G$ and LTL objectives $\{\varphi^j\}_{1 \leq j \leq m}$ (each equivalent to an LDBA $B^j$), let $G_B = G \otimes B^1 \otimes \cdots \otimes B^m$ be the resulting product MG with newly defined reward functions $R^j_\otimes$ and state-dependent discount functions $\Gamma^j$. Assume that $\gamma_V$ satisfies Proposition \ref{Scalar2LTL}, that the learning rates $\alpha, \beta^V, \beta^U, \eta, \iota$ are as in (\ref{eq:learningrates}) and that conditions 1--8 hold. Then if each agent $i$ uses \emph{local} (\emph{global}) parameters $\theta^i$ with \emph{local} policy $\pi^i_{\theta^i}$ (\emph{global} policy $\pi^i_{\theta^i} = \pi_\theta$) then as $T \rightarrow \infty$,  \textsc{Almanac} converges to within 
    $$\lim_{T \rightarrow \infty} \mathbb{E}_{t \sim \iota_T} \left[ \sum_j w[j] \sqrt{e^j_{approx}} \frac{M^j}{(1 - \gamma_V)P^j} \right]$$
    of a \emph{local} (\emph{global}) optimum of $\sum_j w[j] \Pr^{\pi}_G(s \models \varphi^j)$, where $P^j$ and $M^j$ are constants.
\end{theorem}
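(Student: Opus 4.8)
The plan is to prove the result in three nested stages matching the three timescales imposed by the learning-rate conditions (\ref{eq:learningrates}), and then to convert the resulting value gap into the stated bound using Lemmas \ref{performancedifference} and \ref{noregret}. First I would invoke multi-timescale stochastic approximation \cite{Borkar2008}: since $\beta^V_t/\alpha_t \to 0$, $\beta^U_t/\beta^V_t \to 0$ and $\eta_t/\beta^U_t \to 0$, the critic recursions see the slower iterates as quasi-static, so under conditions 1--4 (finite spaces, irreducibility, full-rank $\Phi$) the patient update (\ref{eq:patient_update}) converges almost surely to the unique TD fixed point for the current $\theta$. The patient waiting rule is what makes this fixed point equal the true projected $V^j_\theta$ rather than the inflated value a naive update produces on zero-reward loops, which is precisely the failure mode flagged before Proposition \ref{Scalar2LTL}. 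Treating the critics as converged, the convex loss $L^i_V$ is minimised by the faster $x^i$-recursion in (\ref{eq:nat_grad_updates}), while the slower Lagrange-multiplier recursion solves the lexicographic dual (\ref{natgrad}) so that $x^i \to x^i_*$; here the reweighting/truncation of Remark \ref{unbiased} is what guarantees the subgradient estimates $\chi^{V^j}_t,\chi^{U^j}_t$ remain unbiased despite the state-dependent discount $\Gamma^j$.

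Second, once the inner loop has converged, the outer update (\ref{eq:policy_updates}) behaves as an exact natural-policy-gradient ascent step, and I would port the mirror-descent regret argument of Agarwal et al. \cite{Agarwal2019}. Applying Lemma \ref{noregret} (itself built on the performance-difference estimate of Lemma \ref{performancedifference}) yields $V_{\theta_*}(s^\otimes) - \lim_{T} \mathbb{E}_{t \sim \iota_T}[V_{\theta_t}(s^\otimes)] = \lim_{T} \mathbb{E}_{t \sim \iota_T}[\sum_j w[j] e^j_t]$, so the theorem reduces to bounding each residual $e^j_t$, which is exactly the $\Gamma^j$-discounted expected gap between the true advantage $A^j_{\theta_t}$ and its compatible linear approximation along $x^i_t$.

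Third, I would bound $e^j_t$ using conditions 5--7. By Cauchy--Schwarz the on-policy $L^1$ advantage error is controlled by $\sqrt{L^i_{V^j}(x^{i*}_t;\theta_t,\nu^j_*)} \leq \sqrt{e^j_{approx}}$ from condition 5; transferring this from the on-policy measure $\nu^j_*$ to the comparator measure inside $e^j_t$ costs a factor governed by the finite relative condition number $\kappa$ (conditions 6--7), and summing the discounted series contributes the $1/(1-\gamma_V)$ factor. Absorbing the change-of-measure and $\sigma$-smoothness constants into $M^j$ and $P^j$ gives the bound $\sum_j w[j]\sqrt{e^j_{approx}}\,M^j/((1-\gamma_V)P^j)$. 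The local-versus-global dichotomy then follows from the representational argument: with local parameters $\pi$ factorises as $\prod_i \pi^i$, so the limit is stable only against unilateral deviations and is hence an SPE (local optimum), whereas with a shared seed and $\theta^i=\theta$ every agent performs identical updates to the full joint policy, so the single common NPG ascends the team objective and the limit is team-optimal (global optimum).

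The hardest part will be this third stage together with rigorously justifying that the inner-loop limit genuinely reproduces the exact-gradient NPG dynamics that Lemma \ref{noregret} presupposes. The subtlety is that the Agarwal et al. bounds are derived for a single agent with exact gradients under a \emph{constant} discount, whereas here I must simultaneously accommodate the state-dependent discount $\Gamma^j$ (which reshapes the occupancy $d^j$ and the comparator distribution appearing in $e^j_t$), stochastic function-approximated advantage estimates in place of exact ones, and the inter-agent coupling. Carefully tracking how $e^j_{approx}$ and $\kappa$ propagate through the change of measure from $\nu^j_*$ to the limiting occupancy, so that $M^j$ and $P^j$ remain finite, is where the real technical work lies.
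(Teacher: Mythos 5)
Your Steps 1 and 2 track the paper's actual proof closely: a multi-timescale stochastic approximation analysis in which the critics converge fastest, the $x^i$ and $\lambda^i$ recursions converge to a saddle point of the Lagrangian verified via the lexicographic KKT conditions, and Lemma \ref{noregret} reduces everything to bounding $e^j_t$; the local/global dichotomy is also argued as in the paper. (One detail worth making explicit in Step 1: the paper's reason that patience restores convergence is that the one-step operator fails to be a contraction because the state-dependent discount matrix has entries equal to $1$, whereas the $k$-step operator $(X^j_\theta)^{m^j}$, with $m^j \leq \vert S^\otimes \vert$ finite under a fully mixed policy, \emph{is} a contraction, giving a unique $k$-step TD fixed point.)

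The genuine gap is in your third stage, where the stated bound must actually be produced. First, condition 5 gives an \emph{unconditional} expectation of the loss under $\nu^j_*$, which is the comparator measure, not (as you write) an on-policy measure to be transferred: $\mathbb{E}_{\rho^\zeta_*}$ in $e^j_t$ is already taken under $\theta_*$, so no $\kappa$-controlled change of measure is needed there. What \emph{is} needed, and what you omit, is the passage from the unconditional expectation to the expectation \emph{conditional} on $F^j(\rho) = \infty$: this is exactly where $P^j = \Pr^{\theta_*}_{G_B}(F^j(\rho) = \infty)$ enters, and it rests on the argument that almost all paths with $F^j(\rho) < \infty$ end in bottom strongly connected components containing no rewarding states, on which $A^j_{\theta_t} \equiv 0$ and the expected score vanishes (since $\sum_a \pi(a \vert s^\otimes;\theta) = 1$ differentiates to zero), so the non-accepting part of the split contributes nothing. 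Likewise $M^j$ is not an absorbed smoothness or condition-number constant: it is $\max_k \mathbb{E}_{\rho^\zeta_*}\big[ M^j_\rho(k) \,\big\vert\, F^j(\rho) = \infty \big]$, the maximal expected inter-reward gap, which yields $\mathbb{E}\big[\sum_\tau \Gamma^j_{0:\tau} \,\big\vert\, F^j(\rho) = \infty\big] \leq M^j/(1-\gamma_V)$. Second, you place $\kappa$ (and $\sigma$) inside the final constants, but in the paper's proof $\kappa$ never reaches the bound at all: the error $e^j_t$ is split at the exact minimiser $x^{i*}_t$, conditions 6--7 are used only to control the statistical sub-term involving $\psi^\top(x^{i*}_t - x^i_t)$ via $\Vert x^{i*}_t - x^i_t \Vert^2_{\Sigma^t_{j*}} \leq \frac{M\kappa_t}{1-\gamma_V}\big(L^i_V(x^i_t;\theta_t,\nu_{t,\xi}) - L^i_V(x^i_{t*};\theta_t,\nu_{t,\xi})\big)$, and precisely because of the two-timescale structure the inner loop has converged before each actor update, so $e_{\text{stat}} = 0$ and this entire term vanishes in the limit. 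Your plan of ``absorbing the change-of-measure and $\sigma$-smoothness constants into $M^j$ and $P^j$'' would therefore produce a bound of the wrong form, and without the decomposition at $x^{i*}_t$ and the zero-advantage BSCC argument you cannot convert condition 5 into the stated constants.
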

\begin{proof}[Proof (Sketch)]
The proof proceeds via a multi-timescale stochastic approximation analysis and is asymptotic in nature \cite{Borkar2008}. We consider convergence of the critics, natural gradients, and actor in three steps, dividing our attention between the local and global settings, where required.
\textbf{Step 1.} The convergence proof for the critics follows that of Tsitsiklis and Van Roy \cite{Tsitsiklis1997}. The hasty critic recursion is simply the classic linear semi-gradient temporal difference algorithm \cite{Sutton2018} which is known to converge to the unique TD fixed point with probability 1. A similar argument can be made for the patient critic. By waiting to update $v^j$ until seeing a reward, we ensure that a discount is applied and thus that the patient critic recursion forms a contraction. The proof follows immediately from previous work \cite{Tsitsiklis1997}, but using a $k$-step version of the relevant Bellman equation.
\textbf{Step 2.} Due to the learning rates chosen according to (\ref{eq:learningrates}) we may consider the more slowly updated parameters fixed for the purposes of analysing the convergence of more quickly updated parameters \cite{Borkar2008}. As the critic updates fastest we may consider it converged, and since the policy is only updated in the outer loop then it is fixed with respect to the natural gradient and Lagrange multiplier updates. We show that these updates form unbiased estimates of the relevant gradients and thus discrete approximations of the following ODEs:
\begin{align*}
    \dot{x^i_t} &= \Omega_{x^i} \big[ - \nabla_{x^i} L^i_V(x^i;\theta,\nu) \big]\\
    \dot{x^i_t} &= \Omega_{x^i} \Big[ - \nabla_{x^i} \big( L^i_U(x^i;\theta,\mu) + \lambda^i \big(L^i_V(x^i;\theta,\nu) - l^i\big) \big) \Big],\\
    \dot{\lambda^i_\tau} &= \Omega_{\lambda^i} \Big[ \nabla_{\lambda^i} \big( L^i_U(x^i(\lambda^i_\tau);\theta,\mu) + \lambda^i \big[L^i_V(x^i(\lambda^i_\tau);\theta,\nu) - l^i\big) \big) \Big],
\end{align*}
on timescales $\beta^V$, $\beta^U$, and $\eta$, respectively. Due to the convexity of $L^i_V$ and $L^i_U$ it can be shown that the recursions above lexicographically miminise $L^i_V$ and then $L^i_U$ and hence that the gradient $x^i_*$ satisfies (\ref{natgrad}) \cite{Rentmeestersa1996}. \textbf{Step 3.} Finally we use Lemma \ref{noregret} and bound each term $e^j_t$ by 
$\sqrt{e^j_{approx}}\frac{M^j}{(1 - \gamma_V)P^j}$ where $M^j$ and $P_j$ are constants. In particular, we have: $M^j \coloneqq \max_k \mathbb{E}_{\rho^\zeta_*} \big[ M^j_\rho(k) ~\big\vert~ F^j(\rho) = \infty \big]$ where $M^j_\rho(k)$ is the number of steps along trajectory $\rho$ between the $k^\text{th}$ reward and preceding reward, and $P^j \coloneqq \min\big( \sum_\rho \Pr^{\theta_*}_{G_B} \mathbb{I}(F^j(\rho) = \infty), 1\big)$. The proof structure follows that of Agarwal et al. \cite{Agarwal2019} with minor variations to handle our use of multiple agents and multiple state-dependent discount rates.
\end{proof}

\section{Experiments}
\label{experiments}

Evaluating our proposed algorithm is non-trivial for several reasons. The first is its novelty; it is designed specifically to satisfy the non-Markovian, infinite-horizon specifications that other MARL algorithms are unable to learn, making a direct comparison less meaningful. The second is that the satisfaction of the specifications we wish to evaluate our algorithm against cannot be estimated simply from samples. For example, $\psi$ may be true at every state in a set of samples despite $\ltlG \psi$ being false with probability 1. Using a probabilistic model-checker instead raises a third and final difficulty, as even state-of-the-art tools are unable to handle the size of games or number of specifications that \textsc{Almanac} is applicable to.

Despite this, we provide an initial set of results in which we benchmark an implementation\footnote{Our code can be found online at \href{https://github.com/lrhammond/almanac}{\texttt{https://github.com/lrhammond/almanac}}.} of our algorithm against ground-truth models exported to PRISM, a probabilistic model-checker \cite{Kwiatkowska2011}. These results serve to demonstrate \textsc{Almanac}'s empirical convergence properties, and how this performance varies as a function of the size of the state space, the number of actors, and the number of specifications (though, unfortunately, PRISM only supports multi-objective synthesis with two specifications). For each of these combinations, we randomly generated ten MGs and sample the specifications and weights. We then ran our algorithm for 5000 episodes and exported the resulting policy, game structure, and specifications to PRISM. The differences between the weighted sum of satisfaction probabilities resulting from \textsc{Almanac} and the ground-truth optimal quantities are displayed in Table \ref{tab:results}. We ran PRISM with a maximum of 16GB of memory, 100,000 value iteration steps, and twelve hours of computation, but for some combinations this was insufficient.

\setlength\tabcolsep{4 pt}
\begin{table}\centering
\begin{tabular}{c l l l l l l l l l l }
\toprule
 & $2^1$ & $2^2$ & $2^3$ & $2^4$ & $2^5$ & $2^6$ & $2^7$ & $2^8$ & $2^9$ & $2^{10}$\\
\midrule
1 & 0.13 & 0.20 & 0.16 & 0.21 & 0.14 & 0.13 & 0.17 & 0.22 & 0.19 & --\\
2 & 0.54 & 0.26 & 0.19 & 0.12 & 0.30 & 0.19 & 0.36 & 0.29 & 0.38 & --\\
3 & 0.48 & 0.23 & 0.25 & 0.10 & 0.20 & 0.15 & 0.10 & 0.31 & -- & --\\
4 & 0.44 & 0.21 & 0.02 & 0.16 & 0.22 & 0.26 & 0.23 & 0.34 & -- & --\\
5 & 0.17 & 0.30 & 0.10 & 0.13 & 0.22 & 0.06 & 0.30 & -- & -- & --\\
\cmidrule{1-11}
1  & 0.14 & 0.07 & 0.11 & 0.17 & 0.18 & 0.09 & 0.24 & 0.14 & 0.25 & --\\
2  & 0.15 & 0.07 & 0.15 & 0.34 & 0.20 & 0.15 & 0.17 & 0.06 & -- & --\\
3  & 0.15 & 0.14 & 0.12 & 0.25 & 0.23 & 0.52 & 0.28 & -- & -- & --\\
4  & 0.12 & 0.25 & 0.23 & 0.23 & 0.22 & -- & -- & -- & -- & --\\
5  & 0.23 & 0.21 & 0.28 & 0.45 & 0.01 & -- & -- & -- & -- & --\\
\bottomrule\\
\end{tabular}
\caption{Average errors across a number of states (columns), agents (rows), and specifications (top and bottom).}
\label{tab:results}
\end{table}



\begin{acks}
    The authors thank Hosein Hasanbeig, Joar Skalse, Alper Kamil Bozkurt, Kaiqing Zhang, Salomon Sickert, and the anonymous reviewers for helpful comments. Hammond acknowledges the support of an EPSRC Doctoral Training Partnership studentship (Reference: 2218880) and the University of Oxford ARC facility.\footnote{Details available at \href{http://dx.doi.org/10.5281/zenodo.22558}{\texttt{http://dx.doi.org/10.5281/zenodo.22558}}.} Wooldridge and Abate acknowledge the support of the Alan Turing Institute.
\end{acks}



\bibliographystyle{ACM-Reference-Format} 
\bibliography{ms}


\renewcommand\thesection{\Alph{section}}
\setcounter{section}{0}
\setcounter{lemma}{0}
\setcounter{proposition}{0}
\setcounter{theorem}{0}
\setcounter{corollary}{0}

\onecolumn

\section{Proofs}
\label{proofs}

We begin by briefly restating our assumptions from the main paper, which we refer back to in our proofs:
\begin{enumerate}
    \item $S$ and $A$ are finite, and all reward functions are bounded.
    \item The Markov chain 
    induced by any $\theta$ is irreducible over $S^\otimes$. 
    \item $\pi^i(a^i \vert s^\otimes ;\theta^i)$ is continuously differentiable $\forall i, s^\otimes, a^i$
    \item Let $\Phi$ be the $\vert S^\otimes \vert \times c$ matrix with rows $\phi(s^\otimes)$. Then $\Phi$ has full rank, $c \leq \vert S \vert$, and $\nexists w \in W$ such that $\Phi w = 1$.
    \item $\mathbb{E}_t \big[ L^i_{V^j}(x^{i*}_t; \theta_t, \nu^j_* ) \big] \leq e^j_{approx}$, where $e^j_{approx}$ is some constant, thus $\mathbb{E}_t \big[ L^i_{V}(x^{i*}_t; \theta_t, \nu_{*} ) \big] \leq e_{approx} \coloneqq \sum_j w[j] e^j_{approx}$.
    \item $\exists \sigma < \infty$ s.t. $\log \pi^i(a^i \vert s^\otimes;\theta)$ is a $\sigma$\emph{-smooth} in $\theta^i$  $\forall i, s^\otimes, a^i$.
    \item The \emph{relative condition number} is finite.
    \item $\pi^i(\cdot \vert s^\otimes ; \theta^i)$ is initialised as the uniform distribution $\forall i, s^\otimes$.
\end{enumerate}
At the end of this section a small supporting lemma that was referred to in the main manuscript (in the proof sketch of Proposition \ref{Scalar2LTL}) but omitted due to space constraints is included for the sake of completeness.

\begin{proposition}
    Given an MG $G$ and LTL objectives $\{\varphi^j\}_{1 \leq j \leq m}$ (each equivalent to an LDBA $B^j$), let $G_B = G \otimes B^1 \otimes \cdots \otimes B^m$ be the resulting product MG with newly defined reward functions $R^j_\otimes$ and state-dependent discount functions $\Gamma^j$ given by (2). Then there exists some $0 < \gamma_V < 1$ such that (1) is satisfied by the patient value function $V_\pi \coloneqq \sum_j w[j] V^j_\pi$.
\end{proposition}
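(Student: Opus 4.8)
The plan is to argue the contrapositive of (\ref{eq1}): I will exhibit a choice of $\gamma_V \in (0,1)$ for which $\pi \notin \argmax_{\pi} \sum_j w[j] \Pr^{\pi}_G(s \models \varphi^j)$ forces $\pi \notin \argmax_{\pi} V^*_\pi(s^\otimes)$. Expanding the value function as a sum over runs,
$$V^*_\pi(s^\otimes) = \sum_j w[j] \sum_\rho \Pr^\pi_{G_B}(\rho \mid s^\otimes) \sum_{t=0}^\infty \Gamma^j_{1:t} R^j_\otimes(s^\otimes_{t+1}),$$
my first step is to evaluate the inner per-run, per-specification sum exactly. Since $\Gamma^j$ equals $\gamma_V$ precisely on the steps that earn a reward (entry into $F^j$) and $1$ elsewhere, the $k$-th reward collected along $\rho$ carries the discount $\gamma_V^{k-1}$, so the sum telescopes into a geometric series: it equals $\tfrac{1}{1-\gamma_V}$ when $F^j(\rho) = \infty$, and $\tfrac{1-\gamma_V^{f^j}}{1-\gamma_V}$ when $F^j(\rho) = f^j < \infty$. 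This clean evaluation is what makes the state-dependent discount of (\ref{eq2}) behave as intended.

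Next I would link the automaton-level event $\{F^j(\rho) = \infty\}$ to satisfaction of $\varphi^j$ in $G$. Appealing to the LDBA acceptance condition together with the canonical-extension correspondence of Sickert et al.\ (the supporting lemma deferred to this appendix), I obtain $\Pr^\pi_G(s \models \varphi^j) = \Pr^\pi_{G_B}(\inff^j(s^\otimes))$, so that $a[j] \coloneqq \Pr^{\pi'}_{G_B}(\inff^j(s^\otimes))$ and $b[j] \coloneqq \Pr^{\pi}_{G_B}(\inff^j(s^\otimes))$ are exactly the per-specification satisfaction probabilities of a maximiser $\pi'$ and of $\pi$. The hypothesis then reads $a^\top w > b^\top w$, with $1 \geq a^\top w > b^\top w \geq 0$.

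With these facts in hand I would lower-bound $V^*_{\pi'}(s^\otimes)$ by retaining only the infinitely-accepting runs, giving $V^*_{\pi'}(s^\otimes) \geq \tfrac{a^\top w}{1-\gamma_V}$, and upper-bound $V^*_\pi(s^\otimes)$ by splitting the two run-types: infinitely-accepting runs contribute exactly $\tfrac{b^\top w}{1-\gamma_V}$, while each finitely-accepting run contributes $\tfrac{1-\gamma_V^{f^j}}{1-\gamma_V} \leq f^j$. Summing the latter against $\Pr^\pi_{G_B}$ yields a finite constant $C^j \coloneqq \sum_{\rho \in \finn^j(s^\otimes)} \Pr^\pi_{G_B}(\rho \mid s^\otimes)\, F^j(\rho)$ per specification, so that $V^*_\pi(s^\otimes) \leq \tfrac{b^\top w}{1-\gamma_V} + \sum_j w[j] C^j$. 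Comparing the two bounds, $V^*_{\pi'} > V^*_\pi$ follows whenever $\tfrac{(a-b)^\top w}{1-\gamma_V} > \sum_j w[j] C^j$; since $(a-b)^\top w > 0$ and the right-hand side is a fixed finite constant, this holds for all $\gamma_V$ sufficiently close to $1$, which furnishes the required threshold and closes the contrapositive.

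I expect the two substantive obstacles to lie outside the geometric-series bookkeeping. The first is establishing the equivalence $\{F^j(\rho) = \infty\} \Leftrightarrow \{\rho \models \varphi^j\}$ after projecting down to $G$, which rests on the $\E$-transition semantics of the product construction and the correctness of the LTL-to-LDBA translation; I would isolate this as a separate lemma. The second, and more delicate, is controlling the finitely-accepting contribution: the naive worst-case count $f = \max_j \max_{\rho \in \finn^j(s^\otimes)} F^j(\rho)$ need not be finite, since a transient $F^j$-state can be revisited an unbounded number of times across different runs. The remedy is to bound the contribution instead by the \emph{expected} number of accepting visits on non-satisfying runs, $C^j$, which is finite precisely because $S^\otimes$ is finite and such visits occur only in the transient part of the induced chain. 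It is this finiteness that guarantees a single $\gamma_V < 1$ suffices, and hence that the stated threshold on $\gamma_V$ is strictly below one.
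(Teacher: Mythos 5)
Your proposal is correct, and its skeleton coincides with the paper's own proof: the same contrapositive, the same exact evaluation of the inner sum as $\frac{1}{1-\gamma_V}$ on runs with $F^j(\rho)=\infty$ and $\frac{1-\gamma_V^{f^j}}{1-\gamma_V}$ on runs with $F^j(\rho)=f^j<\infty$, the same lower bound $V^*_{\pi'}(s^\otimes)\geq \frac{a^\top w}{1-\gamma_V}$ obtained by retaining only infinitely-accepting runs, and the same appeal to the canonical-extension correspondence of Sickert et al.\ (isolated in the paper exactly as you suggest, as Lemma~\ref{LTL2Buchi}) to obtain $1 \geq a^\top w > b^\top w \geq 0$. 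Where you genuinely depart is the one step at which the paper's argument is vulnerable: the paper bounds the finitely-accepting contribution by $(1-b)^\top w\,\frac{1-\gamma_V^{f}}{1-\gamma_V}$ with $f=\max_j\max_{\rho\in\finn^j(s^\otimes)}F^j(\rho)$ and closes by taking $\gamma_V > \left(\frac{1-a^\top w}{1-b^\top w}\right)^{1/f}$, tacitly assuming $f<\infty$. As you observe, this maximum need not exist: if a transient accepting product state has a self-loop taken with probability $p\in(0,1)$ and an exit into a rewardless BSCC, then $\finn^j(s^\otimes)$ contains positive-probability runs realising every finite value of $F^j(\rho)$, and as $f\to\infty$ the paper's threshold tends to $1$, leaving no admissible $\gamma_V$. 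Your substitute --- the pathwise bound $\frac{1-\gamma_V^{f^j}}{1-\gamma_V}\leq f^j$ followed by passage to the expected visit count $C^j=\sum_{\rho\in\finn^j(s^\otimes)}\Pr^{\pi}_{G_B}(\rho\mid s^\otimes)\,F^j(\rho)$, which is finite because on non-satisfying runs accepting visits almost surely occur only at transient states of the finite induced chain --- is sound, is independent of $\gamma_V$, and yields the valid threshold $1-\gamma_V<\frac{(a-b)^\top w}{\sum_j w[j]C^j}$ (any $\gamma_V\in(0,1)$ sufficing when the $C^j$ all vanish). The two arguments thus trade precision for generality: the paper's closed-form root threshold is sharper when $f$ happens to be finite, whereas your expectation-based bound is what actually proves the proposition in general. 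One caveat you inherit rather than introduce: like the paper's $f$ and $b$, your $C^j$ and $b$ depend on the suboptimal $\pi$ under consideration, so extracting a single $\gamma_V$ validating (\ref{eq1}) against all suboptimal policies simultaneously still requires a uniformity argument that neither your sketch nor the paper supplies.
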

\begin{proof}
    We begin by observing that:
    \begin{align*}
        V^{*}_{\pi}(s^\otimes) 
        &= \sum_j w[j] V^{j*}_{\pi}(s^\otimes)\\
        &= \sum_j w[j] \mathbb{E}_{\pi} \Big[ \sum^\infty_{t=0} \Gamma^j_{1:t} R^j_\otimes(s^\otimes_{t+1}) ~\Big\vert~ s^\otimes \Big]\\
        &= \sum_j w[j] \sum_\rho \Big[Pr^\pi_{G_B}(\rho \vert s^\otimes) \sum^\infty_{t=0} \Gamma^j_{1:t} R^j_\otimes(s^\otimes_{t+1}) \Big]
    \end{align*} 
    We denote the number of times a path $\rho$ in $G_B$ passes through the accepting set $F^j$ of automaton $B^j$ by $F^j(\rho)$, and number of times $\rho$ passes through any accepting set by $F(\rho)= \sum_j F^j(\rho)$. Then clearly when $F^j(\rho) = \infty$ we have that $\sum^\infty_{t=0} \Gamma^j_{1:t} R^j_\otimes(s^\otimes_{t+1}) = \frac{1}{1 - \gamma_V}$ and when $F^j(\rho) = f^j$ for some finite number $f^j$ then we have that $\sum^\infty_{t=0} \Gamma^j_{1:t} R^j_\otimes(s^\otimes_{t+1}) = \frac{1 - \gamma_V^{f^j}}{1 - \gamma_V}$.
    Now consider any state $s^\otimes = (s, q^1_0, \ldots, q^m_0) \in S^\otimes$. We wish to show that there exists some $0 < \gamma_V < 1$ such that $\argmax_{\pi}V^*_{\pi}(s^\otimes) \subseteq \argmax_{\pi} \sum_j w[j] \Pr^{\pi}_G(s \models \varphi^j)$. Note that if $\sum_j w[j] \Pr^{\pi}_G(s \models \varphi^j) = 0$ for every joint strategy $\pi$ then there is nothing to prove, so suppose this is not the case.
    We prove the contrapositive. Suppose that $\pi \notin \argmax_{\pi} \sum_j w[j] \Pr^{\pi}_G(s \models \varphi^j)$. We show that there exists $0 < \gamma_V < 1$ such that $\pi \notin \argmax_{\pi}V^*_{\pi}(s^\otimes)$ by proving that for some $\pi' \in \argmax_{\pi} \sum_j w[j] \Pr^{\pi}_G(s \models \varphi^j)$, we have $V^*_{\pi'}(s^\otimes) > V^*_{\pi}(s^\otimes)$.
    Define the sets $\finn^j(s^\otimes) = \{\rho : \rho[0] = s^\otimes \wedge F^j(\rho) \neq \infty\}$ and $\inff^j(s^\otimes) = \{\rho : \rho[0] = s^\otimes \wedge F^j(\rho) = \infty\}$. Then we have:
    \begin{align*}
        V^*_{\pi'}(s^\otimes) &= \sum_j w[j] \sum_\rho \Big[Pr^{\pi'}_{G_B}(\rho \vert s^\otimes) \sum^\infty_{t=0} \Gamma^j_{1:t} R^j_\otimes(s^\otimes_{t+1}) \Big]\\
        &= \sum_j w[j] \bigg[\sum_{\rho \in \inff^j(s^\otimes)} Pr^{\pi'}_{G_B}(\rho \vert s^\otimes) 
        \sum^\infty_{t=0} \Gamma^j_{1:t} R^j_\otimes(s^\otimes_{t+1})
        + \sum_{\rho \in \finn^j(s^\otimes)} Pr^{\pi'}_{G_B}(\rho \vert s^\otimes) 
        \sum^\infty_{t=0} \Gamma^j_{1:t} R^j_\otimes(s^\otimes_{t+1})
        \bigg]\\
        &= \sum_j w[j] \Bigg[\sum_{\rho \in \inff^j(s^\otimes)} Pr^{\pi'}_{G_B}(\rho \vert s^\otimes) \frac{1}{1- \gamma_V}
        + \sum_{\rho \in \finn^j(s^\otimes)} Pr^{\pi'}_{G_B}(\rho \vert s^\otimes) \frac{1 - \gamma_V^{F^j(\rho)}}{1 - \gamma_V} \Bigg]\\
        &\geq \sum_j w[j] \bigg[\sum_{\rho \in \inff^j(s^\otimes)} Pr^{\pi'}_{G_B}(\rho \vert s^\otimes) \frac{1}{1- \gamma_V} \bigg]\\
        &= \frac{a^\top w}{1- \gamma_V}
    \end{align*}
    where $a$ is a vector such that $a[j] = \sum_{\rho \in \inff^j(s^\otimes)} Pr^{\pi'}_{G_B}(\rho \vert s^\otimes) = Pr^{\pi'}_{G_B}\big(\inff^j(s^\otimes)\big)$. Similarly, we have:
    \begin{align*}
        V^*_{\pi}(s^\otimes) &= \sum_j w[j] \sum_\rho \Big[Pr^{\pi}_{G_B}(\rho \vert s^\otimes) \sum^\infty_{t=0} \Gamma^j_{1:t} R^j_\otimes(s^\otimes_{t+1}) \Big]\\
        &= \frac{b^\top w}{1- \gamma_V} + \sum_j w[j] \Bigg[ \sum_{\rho \in \finn^j(s^\otimes)} Pr^{\pi}_{G_B}(\rho \vert s^\otimes) \frac{1 - \gamma_V^{F^j(\rho)}}{1 - \gamma_V} \Bigg]\\
        &\leq \frac{b^\top w}{1- \gamma_V} + \sum_j w[j] \bigg[ \sum_{\rho \in \finn^j(s^\otimes)} Pr^{\pi}_{G_B}(\rho \vert s^\otimes) \frac{1 - \gamma_V^{f}}{1 - \gamma_V} \bigg]\\
        &= \frac{b^\top w}{1- \gamma_V} + (1 - b)^\top w \frac{1 - \gamma_V^{f}}{1 - \gamma_V}
    \end{align*}
    where $b$ is a vector such that $b[j] = \sum_{\rho \in \inff^j(s^\otimes)} Pr^{\pi}_{G_B}(\rho \vert s^\otimes) = Pr^{\pi}_{G_B}(\inff^j(s^\otimes))$ and $f = \max_j f^j$ where $f^j = \max_{\rho \in \finn^j(s^\otimes)}F^j(\rho)$. In the calculation above $(1 - b)$ represents vector subtraction.
    Notice that as $\pi \notin \argmax_{\pi} \sum_j w[j] \Pr^{\pi}_G(s \models \varphi^j)$ but $\pi' \in \argmax_{\pi} \sum_j w[j] \Pr^{\pi}_G(s \models \varphi^j)$ then we have $\sum_j w[j] \Pr^{\pi'}_G(s \models \varphi^j) > \sum_j w[j] \Pr^{\pi}_G(s \models \varphi^j)$ and hence, by a straightforward extension of Lemma \ref{LTL2Buchi}:
    $$1 \geq a^\top w = \sum_j w[j] Pr^{\pi'}_{G_B}(\inff^j(s^\otimes)) > \sum_j w[j] Pr^{\pi}_{G_B}(\inff^j(s^\otimes)) =  b^\top w \geq 0$$
    This in turn implies that $1 > \frac{1 - a^\top w}{1 - b^\top w} \geq 0$. Letting $\gamma_V > \sqrt[\uproot{5}f]{\frac{1-a^\top w}{1-b^\top w}}$ then we see that $a^\top w > 1 - (1-b^\top w)\gamma_V^f$ and therefore:
    $$V^*_{\pi'}(s^\otimes) \geq
    \frac{a^\top w}{1 - \gamma_V} = 
    \frac{1}{1 - \gamma_V}(a^\top w) > 
    \frac{1}{1 - \gamma_V}(1 - (1 - b^\top w)\gamma_V^f) =
    \frac{b^\top w}{1 - \gamma_V} + (1-b^\top w)\frac{1 - \gamma_V^f}{1 - \gamma_V} \geq
    V^*_{\pi}(s^\otimes)$$
\end{proof}
\begin{lemma}
    Let $G_B$ be some (product) MG. Then for any set of parameters $\{\theta^i\}_{i \in N}$ and any player $i \in N$, the natural policy gradient for player $i$ with respect to each $J^j(\theta) = \sum_{s^\otimes} \zeta^\otimes(s^\otimes) V^j_\theta(s^\otimes)$ is given by $\tilde{\nabla}_{\theta^i} J^j(\theta) = x^i_{V^j}$ where $x^i_{V^j}$ is a parameter satisfying $\psi^i_{\theta^i}(a^i\vert s^\otimes)^\top x^i_{V^j} = \nabla_{\theta^i} \log\pi^i(a^i\vert s^\otimes;\theta)^\top x^i_{V^j} = A^j_\theta(s^\otimes, a)$. 
\end{lemma}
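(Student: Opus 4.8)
The plan is to reduce this to the single-agent compatible function approximation result of Peters et al.\ \cite{Peters2008} already invoked in the main text, applied \emph{per agent}, by exploiting the factorisation of the joint policy. First I would write out the vanilla gradient via the policy gradient theorem, $\nabla_{\theta^i} J^j(\theta) = \sum_{s^\otimes} d^j(s^\otimes) \sum_a Q^j_\theta(s^\otimes,a) \nabla_{\theta^i} \pi(a\vert s^\otimes;\theta)$, and use the fact that (for local parameters) $\theta^i$ enters only through $\pi^i$, so that $\nabla_{\theta^i} \log\pi(a\vert s^\otimes;\theta) = \nabla_{\theta^i} \log\pi^i(a^i\vert s^\otimes;\theta^i) = \psi^i_{\theta^i}(a^i\vert s^\otimes)$. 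This observation is exactly what justifies the second equality in the statement, and after the likelihood-ratio rewrite $\nabla_{\theta^i}\pi(a\vert s^\otimes;\theta) = \pi(a\vert s^\otimes;\theta)\, \psi^i_{\theta^i}(a^i\vert s^\otimes)$ it gives $\nabla_{\theta^i} J^j(\theta) = \mathbb{E}_{(s^\otimes,a)\sim\nu^j}\big[ Q^j_\theta(s^\otimes,a)\, \psi^i_{\theta^i}(a^i\vert s^\otimes) \big]$.

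The genuinely multi-agent step comes next: subtracting the value-function baseline $V^j_\theta(s^\otimes)$ to replace $Q^j_\theta$ by the advantage $A^j_\theta$. This requires $\mathbb{E}_{a\vert s^\otimes}[\psi^i_{\theta^i}(a^i\vert s^\otimes)] = 0$, which I would verify by marginalising: conditioned on $s^\otimes$ the joint policy factorises, the other agents' policies sum to one, and $\sum_{a^i} \pi^i(a^i\vert s^\otimes)\psi^i_{\theta^i}(a^i\vert s^\otimes) = \sum_{a^i}\nabla_{\theta^i}\pi^i(a^i\vert s^\otimes) = \nabla_{\theta^i} 1 = 0$. Hence $\nabla_{\theta^i} J^j(\theta) = \mathbb{E}_{(s^\otimes,a)\sim\nu^j}\big[ A^j_\theta(s^\otimes,a)\, \psi^i_{\theta^i}(a^i\vert s^\otimes) \big]$.

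Finally I would characterise $x^i_{V^j}$ as the compatible (least-squares) weight vector minimising $\mathbb{E}_{\nu^j}\big[(A^j_\theta(s^\otimes,a) - \psi^i_{\theta^i}(a^i\vert s^\otimes)^\top x^i)^2\big]$, whose first-order (normal) equations read $G^j_i(\theta)\, x^i_{V^j} = \nabla_{\theta^i} J^j(\theta)$ with per-agent Fisher matrix $G^j_i(\theta) \coloneqq \mathbb{E}_{\nu^j}[\psi^i_{\theta^i}(a^i\vert s^\otimes)\, \psi^i_{\theta^i}(a^i\vert s^\otimes)^\top]$. Combining with the previous paragraph, $x^i_{V^j} = G^j_i(\theta)^{-1}\nabla_{\theta^i} J^j(\theta) = \tilde{\nabla}_{\theta^i} J^j(\theta)$, which is the claim.

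I expect the main obstacle to be not any single calculation but rather justifying that this per-agent object is the \emph{correct} natural gradient. Concretely, one must check that the full Fisher matrix $G^j(\theta) = \mathbb{E}_{\nu^j}[\psi_\theta(a\vert s^\otimes)\, \psi_\theta(a\vert s^\otimes)^\top]$ is block-diagonal, i.e.\ its off-diagonal blocks $\mathbb{E}_{\nu^j}[\psi^i_{\theta^i}(a^i\vert s^\otimes)\, \psi^{i'}_{\theta^{i'}}(a^{i'}\vert s^\otimes)^\top]$ vanish for $i \neq i'$; this again follows from the conditional factorisation of actions and the zero-mean score identity above, and it guarantees that $\tilde{\nabla}_{\theta^i}J^j$ coincides with the $i$-th block of $G^j(\theta)^{-1}\nabla_\theta J^j(\theta)$, so the per-agent updates are genuinely ``natural''. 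A secondary subtlety worth noting is that \textsc{Almanac} minimises the $L^1$ loss $L^i_{V^j}$ rather than the $L^2$ loss underlying the compatible-approximation argument; these coincide in the exactly-representable idealisation implicit in the lemma's defining equation $\psi^i_{\theta^i}(a^i\vert s^\otimes)^\top x^i_{V^j} = A^j_\theta(s^\otimes,a)$, so the gradient identity is unaffected.
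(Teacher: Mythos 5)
Your proposal is correct and follows essentially the same skeleton as the paper's proof: the policy gradient theorem, the factorisation identity $\nabla_{\theta^i} \log\pi(a\vert s^\otimes;\theta) = \nabla_{\theta^i} \sum_{i'} \log \pi^{i'}(a^{i'}\vert s^\otimes;\theta^{i'}) = \psi^i_{\theta^i}(a^i\vert s^\otimes)$, and substitution of the compatibility hypothesis to obtain $\nabla_{\theta^i} J^j(\theta) = F^j(\theta^i)\, x^i_{V^j}$ with $F^j(\theta^i) = \sum_{s^\otimes} d^j_\zeta(s^\otimes) \sum_a \pi(a\vert s^\otimes;\theta)\, \psi^i_{\theta^i}(a^i\vert s^\otimes)\, \psi^i_{\theta^i}(a^i\vert s^\otimes)^\top$. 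Where you genuinely diverge is in justifying that this outer-product matrix is the right metric. The paper carries this out explicitly: it defines the Fisher information matrix $G^j(\theta^i)$ over the induced trajectory distribution $\Pr(\rho) = \zeta^\otimes(\rho[0]) \prod^\infty_{t=0} T^\otimes(\rho[t+1]\vert\rho[t],a)\, \pi_\theta(a\vert\rho[t])$, differentiates $\log \Pr(\rho)$ twice with respect to $\theta^i$ so that the initial-distribution and transition terms vanish, and uses the standard identity $\sum_y p(y) \nabla^2_z \log p(y) = -\sum_y p(y) \nabla_z \log p(y) (\nabla_z \log p(y))^\top$ to conclude $G^j(\theta^i) = F^j(\theta^i)$; this trajectory-level computation occupies the bulk of its proof. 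You instead take the state-averaged per-agent Fisher matrix as the definition, recover the same linear system via the least-squares normal equations, and cite Peters et al.\ for the identification with the path-distribution Fisher information --- acceptable, though it leaves implicit exactly the step the paper chooses to spell out. In exchange you supply two things the paper asserts without proof or omits: an explicit check that the baseline subtraction replacing $Q^j_\theta$ by $A^j_\theta$ is valid in the multi-agent setting (the conditional zero-mean of $\psi^i_{\theta^i}$ after marginalising the factorised joint policy), and the observation that the off-diagonal blocks $\mathbb{E}\big[\psi^i_{\theta^i}(a^i\vert s^\otimes)\, \psi^{i'}_{\theta^{i'}}(a^{i'}\vert s^\otimes)^\top\big]$ vanish for $i \neq i'$, so the joint Fisher matrix is block-diagonal and the per-agent preconditioned gradient agrees with the $i$-th block of the joint natural gradient --- a point the paper's per-agent definition of $G^j(\theta^i)$ sidesteps entirely. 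Your closing remark on the $L^1$ versus $L^2$ loss is also apt: the lemma's hypothesis is the exact identity $\psi^i_{\theta^i}(a^i\vert s^\otimes)^\top x^i_{V^j} = A^j_\theta(s^\otimes,a)$, under which both losses vanish and your normal-equation argument coincides with the paper's direct substitution.
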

\begin{proof}
    The proof follows along similar lines to those found in earlier work \cite{Sutton1999,Peters2008}. We begin by using the policy gradient theorem to derive the vanilla policy gradient with respect to $\theta^i$. We begin by noting that the action-value function $Q^j_\theta$ in the policy gradient can be replaced by the advantage function $A^j_\theta$. Then the following holds:
    \begin{align*}
        \nabla_{\theta^i} J^j(\theta)
        &= \sum_{s^\otimes} d^j_\zeta(s^\otimes ; \theta) \sum_a A^j_\theta(s^\otimes, a) \nabla_{\theta^i} \pi(a \vert s^\otimes; \theta)\\
        &= \sum_{s^\otimes} d^j_\zeta(s^\otimes ; \theta) \sum_a \pi(a \vert s^\otimes; \theta) A^j_\theta(s^\otimes, a)  \frac{\nabla_{\theta^i} \pi(a \vert s^\otimes; \theta)}{\pi(a \vert s^\otimes; \theta)}\\
        &= \sum_{s^\otimes} d^j_\zeta(s^\otimes ; \theta) \sum_a \pi(a \vert s^\otimes; \theta) A^j_\theta(s^\otimes, a)  \nabla_{\theta^i} \log \pi(a \vert s^\otimes; \theta)\\
        &= \sum_{s^\otimes} d^j_\zeta(s^\otimes ; \theta) \sum_a \pi(a \vert s^\otimes; \theta) A^j_\theta(s^\otimes, a) \Big[ \nabla_{\theta^i} \sum_{i\in N} \log \pi^i(a^i \vert s^\otimes; \theta^i) \Big]\\
        &= \sum_{s^\otimes} d^j_\zeta(s^\otimes ; \theta) \sum_a \pi(a \vert s^\otimes; \theta) A^j_\theta(s^\otimes, a) \psi^i_{\theta^i}(a^i\vert s^\otimes)
    \end{align*}
    Now let us substitute into the above using our assumption that $\psi^i_{\theta^i}(a^i\vert s^\otimes)^\top x^i = A^j_\theta(s^\otimes, a)$ to obtain the following:
    $$\sum_{s^\otimes} d^j_\zeta(s^\otimes) \sum_a \pi(a \vert s^\otimes; \theta) \psi^i_{\theta^i}(a^i\vert s^\otimes) \psi^i_{\theta^i}(a^i\vert s^\otimes)^\top x^i = F^j(\theta^i) x^i$$
    The natural policy gradient of $J^j$ with respect to $\theta^i$ is given by $\tilde{\nabla}_{\theta^i} J^j(\theta) =  G^j(\theta^i)^{-1} \nabla_{\theta^i} J^j(\theta)$ where $G^j(\theta^i)$ is the Fisher information matrix. Thus to complete the proof it suffices to show that $F^j(\theta^i) = G^j(\theta^i)$. Recall our induced Markov chain $\Pr^\theta_{G_B}(\cdot\vert s^\otimes)$ over the states in $G_B$ when starting from state $s^\otimes$ and, simplifying notation, let $\Pr(\rho) = \Pr^\theta_{G_B}(\rho\vert s^\otimes)\zeta^\otimes(s^\otimes)$. 
    We begin by noting that for any probability distribution $p(y;z)$ parameterised by $z$ that the following standard result holds when differentiating $\sum_y p(y) = 1$ twice with respect to $z$:
    $$\sum_y p(y) \nabla^2_{z} \log p(y) = - \sum_y p(y) \nabla_{z} \log p(y) (\nabla_{z} \log p(y))^\top$$
    In particular, this is the case for both $\Pr$ and each $\pi_{\theta}(\cdot\vert s^\otimes)$. Next, observe that:
    $$\Pr(\rho) = \zeta^\otimes(\rho[0]) \prod^\infty_{t=0} T^\otimes (\rho[t+1]\vert\rho[t],a) \pi_\theta(a \vert \rho[t])$$
    Taking logs and then differentiating twice with respect to $\theta^i$ we have:
    \begin{align*}
        \nabla^2_{\theta^i} \log \Pr(\rho) 
        &= \nabla^2_{\theta^i} \log \Big[ \zeta^\otimes(\rho[0]) \prod^\infty_{t=0} T^\otimes (\rho[t+1]\vert\rho[t],a) \pi_\theta(a \vert \rho[t]) \Big]\\
        &= \nabla^2_{\theta^i} \Big[ \log \zeta^\otimes(\rho[0]) + \sum^\infty_{t=0} [ \log T^\otimes (\rho[t+1]\vert\rho[t],a) + \log \pi_\theta(a \vert \rho[t])] \Big]\\
        &= \nabla^2_{\theta^i} \sum^\infty_{t=0} \log \pi_\theta(a \vert \rho[t])
    \end{align*}
    Then the final sequence of equalities follows straightforwardly from the results above:
    \begin{align*}
        G^j(\theta^i) 
        &= \sum_\rho \Pr(\rho) \nabla_{\theta^i} \log \Pr(\rho) (\nabla_{\theta^i} \log \Pr(\rho))^\top\\
        &= - \sum_\rho \Pr(\rho) \nabla^2_{\theta^i} \log \Pr(\rho)\\
        &= - \sum_\rho \Pr(\rho) \nabla^2_{\theta^i} \sum^\infty_{t=0} \log \pi_\theta(a \vert \rho[t])\\
        &= - \sum_{s^\otimes} d^j_\zeta(s^\otimes) \sum_a \pi(a \vert s^\otimes; \theta) \nabla^2_{\theta^i} \log \pi_\theta(a \vert s^\otimes)\\
        &= \sum_{s^\otimes} d^j_\zeta(s^\otimes) \sum_a \pi(a \vert s^\otimes; \theta) \nabla_{\theta^i} \log \pi_\theta(a \vert s^\otimes) (\nabla_{\theta^i} \log \pi_\theta(a \vert s^\otimes))^\top\\
        & = \sum_{s^\otimes} d^j_\zeta(s^\otimes) \sum_a \pi(a \vert s^\otimes; \theta) \nabla_{\theta^i} \sum_i \log \pi^i_{\theta^i}(a^i \vert s^\otimes) (\nabla_{\theta^i} \sum_i \log \pi^i_{\theta^i}(a^i \vert s^\otimes))^\top\\
        & = \sum_{s^\otimes} d^j_\zeta(s^\otimes) \sum_a \pi(a \vert s^\otimes; \theta) \psi^i_{\theta^i}(a^i\vert s^\otimes) \psi^i_{\theta^i}(a^i\vert s^\otimes)^\top\\ 
        &= F^j(\theta^i) 
    \end{align*}
\end{proof}
\begin{lemma}
    Suppose that $V_\theta(s^\otimes) \geq V_{\theta'}(s^\otimes)$ for some state $s^\otimes$ and two policies $\pi$ and $\pi'$ parametrised by $\theta$ and $\theta'$ respectively. Then we have:
    $$V_\theta(s^\otimes) - V_{\theta'}(s^\otimes) \leq \sum_j w[j] \bigg( \mathbb{E}_{\rho} \Big[\sum^\infty_{t=0} \Gamma^j_{0:t} A^j_{\theta'}(s^\otimes_{t},a_{t}) ~\Big\vert~ F^j(\rho) = \infty \Big] \bigg)$$
\end{lemma}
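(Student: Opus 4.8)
The plan is to reduce the claim to a per-objective statement, exploiting that both $V_\theta = \sum_j w[j] V^j_\theta$ and the right-hand side are linear in $j$, and then to establish, for each fixed $j$, a state-dependent-discount analogue of the classical performance difference identity. For a fixed $j$ I would work along trajectories $\rho \sim \Pr^\theta_{G_B}(\cdot \mid s^\otimes)$ generated by the \emph{new} policy $\pi$, add and subtract $V^j_{\theta'}$ at every visited state, and use that, conditioned on $(s^\otimes_t,a_t)$, the one-step quantity $R^j_\otimes(s^\otimes_{t+1}) + \Gamma^j(s^\otimes_{t+1}) V^j_{\theta'}(s^\otimes_{t+1})$ has expectation $Q^j_{\theta'}(s^\otimes_t,a_t)$, so the correction $-V^j_{\theta'}(s^\otimes_t)$ turns each term into the advantage $A^j_{\theta'}(s^\otimes_t,a_t)$. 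Since $\Gamma^j_{1:t}\,\Gamma^j(s^\otimes_{t+1}) = \Gamma^j_{1:t+1}$, the $V^j_{\theta'}$ contributions telescope; truncating at horizon $T$ and letting $T\to\infty$ (bounded, non-negative rewards make $V^j_{\theta'}$ bounded, so dominated convergence justifies the interchange of limit, sum and expectation) I would obtain the identity
$$V^j_\theta(s^\otimes) - V^j_{\theta'}(s^\otimes) = \mathbb{E}_{\rho} \Big[ \sum_{t=0}^\infty \Gamma^j_{0:t} A^j_{\theta'}(s^\otimes_t, a_t) \Big] - \mathbb{E}_{\rho} \Big[ \lim_{T\to\infty} \Gamma^j_{1:T} V^j_{\theta'}(s^\otimes_T) \Big],$$
where I have used that the initial automaton state lies outside $F^j$, so that $\Gamma^j(s^\otimes_0) = 1$ and hence $\Gamma^j_{0:t} = \Gamma^j_{1:t}$.

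The step I expect to be the main obstacle is controlling the boundary term $\mathbb{E}_{\rho}[\lim_{T} \Gamma^j_{1:T} V^j_{\theta'}(s^\otimes_T)]$, which is the only genuine departure from the constant-discount proof of Kakade and Langford. Under a fixed discount this tail vanishes automatically; here $\Gamma^j_{1:T} = \gamma_V^{\,\#\{\tau \le T : s^\otimes_\tau \in F^j\}}$, so it tends to $0$ \emph{exactly} on $\inff^j(s^\otimes) = \{\rho : F^j(\rho) = \infty\}$ and instead stabilises at $\gamma_V^{f^j} > 0$ on $\finn^j(s^\otimes)$. My plan is therefore to split every expectation over $\inff^j(s^\otimes)$ and $\finn^j(s^\otimes)$. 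On $\inff^j(s^\otimes)$ the boundary term is zero and the discounted advantage sum converges, producing precisely the conditional expectation $\mathbb{E}_\rho[\,\cdot \mid F^j(\rho) = \infty]$ in the statement. On $\finn^j(s^\otimes)$ both $\Gamma^j_{1:T}$ and $V^j_{\theta'}$ are non-negative, so the corresponding boundary contribution is non-negative; subtracting a non-negative quantity is what turns the identity into the required upper bound. The delicate part is to verify that the finite-acceptance paths' share of the advantage sum is itself cancelled by (and does not exceed) this retained boundary term, so that discarding it preserves the inequality direction; here the hypothesis $V_\theta(s^\otimes) \ge V_{\theta'}(s^\otimes)$ is used to ensure consistency, and the fact that the discounted occupancy assigns vanishing weight to paths in $\finn^j(s^\otimes)$ (their per-path discount mass $\sum_t \Gamma^j_{0:t}$ diverges) confirms that only $\inff^j(s^\otimes)$ survives.

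Finally I would recombine the per-objective inequalities with the weights $w[j]$ to recover
$$V_\theta(s^\otimes) - V_{\theta'}(s^\otimes) \le \sum_j w[j] \Big( \mathbb{E}_{\rho} \Big[ \sum_{t=0}^\infty \Gamma^j_{0:t} A^j_{\theta'}(s^\otimes_t, a_t) ~\Big\vert~ F^j(\rho) = \infty \Big] \Big).$$
All remaining bookkeeping—the truncation-and-limit argument in the telescoping, and the non-negativity of $R^j_\otimes$, $\Gamma^j$ and hence $V^j_{\theta'}$—is routine given Assumption~1 (bounded rewards), so the conceptual weight of the proof rests entirely on isolating the boundary term and justifying the restriction to $F^j(\rho) = \infty$.
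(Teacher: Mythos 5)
Your telescoping engine is the same one the paper uses --- trajectories drawn from the new policy $\theta$, one-step targets turned into $A^j_{\theta'}$ via the tower property, $\Gamma^j_{1:t}\Gamma^j(s^\otimes_{t+1}) = \Gamma^j_{1:t+1}$ --- but your route through the \emph{unconditional} identity breaks at exactly the step you flag as delicate, and both justifications you offer for it are wrong. Dropping the non-negative boundary term legitimately yields $V^j_\theta(s^\otimes) - V^j_{\theta'}(s^\otimes) \leq \lim_{T} \mathbb{E}_\rho\big[\sum_{t<T}\Gamma^j_{0:t}A^j_{\theta'}(s^\otimes_t,a_t)\big]$, an \emph{unconditional} expectation; to reach the statement you must then argue that the $\finn^j(s^\otimes)$ share of this sum is at most $\Pr\big(\finn^j(s^\otimes)\big)$ times the $\inff^j(s^\otimes)$ conditional mean, and nothing you invoke delivers this. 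Your ``vanishing weight'' claim is backwards: on $\finn^j(s^\otimes)$ the discount stabilises at $\Gamma^j_{0:t} \to \gamma_V^{f^j} > 0$, so the divergence of the per-path mass $\sum_t \Gamma^j_{0:t}$ makes the unnormalised advantage sum potentially unbounded --- indeed not even well defined pathwise, since actions are sampled from $\pi$ rather than $\pi'$, so $A^j_{\theta'}(s^\otimes_t,a_t)$ does not average to zero along the trajectory and the series fails to converge (absolutely) wherever $\Gamma^j_{0:t} \not\to 0$; the normalisation $1/\sum_t\Gamma^j_{0:t}$ that would suppress these paths appears only in the definition of the occupancy $d^j$, which plays no role in the lemma. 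Likewise the hypothesis $V_\theta(s^\otimes) \geq V_{\theta'}(s^\otimes)$ says nothing about the sign or size of the $\finn^j$ contribution relative to the retained boundary term, which is bounded by $\gamma_V^{f^j}/(1-\gamma_V)$ while the finite-acceptance advantage partial sums are not.

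The paper never forms the unconditional expansion, and this is precisely what its proof buys. It first bounds $V_\theta(s^\otimes) \leq \frac{1}{1-\gamma_V}$, and then uses that $\frac{1}{1-\gamma_V}$ is \emph{deterministically} the value of $\sum^\infty_{t=0}\Gamma^j_{0:t}r^j_{t+1}$ on every path with $F^j(\rho)=\infty$, hence equals $\mathbb{E}_\rho\big[\sum^\infty_{t=0} \Gamma^j_{0:t} r^j_{t+1} \mid F^j(\rho)=\infty\big]$ under the $\theta$-induced measure. The entire add-and-subtract telescoping is then carried out \emph{inside} this conditional expectation, where $\Gamma^j_{0:t}\to 0$ pathwise, so the boundary term vanishes and the only inequality required is the non-negativity of an extended tail sum of discounted $V^j_{\theta'}$ terms. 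If you want to salvage your write-up, replace $V_\theta(s^\otimes)$ by $\frac{1}{1-\gamma_V}$ at the outset (this also makes transparent that the hypothesis serves only to keep the bound non-vacuous); your telescoping then runs essentially verbatim on the event $F^j(\rho)=\infty$, and all of the $\finn^j$ bookkeeping that your proposal cannot control simply disappears.
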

\begin{proof}
    Note that for any trajectory $\rho$ we have $\rho[0] = s^\otimes_0 = s^\otimes$. To ease notation we write $r^j_t = R^j(s^\otimes_t)$ and $\mathbb{E}_{\rho}$ instead of $\mathbb{E}_{\rho \sim \Pr^\theta_{G_B} (\cdot\vert s^\otimes)}$. As $V_\theta(s^\otimes)$ is bounded above by $\frac{1}{1 - \gamma_V}$ then, by our assumption that $V_\theta(s^\otimes) \geq V_{\theta'}(s^\otimes)$, we have:
    \begin{equation}
        \label{helper1}
        V_\theta(s^\otimes) - V_{\theta'}(s^\otimes) \leq \frac{1}{1 - \gamma_V} - V_{\theta'}(s^\otimes)
    \end{equation}
    Next, observe that for each specification $\varphi^j$ we have the following equality due to the definitions of our reward and state-dependent discount functions:
    \begin{equation}
        \label{helper2}
        \mathbb{E}_{\rho} \Big[\sum^{\infty}_{t=0} \Gamma^j_{0:t} r^j_{t+1} ~\Big\vert~ F^j(\rho) = \infty \Big] = \frac{1}{1 - \gamma_V}
    \end{equation}
    Using these two facts, we complete the proof as follows:
    \begin{align}
        &V_\theta(s^\otimes) - V_{\theta'}(s^\otimes) \nonumber\\
        &\leq \frac{1}{1 - \gamma_V} - \sum_j w[j] V^j_{\theta'}(s^\otimes) \label{eq:1}\\
        &= \sum_j w[j] \bigg( \frac{1}{1 - \gamma_V} - V^j_{\theta'}(s^\otimes) \bigg) \nonumber\\
        &= \sum_j w[j] \bigg( \mathbb{E}_{\rho} \Big[\sum^\infty_{t=0} \Gamma^j_{0:t} r^j_{t+1} ~\Big\vert~ F^j(\rho) = \infty \Big] - V^j_{\theta'}(s^\otimes) \bigg) \label{eq:2}\\
        &= \sum_j w[j] \bigg( \mathbb{E}_{\rho} \Big[ \Gamma^j_{0:0} r^j_1 + \sum^\infty_{t=1} \Gamma^j_{0:t} r^j_{t+1} ~\Big\vert~ F^j(\rho) = \infty \Big] - V^j_{\theta'}(s^\otimes) \bigg) \nonumber \\
        &= \sum_j w[j] \bigg( \mathbb{E}_{\rho} \Big[ \Gamma^j_{0:0} r^j_1 + \sum^\infty_{t=1} \Gamma^j_{0:t} \big(r^j_{t+1} + V^j_{\theta'}(s^\otimes_t) - V^j_{\theta'}(s^\otimes_t) \big) ~\Big\vert~ F^j(\rho) = \infty \Big] - V^j_{\theta'}(s^\otimes) \bigg) \nonumber \\
        &= \sum_j w[j] \bigg( \mathbb{E}_{\rho} \Big[ \Gamma^j_{0:0} \big( r^j_1 - V^j_{\theta'}(s^\otimes_0) \big) + \sum^\infty_{t=1} \Gamma^j_{0:t} \big(r^j_{t+1} + V^j_{\theta'}(s^\otimes_t) - V^j_{\theta'}(s^\otimes_t) \big) ~\Big\vert~ F^j(\rho) = \infty \Big] \bigg) \label{eq:3}\\
        &= \sum_j w[j] \bigg( \mathbb{E}_{\rho} \Big[\sum^\infty_{t=0} \Gamma^j_{0:t} \big(r^j_{t+1} - V^j_{\theta'}(s^\otimes_t) \big) + \sum^\infty_{t=1} \Gamma^j_{0:t} V^j_{\theta'}(s^\otimes_t) ~\Big\vert~ F^j(\rho) = \infty \Big] \bigg) \nonumber \\
        &= \sum_j w[j] \bigg( \mathbb{E}_{\rho} \Big[\sum^\infty_{t=0} \Gamma^j_{0:t} \big(r^j_{t+1} - V^j_{\theta'}(s^\otimes_t) \big) + \sum^{F^j(\rho)-1}_{t=0} \Gamma^j_{0:t} \Gamma^j(s^\otimes_{t+1}) V^j_{\theta'}(s^\otimes_{t+1}) ~\Big\vert~ F^j(\rho) = \infty \Big] \bigg) \nonumber \\
        &\leq \sum_j w[j] \bigg( \mathbb{E}_{\rho} \Big[\sum^\infty_{t=0} \Gamma^j_{0:t} \big(r^j_{t+1} - V^j_{\theta'}(s^\otimes_t) \big) + \sum^\infty_{t=0} \Gamma^j_{0:t} \Gamma^j(s^\otimes_{t+1}) V^j_{\theta'}(s^\otimes_{t+1}) ~\Big\vert~ F^j(\rho) = \infty \Big] \bigg) \label{eq:4}\\
        &= \sum_j w[j] \bigg( \mathbb{E}_{\rho} \Big[\sum^\infty_{t=0} \Gamma^j_{0:t} \big(r^j_{t+1} - V^j_{\theta'}(s^\otimes_t) + \Gamma^j(s^\otimes_{t+1}) V^j_{\theta'}(s^\otimes_{t+1}) \big) ~\Big\vert~ F^j(\rho) = \infty \Big] \bigg) \nonumber\\
        &= \sum_j w[j] \bigg( \mathbb{E}_{\rho} \Big[\sum^\infty_{t=0} \Gamma^j_{0:t} \big(r^j_{t+1} - V^j_{\theta'}(s^\otimes_t) + \Gamma^j(s^\otimes_{t+1}) \mathbb{E}[ V^j_{\theta'}(s^\otimes_{t+1}) \vert s^\otimes_t, a_t] \big) ~\Big\vert~ F^j(\rho) = \infty \Big] \bigg) \label{eq:5}\\
        &= \sum_j w[j] \bigg( \mathbb{E}_{\rho} \Big[\sum^\infty_{t=0} \Gamma^j_{0:t} A^j_{\theta'}(s^\otimes_{t},a_{t}) ~\Big\vert~ F^j(\rho) = \infty \Big] \bigg) \nonumber
    \end{align}
    In the deduction above: (\ref{eq:1}) follows from the inequality (\ref{helper1}) and the definition of $V_\theta$; (\ref{eq:2}) follows from (\ref{helper2}); (\ref{eq:3}) uses the facts that $\Gamma^j_{0:0} = 1$, that $V^j_{\theta'}(s^\otimes) = V^j_{\theta'}(s^\otimes_0)$ is a constant, and that each path $\rho$ where $F^j(\rho) = \infty$ is such that $\rho[0] = s^\otimes_0$; (\ref{eq:4}) follows from the fact that $\Gamma^j_{0:t+1} V^j_{\theta'}(s^\otimes_{t+1}) \geq 0$ for any $t$; and (\ref{eq:5}) follows from the tower property of conditional expectations.
\end{proof}
\begin{lemma}
    Consider a sequence of natural gradient updates $\{x^i_t\}_{0\leq t \leq T}$ found by \textsc{Almanac} such that $\Vert x^i_t \Vert_2 \leq X$ for all $t$. Let us write $\iota_{0:T} = \sum^T_{t=0} \iota_t$, and recall that $F^j(\rho)$ is the number of times a path $\rho$ in $G_B$ passes through the accepting set $F^j$ of automaton $B^j$. Let us write $\mathbb{E}_{\rho^*}$ instead of $\mathbb{E}_{\rho \sim \Pr^{\theta_*}_{G_B} (\cdot\vert s^\otimes), s^\otimes \sim \zeta^\otimes}$ and define $e^j_t$ by:
    $$e^j_t = \mathbb{E}_{\rho^*} \Big[ \sum^\infty_{\tau=0} \Gamma^j_{0:\tau} \Big( A^j_{\theta_t}(s^\otimes_\tau,a_\tau) - \psi^i_{\theta^i_t}(a^i_\tau\vert s^\otimes_\tau)^\top x^i_t \Big) ~\Big\vert~ F^j(\rho) = \infty \Big]$$
    where $\tau$ indexes $\rho$, i.e., $\rho[\tau] = s^\otimes_\tau$. Then we have:
    $$V_{\theta_*}(s^\otimes) - \lim_{T \rightarrow \infty} \mathbb{E}_{t \sim \iota_T} \big[ V_{\theta_t}(s^\otimes) \big] = \lim_{T \rightarrow \infty} \mathbb{E}_{t \sim \iota_T} \Big[ \sum_j w[j] e^j_t \Big]$$
    where we define the distribution $\iota_T$ over $t$ with $\iota_T(t) = \frac{\iota_t}{\iota_{0:T}}$.
\end{lemma}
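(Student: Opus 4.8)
The plan is to follow the two-part template of Agarwal et al.\ \cite{Agarwal2019}: an exact performance-difference identity to expose the advantage functions, followed by an online-mirror-descent (no-regret) argument controlling the natural-gradient recursion. First I would write the per-step optimality gap as an exact version of Lemma~\ref{performancedifference}. The inequalities in that lemma come only from replacing $V_{\theta_*}$ by its upper bound $\tfrac{1}{1-\gamma_V}$ and from extending a finite accepting sum to an infinite one; by instead keeping $V_{\theta_*}$ and conditioning on $F^j(\rho)=\infty$ (the runs on which $V^j$ actually accrues reward, via the construction behind Proposition~\ref{Scalar2LTL}) the telescoping becomes exact, giving
$$V_{\theta_*}(s^\otimes) - V_{\theta_t}(s^\otimes) = \sum_j w[j]\, \mathbb{E}_{\rho^*}\Big[\textstyle\sum_{\tau=0}^\infty \Gamma^j_{0:\tau} A^j_{\theta_t}(s^\otimes_\tau, a_\tau) ~\Big\vert~ F^j(\rho)=\infty\Big].$$
The normalisation implicit in the conditioning is exactly the acceptance probability $P^j = \min\!\big(\Pr^{\theta_*}_{G_B}(F^j(\rho)=\infty),1\big)$ that reappears in Theorem~\ref{opt}, and I would track it carefully when relating the conditioned expectation to the raw Kakade--Langford identity \cite{Kakade2002}.

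Next I would add and subtract the captured advantage $\psi^i_{\theta^i_t}(a^i_\tau\vert s^\otimes_\tau)^\top x^i_t$ inside the expectation. This splits the right-hand side into $\sum_j w[j] e^j_t$ --- precisely the residual in the lemma's definition of $e^j_t$ --- plus a ``regret'' term
$$g_t \coloneqq \sum_j w[j]\, \mathbb{E}_{\rho^*}\Big[\textstyle\sum_{\tau=0}^\infty \Gamma^j_{0:\tau}\, \psi^i_{\theta^i_t}(a^i_\tau\vert s^\otimes_\tau)^\top x^i_t ~\Big\vert~ F^j(\rho)=\infty\Big].$$
Averaging the identity over $t$ against $\iota_T$ and letting $T\to\infty$ then reduces the lemma to the single claim $\lim_{T\to\infty}\mathbb{E}_{t\sim\iota_T}[g_t]=0$.

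The hard part is establishing this limit as an exact (two-sided) zero rather than the usual one-sided no-regret bound. I would recognise the conditioned discounted sum in $g_t$ as a constant multiple of the occupancy inner product $\sum_{s^\otimes} d^j(s^\otimes)\sum_a \pi_*(a\vert s^\otimes)\,\psi^i_{\theta^i_t}(a^i\vert s^\otimes)^\top x^i_t$, so that $g_t$ is exactly the mirror-descent gradient term for the update $\theta^i_{t+1}=\theta^i_t+\iota_t x^i_t$. Using the $\sigma$-smoothness of $\log\pi^i$ (condition~6) and the factorisation $\log\pi=\sum_i\log\pi^i$, I would lower-bound the per-step decrease of the potential $D_t \coloneqq \sum_j w[j]\sum_{s^\otimes} d^j(s^\otimes)\,\KL\!\big(\pi_*(\cdot\vert s^\otimes)\,\big\Vert\,\pi_{\theta_t}(\cdot\vert s^\otimes)\big)$ by $\iota_t g_t - \tfrac{\sigma}{2}\iota_t^2 \Vert x^i_t\Vert_2^2$. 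Telescoping over $t<T$ gives $\sum_{t<T}\iota_t g_t = D_0 - D_T + O\!\big(\sum_{t<T}\iota_t^2\big)$, and the crucial observation is that this numerator is \emph{bounded}: $D_0$ is finite by the uniform initialisation (condition~8), $D_T\ge 0$ stays finite because the projection $\Omega_{\theta^i}$ confines each $\theta^i_t$ to a compact set and hence keeps $\pi_{\theta_t}$ away from the simplex boundary, and $\sum_t\iota_t^2<\infty$ together with $\Vert x^i_t\Vert_2\le X$ controls the second-order term. Dividing by $\iota_{0:T}\to\infty$ then sends $\mathbb{E}_{t\sim\iota_T}[g_t]$ to exactly $0$; it is the boundedness of the numerator, not merely its sign, that upgrades the standard regret bound to the equality claimed. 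The remaining steps --- interchanging the limit with the finite $j$-sum and absorbing the per-objective normalisation constants into the occupancy identity --- are routine, and the local/global distinction enters only through whether the factorised or shared parametrisation is used in the per-agent mirror-descent step.
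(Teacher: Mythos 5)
Your skeleton is the same as the paper's (which likewise follows Agarwal et al.: a performance-difference step to expose the advantages, followed by a smoothness/telescoping argument in which the second-order term vanishes by $\sum_t \iota_t^2 < \infty$ and the telescoped log-ratio term vanishes by the uniform initialisation and $\iota_{0:T} \to \infty$), and your treatment of the regret term $g_t$ is essentially the paper's argument in different clothing: your KL-potential telescoping over the $d^j$-weighted states is the same computation as the paper's direct telescoping of $\sum_{t=0}^{T} \log \big( \pi^i_{t+1}(a^i_\tau \vert s^\otimes_\tau) / \pi^i_t(a^i_\tau \vert s^\otimes_\tau) \big)$ inside the fixed expectation $\mathbb{E}_{\rho^*}$, bounded via condition 8 and $\mathbb{E}_{\rho^*} \big[ \sum_\tau \Gamma^j_{0:\tau} \,\big\vert\, F^j(\rho) = \infty \big] \leq M^j / (1-\gamma_V)$. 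The genuine gap is in your first step. The claimed exact identity
$$V_{\theta_*}(s^\otimes) - V_{\theta_t}(s^\otimes) = \sum_j w[j]\, \mathbb{E}_{\rho^*} \Big[ \sum^\infty_{\tau=0} \Gamma^j_{0:\tau} A^j_{\theta_t}(s^\otimes_\tau, a_\tau) ~\Big\vert~ F^j(\rho) = \infty \Big]$$
does not hold, for two reasons. First, $V_{\theta_*}(s^\otimes)$ is an \emph{unconditional} expectation that also collects return from paths with $F^j(\rho) < \infty$; conditioning on $F^j(\rho) = \infty$ amounts to replacing each $V^j_{\theta_*}(s^\otimes)$ by $\tfrac{1}{1-\gamma_V}$, which is an upper bound, and this is precisely where the paper's performance-difference lemma acquires its $\leq$ (it is stated and proved as an inequality, with a further nonnegative residual dropped when the finite sum over $t$ is extended to infinity). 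Second, even though the per-path telescoping is exact on $\{F^j(\rho) = \infty\}$ (since $\Gamma^j_{0:t} \to 0$ there), passing from the realised target $r^j_{\tau+1} + \Gamma^j(s^\otimes_{\tau+1}) V^j_{\theta_t}(s^\otimes_{\tau+1})$ to the advantage $A^j_{\theta_t}(s^\otimes_\tau, a_\tau)$ requires the tower property \emph{under the conditioned measure}, and conditioning on the tail event $\{F^j(\rho) = \infty\}$ tilts the one-step transition law away from $T^\otimes(\cdot \vert s^\otimes_\tau, a_\tau)$, so $\mathbb{E}\big[ V^j_{\theta_t}(s^\otimes_{\tau+1}) \,\big\vert\, s^\otimes_\tau, a_\tau, F^j(\rho) = \infty \big]$ need not equal $\mathbb{E}\big[ V^j_{\theta_t}(s^\otimes_{\tau+1}) \,\big\vert\, s^\otimes_\tau, a_\tau \big]$. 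Your exact decomposition of the optimality gap into $\sum_j w[j] e^j_t$ plus a pure score term $g_t$ is therefore not available, and with it the two-sided argument collapses: however cleanly you show $\mathbb{E}_{t \sim \iota_T}[g_t] \to 0$ (and your two-sided Taylor bound does achieve this), you can only conclude the $\leq$ direction of the displayed claim.

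In fairness, the ``$=$'' in the statement is stronger than what the paper's own appendix proof delivers: that argument also establishes only $V_{\theta_*}(s^\otimes) - \lim_{T} \mathbb{E}_{t \sim \iota_T}[V_{\theta_t}(s^\otimes)] \leq \lim_{T} \mathbb{E}_{t \sim \iota_T}\big[ \sum_j w[j] e^j_t \big]$, which is all that Theorem 1 subsequently uses. The one-sided half of your proposal reproduces exactly that inequality along the paper's lines, so the fix is simply to abandon the exactness claim in step one, invoke the performance-difference lemma as the inequality it is, and prove the $\leq$ version. Two smaller points: the acceptance probability $P^j$ you propose to track plays no role in this lemma (it enters only in Theorem 1's step 3), and your boundedness claim for $D_T$ requires $\pi^i(a^i \vert s^\otimes; \theta) > 0$ uniformly over the compact $\Theta^i$, which holds for softmax-type classes but is not among the stated conditions --- the paper sidesteps this by telescoping only against the upper bound $\log \vert A^i \vert$ furnished by the uniform initialisation.
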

\begin{proof}
    Our proof follows that of Agarwal et al. with adjustments made for the use of our state-dependent discount rates $\Gamma^j$ and non-constant learning rate $\iota$ \cite{Agarwal2019}. In what follows we denote by $\theta^i_t$ the value of parameter $\theta^i$ at time $t$ and $\pi^i_t$ the corresponding policy. We first observe that if a function $f(x)$ is $\sigma$-smooth then by Taylor's theorem we have:
    $$\big\vert f(x') - f(x) - (x' - x)^\top \nabla_x f(x) \big\vert \leq \frac{\sigma}{2} \Vert x ' - x \Vert^2_2$$
    Thus, by condition 6 we have:
        $$\log \frac{\pi^i_{t+1}(a^i \vert s^\otimes)}{\pi^i_t(a^i \vert s^\otimes)}
        \geq (\theta^i_{t+1} - \theta^i_{t})^\top \nabla_{\theta^i} \log \pi^i_t(a^i \vert s^\otimes) - \frac{\sigma}{2}\Vert \theta^i_{t+1} - \theta^i_{t} \Vert^2_2 = \iota_t \psi^i_{\theta^i_t}(a^i\vert s^\otimes)^\top x^i_t - (\iota_t)^2 \frac{\sigma}{2}\Vert x^i_t \Vert^2_2$$
    Let $\theta_*$ be a set of optimal parameters, then for any $\theta_t$ and any state $s^\otimes$ we have $V_{\theta_*}(s^\otimes) \geq V_{\theta_t}(s^\otimes)$. Using the above inequality and Lemma \ref{performancedifference} we obtain the following, where we simplify notation by writing $\mathbb{E}_{\rho^\zeta_*}$ in place of $\mathbb{E}_{\rho \sim \Pr^{\theta_*}_{G_B} (\cdot\vert s^\otimes), s^\otimes \sim \zeta^\otimes}$:
    \begin{align*}
        &\sum_j w[j] \bigg( \mathbb{E}_{\rho^\zeta_*} \Big[ \sum^\infty_{\tau=0} \Gamma^j_{0:\tau} \log \frac{\pi^i_{t+1}(a^i_\tau \vert s^\otimes_\tau)}{\pi^i_t(a^i_\tau \vert s^\otimes_\tau)} ~\Big\vert~ F^j(\rho) = \infty \Big] \bigg)\\
        &\geq \sum_j w[j] \bigg( \mathbb{E}_{\rho^\zeta_*} \Big[ \sum^\infty_{\tau=0} \Gamma^j_{0:\tau} \Big( \iota_t \psi^i_{\theta^i_t}(a^i_\tau\vert s^\otimes_\tau)^\top x^i_t - (\iota_t)^2 \frac{\sigma}{2}\Vert x^i_t \Vert^2_2 \Big) ~\Big\vert~ F^j(\rho) = \infty \Big] \bigg)\\
        &= \sum_j w[j] \bigg( \iota_t \mathbb{E}_{\rho^\zeta_*} \Big[ \sum^\infty_{\tau=0} \Gamma^j_{0:\tau} \psi^i_{\theta^i_t}(a^i_\tau\vert s^\otimes_\tau)^\top x^i_t \Big] - (\iota_t)^2 \frac{\sigma}{2}\Vert x^i_t \Vert^2_2 \mathbb{E}_{\rho^\zeta_*} \Big[ \sum^\infty_{\tau=0} \Gamma^j_{0:\tau} ~\Big\vert~ F^j(\rho) = \infty \Big] \bigg)\\
        &= - \sum_j w[j] \bigg( \iota_t \mathbb{E}_{\rho^\zeta_*} \Big[ \sum^\infty_{\tau=0} \Gamma^j_{0:\tau} \Big( A^j_{\theta_t}(s^\otimes_\tau,a_\tau) - \psi^i_{\theta^i_t}(a^i_\tau\vert s^\otimes_\tau)^\top x^i_t \Big) ~\Big\vert~ F^j(\rho) = \infty \Big]\\
        &- \iota_t \mathbb{E}_{\rho^\zeta_*} \Big[ \sum^\infty_{\tau=0} \Gamma^j_{0:\tau} A^j_{\theta_t}(s^\otimes_\tau,a_\tau) ~\Big\vert~ F^j(\rho) = \infty \Big]
        + (\iota_t)^2 \frac{\sigma}{2}\Vert x^i_t \Vert^2_2 \mathbb{E}_{\rho^\zeta_*} \Big[ \sum^\infty_{\tau=0} \Gamma^j_{0:\tau} ~\Big\vert~ F^j(\rho) = \infty \Big] \bigg)\\
        &\geq - \sum_j w[j] \bigg( \iota_t \mathbb{E}_{\rho^\zeta_*} \Big[ \sum^\infty_{\tau=0} \Gamma^j_{0:\tau} \Big( A^j_{\theta_t}(s^\otimes_\tau,a_\tau) - \psi^i_{\theta^i_t}(a^i_\tau\vert s^\otimes_\tau)^\top x^i_t \Big) ~\Big\vert~ F^j(\rho) = \infty \Big]\\
        &+ (\iota_t)^2 \frac{\sigma}{2}\Vert x^i_t \Vert^2_2 \mathbb{E}_{\rho^\zeta_*} \Big[ \sum^\infty_{\tau=0} \Gamma^j_{0:\tau} ~\Big\vert~ F^j(\rho) = \infty \Big] \bigg)
        + \iota_t \big( V_{\theta_*}(s^\otimes) - V_{\theta_t}(s^\otimes) \big)
    \end{align*}
    Rearranging the inequality above we have:
    \begin{align*}
        \iota_t \big( V_{\theta_*}(s^\otimes) - V_{\theta_t}(s^\otimes) \big)
        &\leq \sum_j w[j] \bigg(
        (\iota_t)^2 \frac{\sigma}{2}\Vert x^i_t \Vert^2_2 \mathbb{E}_{\rho^\zeta_*} \Big[ \sum^\infty_{\tau=0} \Gamma^j_{0:\tau} ~\Big\vert~ F^j(\rho) = \infty \Big]\\
        &+ \mathbb{E}_{\rho^\zeta_*} \Big[ \sum^\infty_{\tau=0} \Gamma^j_{0:\tau} \log \frac{\pi^i_{t+1}(a^i_\tau \vert s^\otimes_\tau)}{\pi^i_t(a^i_\tau \vert s^\otimes_\tau)} ~\Big\vert~ F^j(\rho) = \infty \Big]\\
        &+ \iota_t \mathbb{E}_{\rho^\zeta_*} \Big[ \sum^\infty_{\tau=0} \Gamma^j_{0:\tau} \Big( A^j_{\theta_t}(s^\otimes_\tau,a_\tau) - \psi^i_{\theta^i_t}(a^i_\tau\vert s^\otimes_\tau)^\top x^i_t \Big) ~\Big\vert~ F^j(\rho) = \infty \Big] \bigg)
    \end{align*}
    Next, we sum our inequality above over timesteps $t$ between $0$ and $T$ and divide by $\iota_{0:T} = \sum^T_{t=0} \iota_t$:
    \begin{align*}
        &V_{\theta_*}(s^\otimes) - \frac{1}{\iota_{0:T}} \sum^T_{t=0}\iota_t V_{\theta_t}(s^\otimes)\\
        &\leq \frac{1}{\iota_{0:T}} \sum^T_{t=0} (\iota_t)^2 \sum_j w[j] 
         \frac{\sigma}{2}\Vert x^i_t \Vert^2_2 \mathbb{E}_{\rho^\zeta_*} \Big[ \sum^\infty_{\tau=0} \Gamma^j_{0:\tau} ~\Big\vert~ F^j(\rho) = \infty \Big] \\
        &+ \frac{1}{\iota_{0:T}} \sum^T_{t=0} \sum_j w[j] \mathbb{E}_{\rho^\zeta_*} \Big[ \sum^\infty_{\tau=0} \Gamma^j_{0:\tau} \log \frac{\pi^i_{t+1}(a^i_\tau \vert s^\otimes_\tau)}{\pi^i_t(a^i_\tau \vert s^\otimes_\tau)} ~\Big\vert~ F^j(\rho) = \infty \Big] \\
        &+ \frac{1}{\iota_{0:T}} \sum^T_{t=0}\iota_t \sum_j w[j] \mathbb{E}_{\rho^\zeta_*} \Big[ \sum^\infty_{\tau=0} \Gamma^j_{0:\tau} \Big( A^j_{\theta_t}(s^\otimes_\tau,a_\tau) - \psi^i_{\theta^i_t}(a^i_\tau\vert s^\otimes_\tau)^\top x^i_t \Big) ~\Big\vert~ F^j(\rho) = \infty \Big]
    \end{align*}
    We conclude the proof by showing that the first two of the three terms on the right side of the inequality tend to 0 as $T$ tends to infinity. We consider these two terms in order. Let $M^j_\rho(k)$ be the number of steps along trajectory $\rho$ between the $k^\text{th}$ reward and preceding reward (or simply from the timestep 0 if $k = 1$). Then we have the following:
    \begin{align*}
        \mathbb{E}_{\rho^\zeta_*} \Big[ \sum^\infty_{\tau=0} \Gamma^j_{0:\tau} ~\Big\vert~ F^j(\rho) = \infty \Big]
        &= \mathbb{E}_{\rho^\zeta_*} \Big[ \sum^{\infty}_{k=0} M^j_\rho(k) \gamma^k_V ~\Big\vert~  F^j(\rho) = \infty \Big]\\
        &= \sum^{\infty}_{k=0} \mathbb{E}_{\rho^\zeta_*} \big[ M^j_\rho(k) ~\big\vert~ F^j(\rho) = \infty \big] \gamma^k_V\\
        &\leq \sum^{\infty}_{k=0} M^j \gamma^k_V = \frac{M^j}{1 - \gamma_V}
    \end{align*}
    where $M^j = \max_k \mathbb{E}_{\rho^\zeta_*} \big[ M^j_\rho(k) ~\big\vert~ F^j(\rho) = \infty \big]$ is a finite number. Next, note that in \textsc{Almanac} we bound the size of $\Vert x^i_t \Vert_2$ by a constant, $X$. Using the above facts and our choice of discount factor $\iota$ which obeys $\sum^\infty_{t=0} \iota_t = \infty$ and $\sum^\infty_{t=0} (\iota_t)^2 < \infty$ then we have:
    \begin{align*}
        \lim_{T \rightarrow \infty} \frac{1}{\iota_{0:T}} \sum^T_{t=0} (\iota_t)^2 \sum_j w[j] 
        \frac{\sigma}{2}\Vert x^i_t \Vert^2_2 \mathbb{E}_{\rho^\zeta_*} \Big[ \sum^\infty_{\tau=0} \Gamma^j_{0:\tau} ~\Big\vert~ F^j(\rho) = \infty \Big]
        &\leq \lim_{T \rightarrow \infty} \frac{1}{\iota_{0:T}} \sum^T_{t=0} (\iota_t)^2  
        \frac{\sigma}{2}X^2 \frac{\sum_j w[j] M^j}{1 - \gamma_V}\\
        &= \lim_{T \rightarrow \infty} \frac{\sum^T_{t=0} (\iota_t)^2}{\sum^T_{t=0} \iota_t} \frac{\sigma}{2}X^2 \frac{\sum_j w[j] M^j}{1 - \gamma_V}\\
        &= 0
    \end{align*}
    Moving to the second term, by telescoping the terms summing over $t$ we have the following:
    \begin{align*}
        &\frac{1}{\iota_{0:T}} \sum^T_{t=0} \sum_j w[j] \mathbb{E}_{\rho^\zeta_*} \Big[ \sum^\infty_{\tau=0} \Gamma^j_{0:\tau} \log \frac{\pi^i_{t+1}(a^i_\tau \vert s^\otimes_\tau)}{\pi^i_t(a^i_\tau \vert s^\otimes_\tau)} ~\Big\vert~ F^j(\rho) = \infty \Big]\\
        &= \frac{1}{\iota_{0:T}} \sum_j w[j] \mathbb{E}_{\rho^\zeta_*} \Big[ \sum^\infty_{\tau=0} \Gamma^j_{0:\tau} \sum^T_{t=0} \log \frac{\pi^i_{t+1}(a^i_\tau \vert s^\otimes_\tau)}{\pi^i_t(a^i_\tau \vert s^\otimes_\tau)} ~\Big\vert~ F^j(\rho) = \infty \Big]\\
        &= \frac{1}{\iota_{0:T}} \sum_j w[j] \mathbb{E}_{\rho^\zeta_*} \Big[ \sum^\infty_{\tau=0} \Gamma^j_{0:\tau} \log \frac{\pi^i_{T+1}(a^i_\tau \vert s^\otimes_\tau)}{\pi^i_0(a^i_\tau \vert s^\otimes_\tau)} ~\Big\vert~ F^j(\rho) = \infty \Big]\\
        &\leq \frac{1}{\iota_{0:T}} \sum_j w[j] \mathbb{E}_{\rho^\zeta_*} \Big[ \sum^\infty_{\tau=0} \Gamma^j_{0:\tau} \log \frac{1}{\pi^i_0(a^i_\tau \vert s^\otimes_\tau)} ~\Big\vert~ F^j(\rho) = \infty \Big]\\
        &= \frac{1}{\iota_{0:T}} \sum_j w[j] \mathbb{E}_{\rho^\zeta_*} \Big[ \sum^{\infty}_{\tau=0} \Gamma^j_{0:\tau} \log \vert A^i \vert \Big]
    \end{align*}
    where the final line follows from condition 8. Given our previous argument about the boundedness of $\mathbb{E}_{\rho^\zeta_*} \Big[ \sum^\infty_{\tau=0} \Gamma^j_{0:\tau} ~\Big\vert~ F^j(\rho) = \infty \Big]$ and the fact that $\iota_{0:\infty} = \sum^\infty_{t=0} \iota_t = \infty$ then we conclude our proof by observing that:
    $$\lim_{T \rightarrow \infty} \frac{1}{\iota_{0:T}} \sum^T_{t=0} \sum_j w[j] \mathbb{E}_{\rho^\zeta_*} \Bigg[ \sum^\infty_{\tau=0} \Gamma^j_{0:\tau} \log \frac{\pi^i_{t+1}(a^i_\tau \vert s^\otimes_\tau)}{\pi^i_t(a^i_\tau \vert s^\otimes_\tau)} ~\Bigg\vert~ F^j(\rho) = \infty \Bigg]
    \leq 
    \lim_{T \rightarrow \infty} \frac{1}{\iota_{0:T}} \frac{\sum_j w[j] M^j}{1 - \gamma_V} \log \vert A^i \vert = 0$$
\end{proof}

\begin{theorem}
    Given an MG $G$ and LTL objectives $\{\varphi^j\}_{1 \leq j \leq m}$ (each equivalent to an LDBA $B^j$), let $G_B = G \otimes B^1 \otimes \cdots \otimes B^m$ be the resulting product MG with newly defined reward functions $R^j_\otimes$ and state-dependent discount functions $\Gamma^j$. Assume that $\gamma_V$ satisfies Proposition \ref{Scalar2LTL}, that the learning rates $\alpha, \beta^V, \beta^U, \eta, \iota$ are as in (9) and that conditions 1--8 hold. Then if each agent $i$ uses \emph{local} (\emph{global}) parameters $\theta^i$ with \emph{local} policy $\pi^i_{\theta^i}$ (\emph{global} policy $\pi^i_{\theta^i} = \pi_\theta$) then as $T \rightarrow \infty$,  \textsc{Almanac} converges to within 
    $$\lim_{T \rightarrow \infty} \mathbb{E}_{t \sim \iota_T} \left[ \sum_j w[j] \sqrt{e^j_{approx}} \frac{M^j}{(1 - \gamma_V)P^j} \right]$$
    of a \emph{local} (\emph{global}) optimum of $\sum_j w[j] \Pr^{\pi}_G(s \models \varphi^j)$, where $P^j$ and $M^j$ are constants.
\end{theorem}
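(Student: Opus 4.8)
The plan is to carry out a three-step multi-timescale stochastic approximation analysis in the spirit of Borkar, exploiting the separation of timescales imposed by (\ref{eq:learningrates}): whenever we analyse a given iterate we treat all more slowly updated quantities as quasi-static and all more quickly updated quantities as already converged. Concretely, the critics $v^j,u^j$ move fastest, then the patient natural gradient, then the hasty natural gradient, then the Lagrange multipliers $\lambda^i$, and finally the policy parameters $\theta^i$ in the outer loop. Once all inner iterates are shown to converge, the actor update becomes a slow stochastic gradient step whose limiting behaviour I would control using Lemma~\ref{noregret}; translating a near-optimum of the patient value function $V$ back into a near-optimum of $\sum_j w[j]\Pr^\pi_G(s \models \varphi^j)$ is then immediate from Proposition~\ref{Scalar2LTL}, since $\gamma_V$ is chosen to satisfy it.

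For the critics (Step 1) I would invoke the convergence theorem of Tsitsiklis and Van~Roy. The hasty recursion (\ref{eq:hasty_update}) is exactly linear semi-gradient TD(0) with constant discount $\gamma_U$, so its convergence to the unique TD fixed point is immediate under conditions 1--4 (finiteness, irreducibility of the chain induced by any fixed $\theta$, and the full-rank feature assumption on $\Phi$). For the patient recursion (\ref{eq:patient_update}) the obstacle is that raw one-step targets on reward-free loops are not contractive; I would instead write the relevant Bellman operator in terms of the on-the-fly $k$-step target, observe that by construction exactly one factor of $\gamma_V$ is applied in each accumulated target, and hence recover a $\gamma_V$-contraction to which the same Tsitsiklis--Van~Roy argument applies.

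For the natural gradients (Step 2), with the critics treated as converged and the policy frozen, I would verify that the updates (\ref{eq:nat_grad_updates}) are unbiased estimates of $-\nabla_{x^i}L^i_V$ and of the Lagrangian gradients, using the truncation/re-weighting device of Remark~\ref{unbiased} to sample correctly from the discounted state distributions $d^j$ despite the state-dependent discounts. Because $L^i_V$ and $L^i_U$ are convex, the associated ODEs displayed in the sketch lexicographically minimise first $L^i_V$ and then $L^i_U$ subject to the constraint, so the inner loop returns an $x^i_*$ satisfying (\ref{natgrad}); this is where the separation $\lim_t \beta^U_t/\beta^V_t = \lim_t \eta_t/\beta^U_t = 0$ is essential.

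The hard part is Step 3, the actor analysis and the final error bound. Here I would apply Lemma~\ref{noregret}, whose right-hand side is $\lim_T \mathbb{E}_{t\sim\iota_T}[\sum_j w[j] e^j_t]$, and bound each $e^j_t$ by $\sqrt{e^j_{approx}}\,M^j/((1-\gamma_V)P^j)$. The subtlety is threefold. First, $e^j_t$ is a $\Gamma^j$-discounted sum under the \emph{optimal} trajectory law conditioned on the accepting event $F^j(\rho)=\infty$, whereas the approximation error $e^j_{approx}$ of condition 5 is measured under $\nu^j_*$; reconciling these requires a change of measure whose normalisation supplies the factor $1/P^j$ (with $P^j$ the probability mass on accepting runs) and whose distortion is controlled by the finite relative condition number (condition 7). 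Second, the $\Gamma^j$-weighted horizon must be bounded by $M^j/(1-\gamma_V)$ exactly as in the proof of Lemma~\ref{noregret}, where $M^j$ is the expected maximal inter-reward gap. Third, passing from the advantage-approximation error in $L^i_V$ to the bound $\sqrt{e^j_{approx}}$ is a Cauchy--Schwarz/Jensen step together with condition 5. Finally I would treat the local/global dichotomy: with local parameters the gradient of the Lemma optimises each $\theta^i$ against the others' fixed policies, so the stationary joint policy is an MPE, i.e.\ an SPE and a \emph{local} optimum of the LTL objective; with a shared seed and $\theta^i=\theta$ all agents perform identical updates, reducing the scheme to single-agent natural policy gradient on the full joint policy and yielding, by the same argument, the team-optimal \emph{global} optimum. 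The main obstacle throughout is keeping the change-of-measure and conditioning bookkeeping correct in the presence of the state-dependent discounts $\Gamma^j$, which is precisely the novelty relative to Agarwal et al.
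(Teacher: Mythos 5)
Your proposal is correct and follows essentially the same route as the paper's proof: the same three-step multi-timescale stochastic approximation analysis (Tsitsiklis--Van Roy for the critics, with the patient recursion made a contraction via the on-the-fly $k$-step Bellman operator; convex ODE and lexicographic-KKT arguments for the natural gradients; and Lemma~\ref{noregret} combined with conditioning on $\{F^j(\rho)=\infty\}$, the inter-reward-gap bound $M^j/(1-\gamma_V)$, and a Jensen step to obtain $\sqrt{e^j_{approx}}\,M^j/((1-\gamma_V)P^j)$), together with the identical treatment of the local/global dichotomy. The one misattribution is your claim that the measure-change distortion is controlled by the relative condition number (condition 7): in the paper condition 5 is stated directly under $\nu^j_*$, so the factor $1/P^j$ arises purely from conditioning on the accepting event (dropping the nonnegative non-accepting mass, which the paper further shows vanishes via a BSCC argument), while condition 7 enters only in bounding the optimisation-error term $\Vert x^{i*}_t - x^i_t\Vert_{\Sigma^t_{j*}}$ --- a term you never isolate but which vanishes anyway because the two-timescale inner loop gives $e_{\text{stat}}=0$, exactly the exact-convergence assumption you make implicitly.
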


\begin{proof}
    Our proof proceeds via a multi-timescale stochastic approximation analysis and is asymptotic in nature. We consider the convergence of three quantities in turn: state values, natural gradients, and finally the policy. In each step of the proof, we also divide our attention between the local and global settings, where required.

    \textbf{Step 1.} Let us first consider the behaviour of the critic. Our proof of convergence follows that of Tsitsiklis and Van Roy \cite{Tsitsiklis1997}. Note that as the rewards $r^j_t$ and states $s^\otimes_t$ are commonly observed by all agents, and we assume they share learning rate parameters $\alpha^V$ and $\alpha^U$, then each agent's estimate of $v^j$ and $u^j$ will be identical at each timestep $t$. Note also that as $v^j$ and $u^j$ are updated on a faster timescale than $\theta^i$ then we may consider $\theta^i$ fixed for the purpose of our analysis. Thus, the distinction between local and global policies has no impact on this step of our proof.
    
    In the most general setting it will not be possible to guarantee that the linear approximation of each $V^j_\theta$ and $U^j_\theta$ using features $\phi$ will have zero error, and thus we may only prove convergence to within a small neighbourhood of (lexicographic) local optimum. To elucidate, let us consider the hasty value function, the updates to which simply form the classic linear semi-gradient temporal difference algorithm \cite{Sutton2018}. Let $\bm{r}^j$ be the column vector with entries $R^j(s^\otimes_k)$, $Y_U^j$ the diagonal matrix with entries $\gamma_U$ on its diagonal, and $C_\theta$ be the diagonal matrix with entries $d(s^\otimes_k)$ on its diagonal. Let $\Phi$ be the matrix with row $k$ given by $\phi(s^\otimes_k)^\top$. Then the vector $\bm{u}^j$ with entries $\hat{U}^j_\theta(s^\otimes_k)$ can be written as $\bm{u}^j = \Phi u^j$. Finally, let $P_\theta$ be the matrix with entries given by $P_\theta[x,y] = \sum_a\pi(a\vert s_x^\otimes ; \theta)T(s_{y}^\otimes\vert s_x^\otimes,a)$ and $W^j_\theta$ be the function given by $W^j_\theta(\bm{x}) = P_\theta(\bm{r}^j + C_\theta \bm{x})$. Then we define $\widehat{\bm{u}^j} \coloneqq \Phi u^j_*(\theta)$ to be the (unique) \emph{TD fixed point} satisfying:
\begin{align}
    \label{eq:tdfp}
    \Phi^\top C_\theta \widehat{\bm{u}^j} = \Phi^\top C_\theta W^j_\theta (\widehat{\bm{u}^j})
\end{align}
for each $i \in \{1,\ldots,m\}$ and we write $\widehat{U^j_\theta(s)} = \phi(s)^\top u^j_*(\theta)$ as the linear approximation of the value of state $s^\otimes$ with respect to $K^j$. Similarly, we have $\widehat{K^j(\theta)} = \sum_{s^\otimes} \zeta(s^\otimes) \widehat{U^j_\theta (s^\otimes)}$. Akin to other results in the literature, we show below that \textsc{Alamnac} finds a lexicographic optimum with respect to the objectives $\widehat{J} = \sum_j w[j] \widehat{J^j}$ (where $\widehat{J^j}$ is defined analogously to $\widehat{K^j}$) and $\widehat{K} = \sum_j w[j] \widehat{K^j}$, which is in turn within some neighbourhood of a lexicographic local optimum with respect to $J$ and $K$. When $\max_{\theta} \sum_{s^\otimes} d^j(s^\otimes) \big\vert V^{j*}_\theta(s^\otimes) -  \widehat{V^j_\theta(s^\otimes)} \big\vert$ and $\max_{\theta} \sum_{s^\otimes} c(s^\otimes) \big\vert U^{j*}_\theta(s^\otimes) -  \widehat{U^j_\theta(s^\otimes)} \big\vert$ are small this neighbourhood will be small; see, e.g., Lemma 6 and Theorem 2 in \cite{Bhatnagar2009}. Equally, when one can ensure the use of a \emph{compatible} approximation then one can reduce this neighbourhood to just those lexicographic optima with respect to $J$ and $K$ \cite{Sutton1999}. 
So as to consider the most general case, in what remains we therefore consider the objectives $\widehat{J}$ and $\widehat{K}$ instead of $J$ and $K$, though to avoid clutter we drop the $~\widehat{}~$ notation. We emphasise, however, that we do not make any assumptions on the bounds of the approximation error. Thus, the size of the neighbourhood of solutions we guarantee convergence to relies on the accuracy of the approximation used for the critic.

As stated above, the updates to each $u^j$ simply form the classic linear semi-gradient temporal difference algorithm \cite{Sutton2018} which, under conditions 1--4, is known to converge with probability one to some $u^j_*(\theta)$ where $\widehat{\bm{u}^j} \coloneqq \Phi u^j_*(\theta)$ is the (unique) TD fixed point satisfying equation (\ref{eq:tdfp}). See \cite{Tsitsiklis1997} for the original proof (Theorem 1) and \cite{Sutton2018} for a more recent exposition (Section 9.4). All that remains is to provide an analogous result for each $v^j$. 
    Let $Y_Y^j$ be the diagonal matrix with entries $\Gamma^j(s^\otimes_k)$ on its diagonal, and $D_j$ be the diagonal matrix with entries $d^j(s^\otimes_k)$ on its diagonal. Then define $X^j_\theta$ to be the function given by $X^j_\theta(\bm{x}) = P_\theta(\bm{r}^j + Y_V^j \bm{x})$. Then we define $\widehat{\bm{v}^j} \coloneqq \Phi v^j_*(\theta)$ to be a TD fixed point satisfying:
\begin{align}
    \label{eq:patient_bellman}
    \Phi^\top D^j_\theta \widehat{\bm{v}^j} = \Phi^\top D^j_\theta X^j_\theta \widehat{\bm{v}^j} = \Phi^\top D^j_\theta (X^j_\theta)^k (\widehat{\bm{v}^j})
\end{align}
which is simply the Bellman equation given in (\ref{eq:tdfp}) written for $k$ steps. When using a state-dependent discount function whose value is sometimes 1, one may introduce extra fixed points of the standard Bellman equation (see Example \ref{ex:non-conv} for a worked example of how spurious fixed points may be reached if not using a patient update scheme). This is because $X^j_\theta$ no longer forms a contraction due to discount matrix $Y^j_V$ having some entries that are equal to 1. However, by using a patient update (after $k$ steps) then we effectively update each element of $\bm{v}^j$ using the operator $(X^j_\theta)^k$ where $k$ varies, which \emph{is} a contraction with respect to each state the due to discount being less than $1$. 

Let $m^j$ be maximum number of timesteps such that there exists a fully mixed joint policy $\pi$ (i.e., one in which every joint action is played with non-zero probability at every state) parametrised by $\theta$ with:
$$\sum_\rho \Pr^\theta_{G_B}(\rho) \sum^{m^j-1}_{t=0} R^j(\rho[t]) = 0$$
and $V^{j*}_\theta(\rho[m^j]) > 0$ for every path $\rho$ such that $\Pr^\pi_{G_B}(\rho) > 0$, where $V^{j*}_\theta$ is the \textit{true} patient value function for specification $\varphi^j$ that we are attempting to learn. In other words, $m^j$ is the maximum length of time we may possibly continue for in $G_B$ under any fully mixed policy without any chance of seeing a reward from $R^j_\otimes$ and whilst not entering a state from which no such rewards can possibly be observed in future (meaning that $V^{j*}_\pi(\rho[m^j]) = 0$). First, note that due to our policy parametrisation in \textsc{Almanac}, $\pi$ is in fact always fully mixed. Second, observe that each $m^j$ is always finite and in fact bounded by $\vert S^\otimes \vert$. Then the $(X^j_\theta)^{m^j}$ is a contraction with respect to $D^j_\theta$ (as opposed to an individual state), as can be seen by replacing the relevant matrices and vectors in Lemma 4 of \cite{Tsitsiklis1997}. Thus, following identical arguments as this previous work, the fixed point $\widehat{\bm{v}^j}$ for $k = m^j$ in (\ref{eq:patient_bellman}) is unique and is converged upon with probability one using the patient TD update scheme.

    \textbf{Step 2.} Due to the learning rates chosen according to (9) in the main manuscript, we may consider the more slowly updated parameters fixed for the purposes of analysing the convergence of more quickly updated parameters \cite{Borkar2008}. The inner loop of \textsc{Alamanac} terminates only when $x^i$ has converged and given that $x^i$ is updated more slowly than the critics we may assume that each $v^j$ and $u^j$ also converges within this loop to the TD fixed points described in the previous step. Let us now consider the convergence of $x^i$. Here, \textsc{Almanac} solves two optimisation problems simultaneously, on different timescales. On the faster timescale we update $x^i$ using the patient critic $V_\theta$ by minimising the loss given by:
    \begin{align*}
    L^i_V(x^i_V;\theta,\nu) \coloneqq
    \sum_j w[j] L^i_{V^j}(x^i;\theta,\nu^j_{\theta,\zeta})
    = \sum_j w[j] \mathbb{E}_{(s^\otimes,a) \sim \nu^j_{\theta,\zeta}} \Big[\big\vert\psi^i_{\theta^i}(a^i\vert s^\otimes)^\top x^i - A^j_\theta(s^\otimes, a)\big\vert\Big]
    \end{align*}
    On the slower timescale we update $x^i$ according to the Lagrangian given by:
    \begin{align*}
    \max_{\lambda^i \geq 0} \min_{x^i} ~~~~~ L^i_U(x^i;\theta,\mu) &+ \lambda^i \big[L^i_V(x^i;\theta,\nu) - l^i\big]
    \end{align*}
    where we have:
    \begin{align*}
    L^i_U(x^i;\theta,\mu) \coloneqq
    \sum_j w[j] L^i_{U^j}(x^i;\theta,\mu^j_{\theta,\zeta})
    = \sum_j w[j] \mathbb{E}_{(s^\otimes,a) \sim \mu^j_{\theta,\zeta}} \Big[\big\vert\psi^i_{\theta^i}(a^i\vert s^\otimes)^\top x^i - Z^j_\theta(s^\otimes, a)\big\vert\Big]
    \end{align*}
    and recall that for $U^j_\theta$ we use $Z^j_\theta$ instead of $A^j_\theta$ and $\mu^j$ instead of $\nu^j$. This allows us to minimise $x^i$ with respect to $L^i_U(x^i;\theta,\mu)$ while not increasing the current loss according to $L^i_V$. As these latter updates occur more slowly, we may consider $x^i$ fixed with respect to them for the purpose of analysing the former. In particular, note that the updates to $x^i$ given in the paper can in fact be split up into two separate updates:
    \begin{align}
    x^i &\gets \Omega_{x^i} \bigg[ x^i + \beta^V_t \sum_j w[j] \chi^{V^j}_t \psi^i_{\theta^i} (a^i_t \vert s^\otimes_t) \bigg] \label{eq:patgradup}\\
    x^i &\gets \Omega_{x^i} \bigg[ x^i + \beta^U_t \sum_j w[j] \big( \chi^{U^j}_t + \lambda^i \chi^{V^j}_t \big) \psi^i_{\theta^i} (a^i_t \vert s^\otimes_t) \bigg] \label{eq:hasgradup}
    \end{align}
    where recall that $\chi^{V^j}_t \coloneqq \Gamma^j_{1:t} \text{sgn}\big( \hat{\delta}^{V^j}_{t:t+1} - \psi^i_{\theta^i}(a^i\vert s^\otimes)^\top x^i\big)$ and $\chi^{U^j}_t \coloneqq \gamma^t_U \text{sgn}\big( \hat{\delta}^{U^j}_{t:t+1} - \psi^i_{\theta^i}(a^i\vert s^\otimes)^\top x^i\big)$. It is well-known\footnote{See, e.g., Lemma 3 in \cite{Bhatnagar2009} for a simple proof in the limit-average setting.} that the TD error $\hat{\delta}^{V^j}_{t:t+1}$ forms an unbiased estimate of $\hat{A}^j_\theta(s_t^\otimes, a_t)$ and given this, it can thus easily be seen that:
    \begin{align*}
        \mathbb{E}_\theta \Big[\sum_j w[j] \chi^{V^j}_t \psi^i_{\theta^i} (a^i_t \vert s^\otimes_t)\Big]
        &= \sum_j w[j] \mathbb{E}_\theta \Big[ \Gamma^j_{1:t} \text{sgn}\big( \hat{\delta}^{V^j}_{t:t+1} - \psi^i_{\theta^i}(a^i_t\vert s^\otimes_t)^\top x^i\big) \psi^i_{\theta^i} (a^i_t \vert s^\otimes_t) \Big]\\
        &= \sum_j w[j]  \mathbb{E}_\theta \Big[ \Gamma^j_{1:t} \text{sgn}\big( \hat{A}^j_\theta(s_t^\otimes, a_t) - \psi^i_{\theta^i}(a^i_t\vert s^\otimes_t)^\top x^i\big) \psi^i_{\theta^i} (a^i_t \vert s^\otimes_t) \Big]\\
        &= \sum_j w[j]  \mathbb{E}_{s,a \sim \nu_i^\theta} \Big[ \text{sgn}\big( \hat{A}^j_\theta(s^\otimes, a) - \psi^i_{\theta^i}(a^i\vert s^\otimes)^\top x^i\big) \psi^i_{\theta^i} (a^i \vert s^\otimes) \Big]\\
        &= - \sum_j w[j] \nabla_{x^i} L^i_{V^j}(x^i;\theta,\nu^j_{\theta,\zeta})\\
        & = - \nabla_{x^i} L^i_V(x^i;\theta,\nu)
    \end{align*}
    and hence that the update rule (\ref{eq:patgradup}) uses an unbiased estimate for gradient of $L^i_V(x^i;\theta,\nu)$, thus forming a discrete approximation of the ODE given by:
    $$\dot{x^i_t} = \Omega_{x^i} \big[ - \nabla_{x^i} L^i_V(x^i;\theta,\nu) \big]$$
    where $t$ indexes the values of $x$ over time and the projection operator $\Omega_{x^i}$ ensures that iterates governed by this ODE remain in a bounded, compact set $X^i$ (which we assume contains at least one solution). Standard stochastic approximation arguments and the fact that $L^i_V(x^i;\theta,\nu)$ is convex can then be employed to show that the iteration leads to the convergence of $x^i$ to some $x^i_V \in \argmin_{x^i} L^i_V(x^i;\theta,\nu)$ \cite{Borkar2008}. 
    
    What remains is to show convergence of the hasty updates to $x^i$ and to $\lambda^i$. We may assume, due to the separation of timescales, that $x^i$ has converged with respect to $L^i_V$, giving some $x^i_V$ and patient loss $l^i = L^i_V(x^i_V;\theta,\nu)$. Further, as each $\lambda^i$ is updated on a slower timescale then we can assume that $\lambda^i$ is also fixed. Following a similar chain of equalities to the above we have:
    \begin{align*}
        \mathbb{E}_\theta \Big[\sum_j w[j] \big( \chi^{U^j}_t + \lambda^i \chi^{V^j}_t \big) \psi^i_{\theta^i} (a^i_t \vert s^\otimes_t)\Big] = - \nabla_{x^i} \Big( L^i_U(x^i;\theta,\mu) &+ \lambda^i \big[L^i_V(x^i;\theta,\nu) - l^i\big] \Big).
    \end{align*}
    The update rule (\ref{eq:hasgradup}) therefore forms a discrete approximation of the ODE given by
    $$\dot{x^i_t} = \Omega_{x^i} \Big[ - \nabla_{x^i} \big( L^i_U(x^i;\theta,\mu) + \lambda^i \big(L^i_V(x^i;\theta,\nu) - l^i\big) \big) \Big]$$
    and so by similar arguments to above converges to some $x^i_*(\lambda^i) \in \argmin_{x^i} L^i_U(x^i;\theta,\mu) + \lambda^i \big[L^i_V(x^i;\theta,\nu) - l^i\big]$ where $\lambda^i$ is viewed as fixed. Finally, following precisely the same argument we have that the update rule for $\lambda^i$ given by:
    $$\lambda^i \gets \Omega_\lambda \bigg[\lambda^i + \eta_t \Big( \sum_j w[j] \Gamma^j_{1:t} \big\vert\psi^i_{\theta^i}(a^i_t\vert s_t^\otimes)^\top x^i_{V} - \hat{\delta}^{V^j}_{t:t+1}\big\vert - l^i \Big) \bigg]$$
    forms an unbiased estimate of the gradient $\nabla_{\lambda^i} \Big( L^i_U(x^i;\theta,\mu) + \lambda^i \big[L^i_V(x^i;\theta,\nu) - l^i\big] \Big)$ and thus a discrete approximation of the ODE given by:
    $$\dot{\lambda^i_\tau} = \Omega_{\lambda^i} \Big[ \nabla_{\lambda^i} \big( L^i_U(x^i(\lambda^i_\tau);\theta,\mu) + \lambda^i \big[L^i_V(x^i(\lambda^i_\tau);\theta,\nu) - l^i\big) \big) \Big]$$
    where $\tau$ indexes the values of $\lambda^i$ over time, $x^i(\lambda^i_\tau)$ is the limit of the $x^i$ recursion with static parameters $\lambda^i_\tau$, and $\Omega_{\lambda^i}$ ensures that $\lambda \geq 0$. Applying standard stochastic approximation arguments we see that $\lambda^i_\tau(x^i_*) \rightarrow \lambda^i_* \in \argmax_{\lambda^i \geq 0} L^i_U(x^i_;\theta,\mu) + \lambda^i \big[L^i_V(x^i_;\theta,\nu) - l^i\big]$ and so we have that $(x^i_*, \lambda^i_*)$ forms a saddle-point of the Lagrangian. Hence, by the Lexicographic KKT conditions $(x^i_*, \lambda^i_*)$ is a solution of our constrained optimisation problem; see Theorem 2 in \cite{Rentmeestersa1996}. Hence the natural gradient found by the inner loop of \textsc{Almanac} satisfies:
    \begin{equation*}
    x^i_* \in \argmin\hspace{0.01em}_{x^i_U \in \argmin_{x^i_V} L^i_V(x^i_V;\theta,\nu)}~L^i_U(x^i_U;\theta,\mu).
    \end{equation*}
    We conclude this step with a brief remark on the differences between the local and global cases.
    In the local case then $x^i_*$ is an approximation of the natural gradient $\tilde{\nabla}_{\theta^i} K(\theta)$ subject to minimising $\tilde{\nabla}_{\theta^i} J(\theta)$, but in the global case each agent has access to the full joint policy $\pi$ and parameters $\theta$, and so $x^i_*$ approximates $\tilde{\nabla}_{\theta} K(\theta)$ subject to minimising $\tilde{\nabla}_{\theta} J(\theta)$. It can be observed that, due to shared observations of the state space and reward signal, the updates made to $x^i$ by each agent $i$ in this latter case are identical, and so (given an identical initialisation) then the natural gradient $x^i_*$ converged upon is also identical.

    \textbf{Step 3.} Finally we consider the updates to the actor. We focus primarily on the proof of convergence with respect to $J$, as the proof for $K$ is the same as in Agarwal et al. \cite{Agarwal2019}. In particular, we show that error term given in Lemma \ref{noregret} tends to 0 in the global case (giving a global optimum), and tends to 0 with respect to the policies of other agents in the local case (giving a local optimum). Recall that the aforementioned error term is given by:
    \begin{align*}
        e^j_t 
        &= \mathbb{E}_{\rho^\zeta_*} \Big[ \sum^\infty_{\tau=0} \Gamma^j_{0:\tau} \Big( A^j_{\theta_t}(s^\otimes_\tau,a_\tau) - \psi^i_{\theta^i_t}(a^i_\tau\vert s^\otimes_\tau)^\top x^i_t \Big) ~\Big\vert~ F^j(\rho) = \infty \Big]\\
        &=  \mathbb{E}_{\rho^\zeta_*} \Big[ \sum^\infty_{\tau=0} \Gamma^j_{0:\tau} \Big( A^j_{\theta_t}(s^\otimes_\tau,a_\tau) - \psi^i_{\theta^i_t}(a^i_\tau\vert s^\otimes_\tau)^\top x^{i*}_t \Big) ~\Big\vert~ F^j(\rho) = \infty \Big] \\
        &+  \mathbb{E}_{\rho^\zeta_*} \Big[ \sum^\infty_{\tau=0} \Gamma^j_{0:\tau} \Big( \psi^i_{\theta^i_t}(a^i_\tau\vert s^\otimes_\tau)^\top \big( x^{i*}_t - x^i_t \big) \Big) ~\Big\vert~ F^j(\rho) = \infty \Big]
    \end{align*}
    where we write $\mathbb{E}_{\rho^\zeta_*}$ in place of $\mathbb{E}_{\rho \sim \Pr^{\theta_*}_{G_B} (\cdot\vert s^\otimes), s^\otimes \sim \zeta^\otimes}$, and recall that $x^{i*}_t \in \min_{x^i_t} L^i_V(x^i_t;\theta_t,\nu_t)$. Note that specifications $\varphi^j$ such that all paths with $F^j(\rho) = \infty$ have probability 0 can be ignored, as can been be seen in the derivation in Lemma \ref{performancedifference}. As such, we assume for the rest of the proof that $\Pr^\theta_{G_B}\big(F^j(\rho) = \infty\big) > 0$ for the specifications $\varphi^j$ in the remainder of our proof.
    
    Let us begin with the first sub-term identified in the equality above. In what follows we write $l^j_t(x^i; s^\otimes_\tau,a_\tau)$ for $A^j_{\theta_t}(s^\otimes_\tau,a_\tau) - \psi^i_{\theta^i_t}(a^i_\tau\vert s^\otimes_\tau)^\top x^i$ to simplify notation. Before proceeding we note the following fact. First, any path $\rho$ through the Markov chain $\Pr^{\theta_t}_{G_B}$ will eventually enter some bottom strongly connected component (BSCC) almost surely, hence almost all the probability mass of the set of paths with $F^j(\rho) < \infty$ is due to those paths that enter some BSCC that contains no rewarding state for $R^j_\otimes$. Let the union of the states in these BSCCs be denoted by $B^j_\otimes \subseteq S^\otimes$. As $\sum^\infty_{\tau=0} \Gamma^j_{0:\tau} = \infty$ along such a path and any state $s^\otimes \not\in B^j_\otimes$ is visited only finitely many times, then the discounted state distribution $d^j_\zeta(\cdot ; \theta_*)$ over states on paths $\rho$ with $F^j(\rho) < \infty$ is simply the steady state distribution over $B^j_\otimes$. Formally, we have:
    $$\mathbb{E}_{\rho^\zeta_*} \Bigg[ \frac{1}{\sum^\infty_{\tau=0} \Gamma^j_{0:\tau}} \sum^\infty_{\tau=0} \Gamma^j_{0:\tau} l^j_t(x^i; s^\otimes_\tau,a_\tau) ~\Bigg\vert~ F^j(\rho) < \infty \Bigg]
    = \mathbb{E}_{(s^\otimes, a) \sim \nu^j_*} \big[ l^j_t(x^i; s^\otimes,a) ~\big\vert~ s^\otimes \in B^j_\otimes \big]$$
    Note that $A^j_{\theta_t}(s^\otimes,a) = 0$ for any $s^\otimes \in B^j_\otimes$ and any joint action $a$. From this observation and the equality above we deduce the following:
    \begin{align*}
        &\mathbb{E}_{(s^\otimes, a) \sim \nu^j_*} \big[ l^j_t(x^i; s^\otimes_\tau,a_\tau) ~\big\vert~ s^\otimes \in B^j_\otimes \big]\\
        &= \mathbb{E}_{(s^\otimes, a) \sim \nu^j_*} \big[ A^j_{\theta_t}(s^\otimes_\tau,a_\tau) - \psi^i_{\theta^i_t}(a^i_\tau\vert s^\otimes_\tau)^\top x^i ~\big\vert~ s^\otimes \in B^j_\otimes \big]\\
        &= -\mathbb{E}_{(s^\otimes, a) \sim \nu^j_*} \big[ \psi^i_{\theta^i_t}(a^i\vert s^\otimes)^\top x^i ~\big\vert~ s^\otimes \in B^j_\otimes \big]\\
        &= -\mathbb{E}_{(s^\otimes, a) \sim \nu^j_*} \big[ \nabla_{\theta^i_t} \log\pi^i(a^i\vert s^\otimes;\theta^i_t) ~\big\vert~ s^\otimes \in B^j_\otimes \big]^\top x^i\\
        &= -\mathbb{E}_{(s^\otimes, a) \sim \nu^j_*} \Big[ \nabla_{\theta^i_t} \sum_i \log\pi^i(a^i\vert s^\otimes;\theta^i_t) ~\Big\vert~ s^\otimes \in B^j_\otimes \Big]^\top x^i\\
        &= -\mathbb{E}_{(s^\otimes, a) \sim \nu^j_*} \big[ \nabla_{\theta^i_t} \log\pi(a \vert s^\otimes;\theta_t) ~\big\vert~ s^\otimes \in B^j_\otimes \big]^\top x^i\\
        &= -\mathbb{E}_{(s^\otimes, a) \sim \nu^j_*} \Bigg[ \frac{\nabla_{\theta^i_t} \pi(a \vert s^\otimes;\theta_t)}{\pi(a \vert s^\otimes;\theta_t)} ~\Bigg\vert~ s^\otimes \in B^j_\otimes \Bigg]^\top x^i\\
        &= -\mathbb{E}_{s^\otimes \sim d^j_\zeta(\cdot ; \theta_*)} \Bigg[ \sum_{a \in A} \pi(a \vert s^\otimes; \theta_t) \frac{\nabla_{\theta^i_t} \pi(a \vert s^\otimes;\theta_t)}{\pi(a \vert s^\otimes;\theta_t)} ~\Bigg\vert~ s^\otimes \in B^j_\otimes \Bigg]^\top x^i\\
        &= -\mathbb{E}_{s^\otimes \sim d^j_\zeta(\cdot ; \theta_*)} \Big[ \nabla_{\theta^i_t} \sum_{a \in A} \pi(a \vert s^\otimes;\theta_t) ~\Big\vert~ s^\otimes \in B^j_\otimes \Big]^\top x^i\\
        &= -\mathbb{E}_{s^\otimes \sim d^j_\zeta(\cdot ; \theta_*)} \big[ \nabla_{\theta^i_t} 1 ~\big\vert~ s^\otimes \in B^j_\otimes \big]^\top x^i\\
        &= 0
    \end{align*}
    A similar line of reasoning applies when 
    Recall that by condition 5 we have that $\mathbb{E}_t \big[ L^i_{V^j}(x^{i*}_t; \theta_t, \nu^j_* ) \big] \leq e^j_{approx}$, where $e^j_{approx}$ is some constant, thus $\mathbb{E}_t \big[ L^i_{V}(x^{i*}_t; \theta_t, \nu_{*} ) \big] \leq e_{approx} \coloneqq \sum_j w[j] e^j_{approx}$. From this and our results immediately above we have:
    \begin{align}
        \sqrt{e^j_{\text{approx}}}
        &\geq \sqrt{L^i_{V^j}(x^{i*}_t;\theta_t,\nu^j_*)} \label{eq2:1}\\
        &= \sqrt{\mathbb{E}_{(s^\otimes, a) \sim \nu^j_*} \big[ l^j_t(x^{i*}_t; s^\otimes,a)^2 \big]} \nonumber \\
        &\geq \mathbb{E}_{(s^\otimes, a) \sim \nu^j_*} \Big[ \sqrt{l^j_t(x^{i*}_t; s^\otimes,a)^2} \Big] \nonumber \\
        &= \mathbb{E}_{\rho^\zeta_*} \Bigg[ \frac{1}{\sum^\infty_{\tau=0} \Gamma^j_{0:\tau}} \sum^\infty_{\tau=0} \Gamma^j_{0:\tau} \big\vert l^j_t(x^{i*}_t; s^\otimes_\tau,a_\tau) \big\vert \Bigg] \nonumber \\
        &= \mathbb{E}_{\rho^\zeta_*} \Bigg[ \frac{1}{\sum^\infty_{\tau=0} \Gamma^j_{0:\tau}} \sum^\infty_{\tau=0} \Gamma^j_{0:\tau} \big\vert l^j_t(x^{i*}_t; s^\otimes_\tau,a_\tau) \big\vert ~\Bigg\vert~ F^j(\rho) = \infty \Bigg] \Pr(F^j(\rho) = \infty) \nonumber \\
        &+ \mathbb{E}_{\rho^\zeta_*} \Bigg[ \frac{1}{\sum^\infty_{\tau=0} \Gamma^j_{0:\tau}} \sum^\infty_{\tau=0} \Gamma^j_{0:\tau} \big\vert l^j_t(x^{i*}_t; s^\otimes_\tau,a_\tau) \big\vert ~\Bigg\vert~ F^j(\rho) < \infty \Bigg] \Pr(F^j(\rho) < \infty) \nonumber \\
        &= \mathbb{E}_{\rho^\zeta_*} \Bigg[ \frac{1}{\sum^\infty_{\tau=0} \Gamma^j_{0:\tau}} \sum^\infty_{\tau=0} \Gamma^j_{0:\tau} \big\vert l^j_t(x^{i*}_t; s^\otimes_\tau,a_\tau) \big\vert ~\Bigg\vert~ F^j(\rho) = \infty \Bigg] \Pr(F^j(\rho) = \infty) \label{eq2:2}\\
        &\geq \frac{(1 - \gamma_V) \Pr(F^j(\rho) = \infty)}{M^j} \mathbb{E}_{\rho^\zeta_*} \Big[ \sum^\infty_{\tau=0} \Gamma^j_{0:\tau} \big\vert l^j_t(x^{i*}_t; s^\otimes_\tau,a_\tau) \big\vert ~\Big\vert~ F^j(\rho) = \infty \Big] \label{eq2:3}\\
        &\geq \frac{(1 - \gamma_V) \Pr(F^j(\rho) = \infty)}{M^j} \mathbb{E}_{\rho^\zeta_*} \Big[ \sum^\infty_{\tau=0} \Gamma^j_{0:\tau} l^j_t(x^{i*}_t; s^\otimes_\tau,a_\tau) ~\Big\vert~ F^j(\rho) = \infty \Big] \nonumber\\
        &= \frac{(1 - \gamma_V) \Pr(F^j(\rho) = \infty)}{M^j} \mathbb{E}_{\rho^\zeta_*} \Big[ \sum^\infty_{\tau=0} \Gamma^j_{0:\tau} \Big( A^j_{\theta_t}(s^\otimes_\tau,a_\tau) - \psi^i_{\theta^i_t}(a^i_\tau\vert s^\otimes_\tau)^\top x^{i*}_t \Big) ~\Big\vert~ F^j(\rho) = \infty \Big] \nonumber\\
        &= \frac{(1 - \gamma_V) P^j}{M^j} \mathbb{E}_{\rho^\zeta_*} \Big[ \sum^\infty_{\tau=0} \Gamma^j_{0:\tau} \Big( A^j_{\theta_t}(s^\otimes_\tau,a_\tau) - \psi^i_{\theta^i_t}(a^i_\tau\vert s^\otimes_\tau)^\top x^{i*}_t \Big) ~\Big\vert~ F^j(\rho) = \infty \Big] \nonumber
    \end{align}
    where $P^j \coloneqq \Pr(F^j(\rho) = \infty) = \sum_\rho \Pr^{\theta_*}_{G_B} \mathbb{I}(F^j(\rho) = \infty) > 0$ (by our reasoning at the beginning of this step) and $\Pr(F^j(\rho) < \infty)$ is defined similarly. In this deduction: (\ref{eq2:1}) is due to condition 5 (as explained above; (\ref{eq2:2}) follows from the previous deduction; and (\ref{eq2:3}) is a result of the same reasoning employed in Lemma \ref{noregret}. We note here that when using a tabular policy representation or over-parametrised neural network, for example, it can be shown that $e_{\text{approx}} = 0$ and hence each $e^j_{\text{approx}} = 0$ \cite{}. In general, however, we have that
    \begin{align}
        \sum_j w[j] \mathbb{E}_{\rho^\zeta_*} \Big[ \sum^\infty_{\tau=0} \Gamma^j_{0:\tau} \Big( A^j_{\theta_t}(s^\otimes_\tau,a_\tau) - \psi^i_{\theta^i_t}(a^i_\tau\vert s^\otimes_\tau)^\top x^{i*}_t \Big) ~\Big\vert~ F^j(\rho) = \infty \Big]
        \leq \sum_j w[j] \frac{\sqrt{e^j_{approx}}M^j}{(1 - \gamma_V)P^j}
        \label{the_one}
    \end{align}
    where the conditional observation in the latter term stems from our previous remarks at the beginning of this step of the proof. In light of this result, we move onto the second sub-term. Note firstly the following series of inequalities holds:
    \begin{align*}
        &\mathbb{E}_{\rho^\zeta_*} \Big[ \sum^\infty_{\tau=0} \Gamma^j_{0:\tau} \Big( \psi^i_{\theta^i_t}(a^i_\tau\vert s^\otimes_\tau)^\top \big( x^{i*}_t - x^i_t \big) \Big) ~\Big\vert~ F^j(\rho) = \infty \Big]\\
        &= \mathbb{E}_{\rho^\zeta_*} \Big[ \sum^\infty_{\tau=0} \Gamma^j_{0:\tau} \Big( l^j_t(x^i; s^\otimes_\tau,a_\tau) - l^j_t(x^i; s^\otimes_\tau,a_\tau) \Big) ~\Big\vert~ F^j(\rho) = \infty \Big]\\
        &\leq \mathbb{E}_{\rho^\zeta_*} \Big[ \sum^\infty_{\tau=0} \Gamma^j_{0:\tau} \big\vert \psi^i_{\theta^i_t}(a^i_\tau\vert s^\otimes_\tau)^\top \big( x^{i*}_t - x^i_t \big) \big\vert ~\Big\vert~ F^j(\rho) = \infty \Big]\\
        &\leq \frac{M^j}{1 - \gamma_V} \mathbb{E}_{\rho^\zeta_*} \Bigg[ \frac{1}{\sum^\infty_{\tau=0} \Gamma^j_{0:\tau}} \sum^\infty_{\tau=0} \Gamma^j_{0:\tau} \big\vert \psi^i_{\theta^i_t}(a^i_\tau\vert s^\otimes_\tau)^\top \big( x^{i*}_t - x^i_t \big) \big\vert ~\Bigg\vert~ F^j(\rho) = \infty \Bigg]\\
        &= \frac{M^j}{1 - \gamma_V} \mathbb{E}_{(s^\otimes,a) \sim \nu^j_*} \Big[ \big\vert \psi^i_{\theta^i_t}(a^i \vert s^\otimes)^\top \big( x^{i*}_t - x^i_t \big) \big\vert ~\Big\vert~ s^\otimes \not\in B^j_\otimes \Big]\\
        &\leq \frac{M^j}{\Pr(s^\otimes \not\in B^j_\otimes)(1 - \gamma_V)} \mathbb{E}_{(s^\otimes,a) \sim \nu^j_*} \Big[ \big\vert \psi^i_{\theta^i_t}(a^i \vert s^\otimes)^\top \big( x^{i*}_t - x^i_t \big) \big\vert \Big]\\
        &\leq \frac{M^j}{\Pr(s^\otimes \not\in B^j_\otimes)(1 - \gamma_V)} \sqrt{\mathbb{E}_{(s^\otimes,a) \sim \nu^j_*} \bigg[ \Big( \psi^i_{\theta^i_t}(a^i \vert s^\otimes)^\top \big( x^{i*}_t - x^i_t \big) \Big)^2 \bigg]}\\
        &= \frac{M^j}{\Pr(s^\otimes \not\in B^j_\otimes)(1 - \gamma_V)} \sqrt{\Vert x^{i*}_t - x^i_t \Vert^2_{\Sigma^t_{j*}}}
    \end{align*}
    where $\Pr(s^\otimes \not\in B^j_\otimes) = \sum_{s^\otimes \not\in B^j_\otimes} d^j_*(s^\otimes) > 0$, as we have $\Pr(F^j(\rho) = \infty) > 0$, and $\Sigma^t_{j*} = \Sigma^{\theta^i_t}_{\nu^j_*}$. Recall our definition of $\nu^j_{\theta,\xi}$ as given by:
    $$\nu^j_{\theta,\xi}(s^\otimes, a) = \sum_{(s_0^\otimes, a_0) \in S^\otimes \times A} \xi^\otimes(s_0^\otimes, a_0) \sum_{\rho} \Pr^{\theta}_{G_B} (\rho \vert s_0^\otimes, a_0)
    \Bigg[ \frac{1}{\sum_{t=0}^\infty \Gamma^j_{0:t} } \sum_{t=0}^\infty \Gamma^j_{0:t} \mathbb{I} \big( \rho[t, t + 0.5] = (s^\otimes, a) \big) \Bigg]$$
    We note, as in Agrawal et al., that for all state-action pairs $s^\otimes, a$ and any set of joint parameters $\theta$ we have
    $\nu^j_{\theta,\xi}(s^\otimes, a) \geq \frac{1 - \gamma_V}{M^j} \xi(s^\otimes, a)$. We conclude by using this fact and by following the proof of Theorem 6.2 in Agarwal et al., adjusted for our weighted sum of losses due to our multiple objectives:
    \begin{align*}
        \sum_j w[j] \Vert x^{i*}_t - x^i_t \Vert^2_{\Sigma^t_{j*}}
        &\leq \sum_j w[j] \big\Vert (\Sigma^t_\xi)^{-1/2} \Sigma^t_{j*} (\Sigma^t_\xi)^{-1/2} \big\Vert_2 
        \Vert x^{i*}_t - x^i_t \Vert^2_{\Sigma^t_{\xi}}\\
        &= \sum_j w[j] \kappa^j_t \Vert x^{i*}_t - x^i_t \Vert^2_{\Sigma^t_{\xi}}\\
        &\leq \sum_j w[j] \frac{M^j\kappa^j_t}{1 - \gamma_V} \Vert x^{i*}_t - x^i_t \Vert^2_{\Sigma^t_{j,\xi}}\\
        &\leq \frac{M\kappa_t}{1 - \gamma_V} \sum_j w[j] \Vert x^{i*}_t - x^i_t \Vert^2_{\Sigma^t_{j,\xi}}\\
        &\leq \frac{M\kappa_t}{1 - \gamma_V} \big( L^i_{V}(x^i_t;\theta_t,\nu_{t,\xi}) - L^i_{V}(x^i_{t*};\theta_t,\nu_{t,\xi}) \big)
    \end{align*}
    where $\Sigma^t_{j,\xi} = \Sigma^{\theta_t}_{\nu^j_{\theta_t,\xi}}$, $\kappa^j_t = \big\Vert (\Sigma^t_\xi)^{-1/2} \Sigma^t_{j*} (\Sigma^t_\xi)^{-1/2} \big\Vert_2$, and we set $M = \max_j M^j$ and $\kappa_t = \max_j \kappa^j_t$. Here, the final line follows from the first order optimality conditions for $x^i_{t*}$ with respect to $L^i_{V}(x^i_{t*};\theta_t,\nu_{t,\xi})$ which imply that $(x^i_t -  x^i_{t*})^\top \nabla_{x^i} L^i_{V}(x^i_{t*};\theta_t,\nu_{t,\xi}) \geq 0$ (we refer the reader to Agarwal et al. for the short proof \cite{Agarwal2019}). Thus, in expectation we have:
    \begin{align}
        \mathbb{E}_{t \sim \iota_T} \Big[ \sum_j w[j] \Vert x^{i*}_t - x^i_t \Vert^2_{\Sigma^t_{j*}} \Big]
        &\leq \mathbb{E}_{t \sim \iota_T} \bigg[ \frac{M\kappa_t}{1 - \gamma_V} \big( L^i_{V}(x^i_t;\theta_t,\nu_{t,\xi}) - L^i_{V}(x^i_{t*};\theta_t,\nu_{t,\xi}) \big) \bigg]\\
        &\leq \frac{M\kappa}{1 - \gamma_V} \mathbb{E}_{t \sim \iota_T} \Big[ L^i_{V}(x^i_t;\theta_t,\nu_{t,\xi}) - L^i_{V}(x^i_{t*};\theta_t,\nu_{t,\xi}) \Big]\\
        &= \frac{M\kappa}{1 - \gamma_V} \mathbb{E}_{t \sim \iota_T} \Big[ L^i_{V}(x^i_t;\theta_t,\nu_{t,\xi}) - L^i_{V}(x^i_{t*};\theta_t,\nu_{t,\xi}) ~\Big\vert~ \theta_t \Big]\\
        &\leq \frac{M\kappa}{1 - \gamma_V} e_{\text{stat}}
    \end{align}
    Now note that in \textsc{Almanac}, due to the use of a two timescale approach, then whenever we come to update $\theta^i_t$ we already have that the process of finding $x^i_t$ has converged and hence (via the second step of the proof) we have $e_{\text{stat}} = 0$. As, therefore, $\mathbb{E}_{t \sim \iota_T} \big[ \sum_j w[j] \Vert x^{i*}_t - x^i_t \Vert^2_{\Sigma^t_{j*}} \big] = 0$ then we also have that:
    \begin{align*}
        &\mathbb{E}_{t \sim \iota_T} \bigg[ \sum_j w[j] \mathbb{E}_{\rho^\zeta_*} \Big[ \sum^\infty_{\tau=0} \Gamma^j_{0:\tau} \Big( \psi^i_{\theta^i_t}(a^i_\tau\vert s^\otimes_\tau)^\top \big( x^{i*}_t - x^i_t \big) \Big) ~\Big\vert~ F^j(\rho) = \infty \Big] \bigg]\\
        &\leq \mathbb{E}_{t \sim \iota_T} \bigg[ \sum_j w[j] \frac{M^j}{\Pr(s^\otimes \not\in B^j_\otimes)(1 - \gamma_V)} \sqrt{\Vert x^{i*}_t - x^i_t \Vert^2_{\Sigma^t_{j*}}} \bigg]\\
        &= 0 
    \end{align*}
    Thus, we conclude Step 3. of our proof with the observation that using the inequality (\ref{the_one}) to show that:
    \begin{equation}
        V_{\theta_*}(s^\otimes) - \lim_{T \rightarrow \infty} \mathbb{E}_{t \sim \iota_T} \big[ V_{\theta_t}(s^\otimes) \big] \leq \lim_{T \rightarrow \infty} \mathbb{E}_{t \sim \iota_T} \left[\sum_j w[j] \frac{\sqrt{e^j_{approx}}M^j}{(1 - \gamma_V)P^j} \right].
    \end{equation}
    In particular, in the global setting then updates to $\theta^i$ are the same as updates to the global parameters $\theta$, and so a (neighbourhood of a) global optimum is reached. In the local setting, then each actor updates their policy given the policies of the other agents, and hence $V_{\theta_*}(s^\otimes)$ is defined with respect to these other policies. The final joint strategy may thus not be globally optimal, but no agent may deviate to improve their policy, and so it will at least be a (neighbourhood of a) MPE and thus a (neighbourhood of a) local optimum.
\end{proof}
\begin{lemma}
    \label{LTL2Buchi}
    Let $\varphi$ be an LTL formula. Given an LDBA $B$ (with accepting set $F$ and alphabet $\Sigma$) representing $\varphi$, a (finite) MG $G$ (with state space $S$), and a labelling function $L : S \rightarrow \Sigma$, let the resulting product MG $G_B$ and let $F_\otimes = S \times F$. We denote by $F(\rho)$ the number of times that $\rho$ passes through $F_\otimes = S \times F$. Recalling our notion of satisfaction probability from Definition 6 and letting $\pi$ be a joint policy in $G$, then for any extension $\pi'$ of $\pi$ to the product game $G_B$ we have that
    $\Pr^\pi_G(s \models \varphi) \geq \Pr^{\pi'}_{G_B}(\{\rho : F(\rho)= \infty \} \vert s^\otimes)$
    where $s^\otimes = (s,q_0)$. Further, there exists a canonical extension of $\pi$ (which for simplicity we also denote by $\pi$) such that $\Pr^\pi_G(s \models \varphi) = \Pr^{\pi}_{G_B}(\{\rho : F(\rho) = \infty\} \vert s^\otimes)$.
\end{lemma}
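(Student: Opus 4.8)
The plan is to prove the two claims separately, with the inequality doing most of the conceptual work and the equality following from the correctness of the LTL-to-LDBA translation.

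First I would set up a \emph{projection} map $P$ sending each infinite run $\rho$ of the product $G_B$ to the path $P(\rho)$ in $G$ obtained by deleting the ``stuttering'' steps at which an $\E$-transition was taken (these leave the $G$-component $s$ unchanged and only advance the automaton state). Because $\pi'$ extends $\pi$ — i.e.\ its marginal over the actions $A^i$ agrees with $\pi$ while the $\E$-transitions are deterministic and do not touch $s$ — the pushforward of $\Pr^{\pi'}_{G_B}(\cdot \vert s^\otimes)$ under $P$ is exactly $\Pr^\pi_G(\cdot \vert s)$. This is the one genuinely measure-theoretic point, and I would verify it by matching cylinder probabilities: the probability of a finite $G$-prefix under $\pi$ equals the total $\Pr^{\pi'}_{G_B}$-mass of all product prefixes that project onto it, summing over the (deterministic, probability-one) interspersed $\E$-moves.

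Next I would establish the set containment $\{\rho : F(\rho) = \infty\} \subseteq P^{-1}(\{\bar\rho : \bar\rho \models \varphi\})$. Take any $\rho$ with $F(\rho) = \infty$. Since $F \subseteq Q_A$ and states in $Q_A$ admit no $\E$-transitions, $\rho$ visits $Q_A$ infinitely often and therefore takes only finitely many $\E$-transitions, so $P(\rho)$ is a genuine infinite $G$-path with a well-defined trace $w = L(P(\rho)[0]) L(P(\rho)[1]) \cdots$. The automaton component of $\rho$, with the finitely many $\E$-moves removed, is then an accepting run of $B$ on $w$ (it starts at $q_0$, respects $\delta$, and visits $F$ infinitely often); hence $w$ is accepted by $B$ and $P(\rho) \models \varphi$. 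Combining this containment with the pushforward identity gives $\Pr^{\pi'}_{G_B}(\{\rho : F(\rho)=\infty\}\vert s^\otimes) \le \Pr^{\pi'}_{G_B}(P^{-1}(\{\bar\rho \models \varphi\})\vert s^\otimes) = \Pr^\pi_G(s \models \varphi)$, which is the first claim.

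For the equality I would exhibit a \emph{canonical} extension that turns the slack in the containment into equality by resolving the $\E$-nondeterminism as favourably as possible. Here I would invoke the defining correctness property of the LTL-to-LDBA construction of Sickert et al.: an LDBA for $\varphi$ admits, over any finite Markov chain, a resolution of its $\E$-transitions — expressible as a policy that is memoryless on the product, since the automaton state already carries the requisite memory — under which the probability of producing an accepting run equals the probability that the generated trace satisfies $\varphi$. Applying this to the Markov chain $\Pr^\pi_G(\cdot \vert s)$ induced by the fixed $G$-policy $\pi$, and combining the resulting $\E$-resolution with $\pi$ on the original action components, yields an extension (which we also denote $\pi$) whose $G$-marginal is unchanged and for which $\Pr^\pi_{G_B}(\{\rho : F(\rho)=\infty\}\vert s^\otimes) = \Pr^\pi_G(s \models \varphi)$. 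The main obstacle is precisely this step: the first claim holds for \emph{any} extension because projection can only lose satisfying traces, but equality requires showing that almost every satisfying trace \emph{can} be matched by some accepting run and that these matches are realisable by a single, non-clairvoyant policy. This is exactly where limit-determinism is essential — after the single $\E$-jump into $Q_A$ the run is forced and deterministic, so the only genuine choice is \emph{when} to jump, and the cited translation guarantees this choice can be made to capture the full satisfaction probability.
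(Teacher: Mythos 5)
Your proof is correct, and it reaches the paper's destination by a partly different and more self-contained route. The paper's own proof is a two-line reduction: since fixing a joint policy makes the MG behave as a single-agent product MDP (the product action space $A^1 \times \cdots \times A^n$ is viewed as one joint action space), \emph{both} the inequality and the existence of the canonical extension are imported wholesale as an immediate consequence of Theorem 3 of Sickert et al. You use exactly that external result, but only for the equality half --- resolving the $\E$-nondeterminism over the Markov chain $\Pr^\pi_G(\cdot \vert s)$ induced by the fixed joint policy, which is the same single-agent reduction in disguise --- while proving the inequality half from first principles via the projection map $P$, the cylinder-matching pushforward identity, and the event containment $\{\rho : F(\rho) = \infty\} \subseteq P^{-1}(\{\bar\rho : \bar\rho \models \varphi\})$. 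What your route buys is transparency: it isolates which direction is elementary (projection can only lose satisfying traces, so \emph{any} extension yields the inequality) and which direction genuinely needs limit-determinism and the correctness of the LTL-to-LDBA translation (realising the matching of satisfying traces to accepting runs by a single non-clairvoyant $\E$-resolution). One small caveat: your pushforward identity implicitly assumes runs take only finitely many $\E$-transitions almost surely, so that $P$ is defined on a full-measure set of runs; this is guaranteed in the Sickert et al. construction, where $\E$-transitions jump from $Q_I$ into the absorbing component $Q_A$, but the paper's LDBA definition does not state $\delta(q,\E) \subseteq Q_A$ explicitly. Even without that assumption your inequality survives, since the pushforward on infinite $G$-paths is then merely dominated by $\Pr^\pi_G$, which is all the containment step requires; you correctly handle the separate point that runs with $F(\rho) = \infty$ themselves take only finitely many $\E$-moves, using the absorption of $Q_A$.
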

\begin{proof}
    Note that by an `extension' of $\pi$ to the product game we refer to a policy that takes the same actions as $\pi$ at all states $s^\otimes$, apart from when $\E$-transitioning from product states over $Q_I$ to those in $Q_A$ (as such actions do not exist in $G$). By viewing the product action space $A^1 \times \cdots \times A^n$ in $G$ as a single joint action space $A$ then a joint policy $\pi$ over $G_B$ can be viewed as a policy of a single agent in a product MDP instead. The result is then an immediate consequence of Theorem 3 from Sickert et al. \cite{Sickert2016}.
\end{proof}

\section{Examples}
\label{examples}

In this section we provide a small set of examples, complete with diagrams, in order to ease understanding of the underlying constructions and to highlight some of the subtleties of our work.

\subsection{Conjunctions and Weighted Combinations of Specifications}
\label{ex:conjunction}

We note here that it is \textit{not} the case that, for some set of LTL goals $\{\varphi^j\}_{0\leq j \leq m}$ and vector of weights $w$ such that each $w[j] > 0$ and $\sum_j w[j] = 1$, that:
$$\argmax_\pi \sum_j w[j] \Pr^{\pi}_G(s \models \varphi^j)~~~=~~~\argmax_\pi \Pr^{\pi}_G(s \models \bigwedge^j \varphi^j)$$
To see this, consider the objectives $\varphi_1 = \ltlF\chi$ and $\varphi_2 = \ltlF\psi$ in the context of the MG given in Figure \ref{conj}. Clearly, for any choice of $w_1$ and $w_2$, the optimal policy for maximising $\sum_j w[j] \Pr^{\pi}_G(s_0 \models \varphi^j)$ takes action $b$ and the optimal policy for maximising $\Pr^{\pi}_G(s_0 \models \bigwedge^j \varphi^j)$ takes action $a$. We do not claim that one of these policies is more natural than another, merely that by also considering weighted combinations of LTL formulae, then we may consider a strictly larger class of goals than if considering simply logical combinations of goals (which are themselves LTL formulae). Whether it is more natural to form a logical or weighted combination will no doubt depend on the goals, agents, and environment in question.

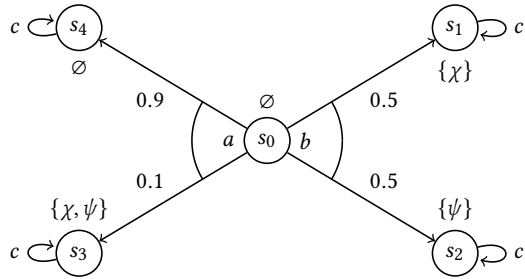
\begin{figure}[h]
    \centering
    \begin {tikzpicture}[
        -latex ,
        auto ,
        node distance =1.5cm and 2.5cm ,
        on grid ,
        semithick ,
        -> ,
        initial text = {},
        state/.style ={circle , draw, 
        }
        ]
            \node[state, label=above:{$\varnothing$}, label={[label distance=0.005cm]360:{$b$}}, label={[label distance=0.005cm]180:{$a$}}] (A) {$s_0$};
            \draw [domain=0:31,-] plot ({cos(\x)}, {sin(\x)});
            \draw [domain=329:360,-] plot ({cos(\x)}, {sin(\x)});
            \draw [domain=149:211,-] plot ({cos(\x)}, {sin(\x)});
            \node[state, label=below:{$\{\chi\}$}] (B) [above right=of A] {$s_1$};
            \node[state, label=above:{$\{\psi\}$}] (C) [below right=of A] {$s_2$};
            \node[state, label=above:{$\{\chi,\psi\}$}] (D) [below left=of A] {$s_3$};
            \node[state, label=below:{$\varnothing$}] (E) [above left=of A] {$s_4$};
            \path (A) edge  node[below right] {$0.5$} (B);
            \path (A) edge  node[above right] {$0.5$} (C);
            \path (A) edge  node[above left] {$0.1$} (D);
            \path (A) edge  node[below left] {$0.9$} (E);
            \path (B) edge [loop right] node[right] {$c$} (B);
            \path (C) edge [loop right] node[right] {$c$} (C);
            \path (D) edge [loop left] node[left] {$c$} (D);
            \path (E) edge [loop left] node[left] {$c$} (E);
        \end{tikzpicture}
    \caption{A small MG $G$ with joint actions $a$, $b$, and $c$.}
    \label{conj}
\end{figure}

\subsection{Product Markov Game}
\label{ex:prodgame}

Here we provide an example of forming a product MG from a small two-player MG $G$ and two LDBAs $B^1$ and $B_2$, as can be seen in Figure \ref{preprod}. We assume that the goal $\ltlF \ltlG \varphi$ is associated with agent 1 in $G$, i.e., $N^{B^1} = \{1\}$ and so $A^1_\otimes = A^1 \cup \{\E_{q^1_1}\}$. As there is only one $\E$ transition in $B^1$ then we simplify notation and write $\E_{q^1_1}$ as $\E$. We further simplify the notation in the diagrams by denoting joint actions with single lowercase letters $a$, $b$, $c$, etc. In the product MG pictured in Figure \ref{postprod} the transition $\E$ thus represents the joint action $(\E,\_)$ where player 1 performs action $\E$ and player 2 performs any action in $A^2_\otimes = A^2$.

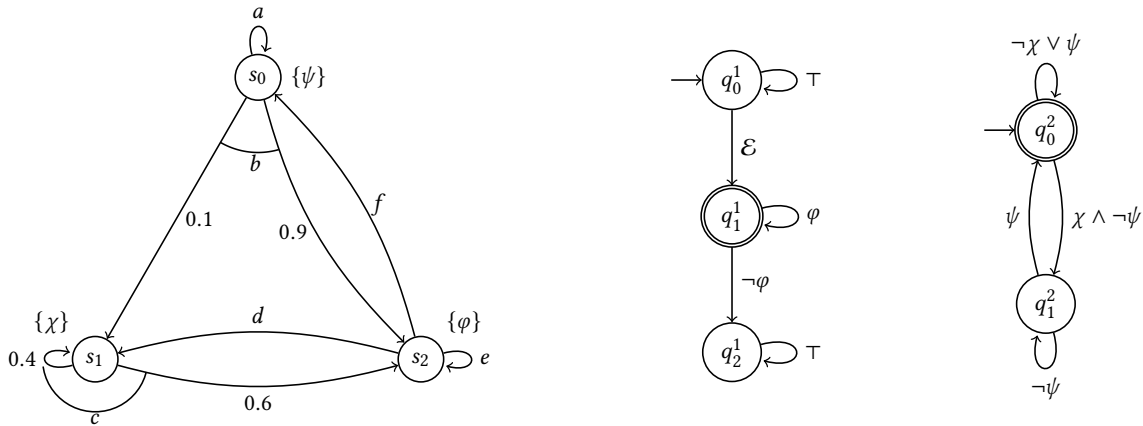
\begin{figure}[h]
    \centering
    \begin{minipage}[c]{0.45\linewidth}
        \flushleft
        \begin {tikzpicture}[
        -latex ,
        auto ,
        on grid ,
        semithick ,
        -> ,
        initial text = {},
        state/.style ={circle , draw, 
        }
        ]
            \draw [domain=240:286,-] plot ({1*cos(\x)}, {2.5 + 1*sin(\x)});
            \draw [domain=188:346,-] plot ({-2.165+ 0.7*cos(\x)}, {-1.25 + 0.7*sin(\x)});
            \node[state, label=right:{$\{\psi\}$}, label={[label distance=0.6cm]268:{$b$}}] (A) at (90:2.5) {$s_0$};
            \node[state, label=above left:{$\{\chi\}$}, label={[label distance=0.3cm]270:{$c$}}] (B) at (210:2.5) {$s_1$};
            \node[state, label=above right:{$\{\varphi\}$}] (C) at (330:2.5) {$s_2$};
            \path (A) edge [loop above] node[above] {$a$} (A);
            \path (A) edge node[right] {$0.1$} (B);
            \path (A) edge [bend right =15] node[left] {$0.9$} (C);
            \path (C) edge [bend right =15] node[right] {$f$} (A);
            \path (B) edge [loop left] node[left] {$0.4$} (B);
            \path (C) edge [loop right] node[right] {$e$} (C);
            \path (B) edge [bend right =15] node[below] {$0.6$} (C);
            \path (C) edge [bend right =15] node[above] {$d$} (B);
        \end{tikzpicture}
    \end{minipage}
    \begin{minipage}[c]{0.2\linewidth}
        \centering
        \begin {tikzpicture}[
        -latex ,
        auto ,
        node distance =1cm and 1cm ,
        semithick ,
        -> ,
        initial text = {},
        state/.style ={circle , draw, 
        }
        ]
            \node[state, initial] (A) {$q^1_0$};
            \node[state, accepting] (B) [below=of A] {$q^1_1$};
            \node[state] (C) [below=of B] {$q^1_2$};
            \path (A) edge [loop right] node[right] {$\top$} (A);
            \path (B) edge [loop right] node[right] {$\varphi$} (B);
            \path (C) edge [loop right] node[right] {$\top$} (C);
            \path (A) edge node[right] {$\E$} (B);
            \path (B) edge node[right] {$\neg \varphi$} (C);
        \end{tikzpicture}
    \end{minipage}
    \begin{minipage}[c]{0.2\linewidth}
        \flushright
        \begin {tikzpicture}[
        -latex ,
        auto ,
        node distance =1.5cm and 1.5cm ,
        semithick ,
        -> ,
        initial text = {},
        state/.style ={circle , draw, 
        }
        ]
            \node[state, initial, accepting] (A) {$q^2_0$};
            \node[state] (B) [below=of A] {$q^2_1$};
            \path (A) edge [loop above] node[above] {$\neg \chi \vee \psi$} (A);
            \path (B) edge [loop below] node[below] {$\neg \psi$} (B);
            \path (A) edge [bend right =-15] node[right] {$\chi \wedge \neg \psi$} (B);
            \path (B) edge [bend right =-15] node[left] {$\psi$} (A);
        \end{tikzpicture}
    \end{minipage}
    \caption{From left to right: a small MG $G$ with two players and joint actions $a$, $b$, $c$, $d$, $e$, and $f$; an LDBA $B^1$ representing the LTL formula $\ltlF \ltlG \varphi$; an LDBA $B_2$ representing the LTL formula $\ltlG (\chi \rightarrow \ltlF \psi)$.}
    \label{preprod}
\end{figure}

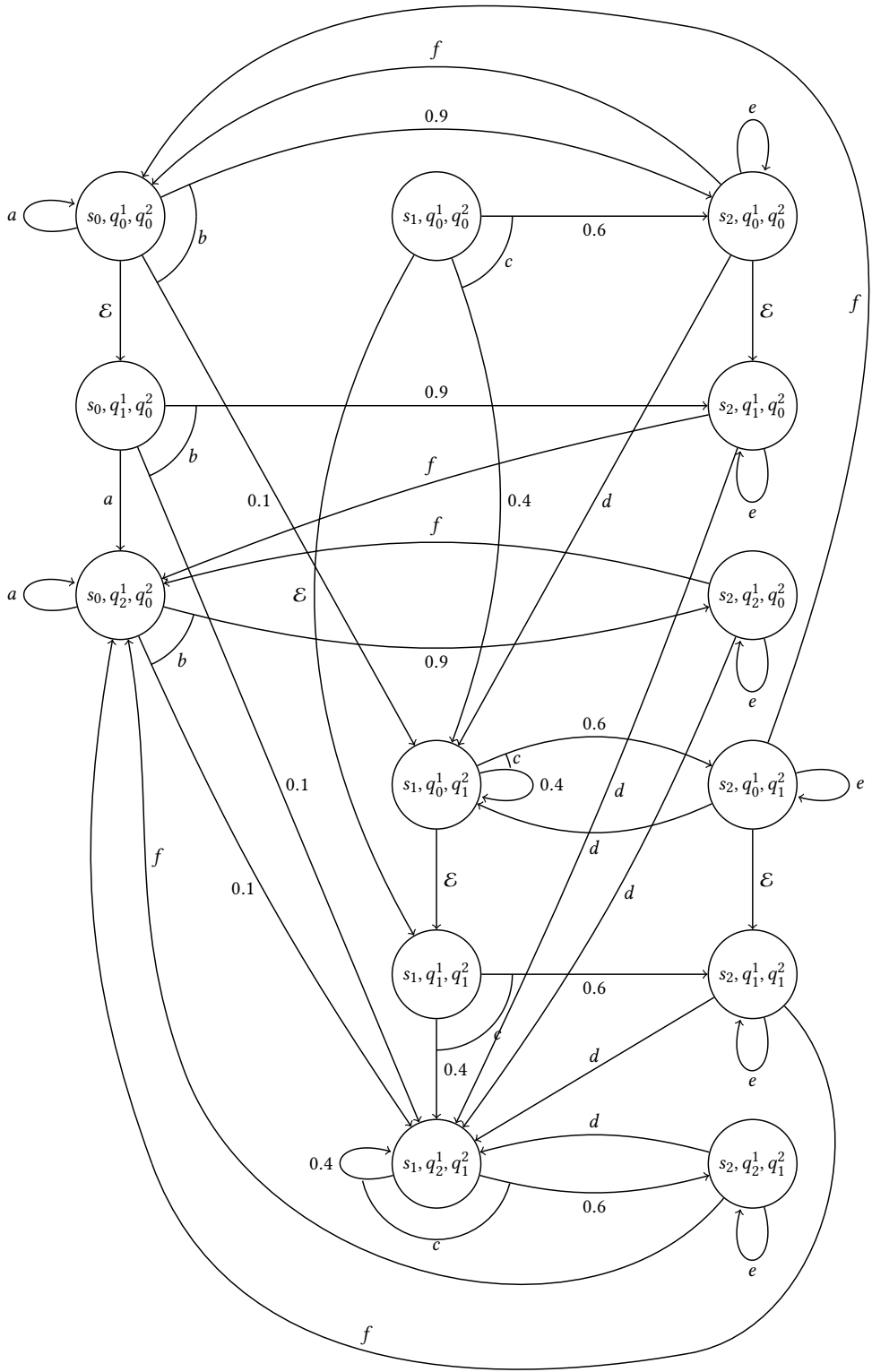
\begin{figure}
    \centering
    \resizebox{!}{0.95\textheight}{
        \begin {tikzpicture}[
        -latex ,
        auto ,
        on grid ,
        semithick ,
        -> ,
        initial text = {},
        state/.style ={circle , draw, 
        }
        ]
            \pgfmathsetmacro{\X}{5}
            \pgfmathsetmacro{\Y}{3}

            \node[state, label={[label distance=0.4cm]355:{$b$}}] (000) at (0*\X,0*\Y) {$s_0,q^1_0,q^2_0$};
            \draw [domain=0:25,-] plot ({1.2*cos(\x)}, {1.2*sin(\x)});
            \draw [domain=299:360,-] plot ({1.2*cos(\x)}, {1.2*sin(\x)});
            \node[state, label={[label distance=0.4cm]330:{$c$}}] (100) at (1*\X,0*\Y) {$s_1,q^1_0,q^2_0$};
            \draw [domain=290:360,-] plot ({\X + 1.2*cos(\x)}, {1.2*sin(\x)});
            \node[state] (200) at (2*\X,0*\Y) {$s_2,q^1_0,q^2_0$};
            \node[state, label={[label distance=0.4cm]330:{$b$}}] (010) at (0*\X,-1*\Y) {$s_0,q^1_1,q^2_0$};
            \draw [domain=292:360,-] plot ({1.2*cos(\x)}, {-\Y + 1.2*sin(\x)});
            \node[state] (210) at (2*\X,-1*\Y) {$s_2,q^1_1,q^2_0$};
            \node[state, label={[label distance=0.4cm]315:{$b$}}] (020) at (0*\X,-2*\Y) {$s_0,q^1_2,q^2_0$};
            \draw [domain=294:345,-] plot ({1.2*cos(\x)}, {-2*\Y + 1.2*sin(\x)});
            \node[state] (220) at (2*\X,-2*\Y) {$s_2,q^1_2,q^2_0$};
            \node[state, label={[label distance=0.4cm]11:{$c$}}] (101) at (1*\X,-3*\Y) {$s_1,q^1_0,q^2_1$};
            \draw [domain=13:24,-] plot ({\X + 1.2*cos(\x)}, {-3*\Y + 1.2*sin(\x)});
            \node[state] (201) at (2*\X,-3*\Y) {$s_2,q^1_0,q^2_1$};
            \node[state, label={[label distance=0.4cm]315:{$c$}}] (111) at (1*\X,-4*\Y) {$s_1,q^1_1,q^2_1$};
            \draw [domain=270:360,-] plot ({\X + 1.2*cos(\x)}, {-4*\Y + 1.2*sin(\x)});
            \node[state] (211) at (2*\X,-4*\Y) {$s_2,q^1_1,q^2_1$};
            \node[state, label={[label distance=0.4cm]270:{$c$}}] (121) at (1*\X,-5*\Y) {$s_1,q^1_2,q^2_1$};
            \draw [domain=193:345,-] plot ({\X + 1.2*cos(\x)}, {-5*\Y + 1.2*sin(\x)});
            \node[state] (221) at (2*\X,-5*\Y) {$s_2,q^1_2,q^2_1$};

            \path (000) edge [loop left] node[left] {$a$} (000);
            \path (000) edge [bend right =-25] node[above] {$0.9$} (200);
            \path (000) edge node[left] {$0.1$} (101);
            \path (000) edge node[left] {$\E$} (010);

            \path (100) edge [bend right =30] node[left] {$\E$} (111);
            \path (100) edge [bend right =-20] node[right] {$0.4$} (101);
            \path (100) edge node[below] {$0.6$} (200);

            \path (200) edge [bend right =45] node[above] {$f$} (000);
            \path (200) edge node[right] {$\E$} (210);
            \path (200) edge node[right] {$d$} (101);
            \path (200) edge [loop above] node[above] {$e$} (200);

            \path (010) edge node[left] {$a$} (020);
            \path (010) edge node[above] {$0.9$} (210);
            \path (010) edge node[right] {$0.1$} (121);

            \path (210) edge [loop below] node[below] {$e$} (210);
            \path (210) edge [bend right =5] node[above] {$f$} (020);
            \path (210) edge [bend right =-3] node[right] {$d$} (121);

            \path (020) edge [loop left] node[left] {$a$} (020);
            \path (020) edge [bend right =15] node[below] {$0.9$} (220);
            \path (020) edge [bend right =5] node[left] {$0.1$} (121);

            \path (220) edge [loop below] node[below] {$e$} (220);
            \path (220) edge [bend right =15] node[above] {$f$} (020);
            \path (220) edge [bend right =-7] node[right] {$d$} (121);

            \path (101) edge node[right] {$\E$} (111);
            \path (101) edge [loop right] node[right] {$0.4$} (101);
            \path (101) edge [bend right =-25] node[above] {$0.6$} (201);

            \path (201) edge [loop right] node[right] {$e$} (201);
            \path (201) edge node[right] {$\E$} (211);
            \path (201) edge [bend right =-25] node[below] {$d$} (101);
            \node (helper) [minimum size=0mm,node distance=2mm, draw=none] at (1.8*\X,1*\Y) {};
            \path (201) edge [-, ,out=70,in=350] node[left] {$f$} (helper.center)
            (helper.center) edge[->, out=170,in=60] (000);

            \path (111) edge node[right] {$0.4$} (121);
            \path (111) edge node[below] {$0.6$} (211);

            \path (211) edge [loop below] node[below] {$e$} (211);
            \path (211) edge  node[above] {$d$} (121);
            \node (helper3) [minimum size=0mm,node distance=2mm, draw=none] at (1.8*\X,-6*\Y) {};
            \node (helper4) [minimum size=0mm,node distance=2mm, draw=none] at (0.1*\X,-5*\Y) {};
            \path (211) edge [-, ,out=315,in=10] (helper3.center)
            (helper3.center) edge [-, out=190,in=290] node[above] {$f$} (helper4.center)
            (helper4.center) edge [->, out=110,in=260] (020);

            \path (121) edge [loop left] node[left] {$0.4$} (121);
            \path (121) edge [bend right =15] node[below] {$0.6$} (221);

            \path (221) edge [loop below] node[below] {$e$} (211);
            \path (221) edge [bend right =15] node[above] {$d$} (121);
            \node (helper2) [minimum size=0mm,node distance=2mm, draw=none] at (0.2*\X,-4.5*\Y) {};
            \path (221) edge [-, ,out=230,in=290] (helper2.center)
            (helper2.center) edge[->, out=110,in=280] node[right] {$f$} (020);

        \end{tikzpicture}
    }
    \caption{The product game $G_{B^1,B_2}$ (unreachable states not pictured).}
    \label{postprod}
\end{figure}

\subsection{Non-Convergence of Q-Learning with State-Dependent Discounting}
\label{ex:non-conv}

Here we show that naively using a state-dependent discount rate with vanilla Q-learning, as has been suggested in earlier works \cite{Hahn2020,Hasanbeig2019}, can lead to convergence to the wrong Q values and hence the wrong policy. Consider the product game $G_B$ given in Figure \ref{nonconv}. We drop the product notation $\otimes$ here for clarity. The standard Q-learning update rule is given by:
$$Q(s_t,a_t) \leftarrow (1-\alpha_t)Q(s_t,a_t) + \alpha_t[R(s_{t+1}) + \gamma(s_{t+1}) \max_{a'} Q(s_{t+1}, a')]$$
where, in $G_B$, $R(s_1,q_1) = 1$ and is zero for all other states, and $\gamma(s_1,q_1) = \gamma_V$ and is one for all other states, where $\gamma_V \in (0, 1)$. The problem arises due to the zero-reward self loops from state $(s_0,q_0)$ to itself. It can be easily observed that, according to this discount scheme and reward function, the correct Q values in $G_B$ are given by:
\begin{equation*}
    Q(s,a) = 
    \begin{cases}
        c & \text{ if } s = (s_1,q_0) \text{ or } s = (s_1,q_1)\\
        0.1c & \text{ if } s = (s_0,q_0)\\
        0 & \text{ otherwise}
    \end{cases}
\end{equation*}
where $c = \frac{1}{1 - \gamma_V}$ is a constant. This means that at the only state where the agent has a choice about what to do, $(s_0,q_0)$, it is ambivalent between choosing $a$ or $b$. While this might seem counter-intuitive, there is in some sense `no rush' for the agent to choose $b$ if it is simply maximising the probability of achieving $\ltlF \psi$. When two Q values are equal, a policy can be chosen so as to randomise over these actions, or the tie can be broken by another scheme (such as via lexicographic RL).

\begin{figure}[h]
    \centering
    \begin{minipage}[c]{0.3\linewidth}
        \flushleft
        \begin {tikzpicture}[
        -latex ,
        auto ,
        node distance =2.5cm and 2.5cm ,
        on grid ,
        semithick ,
        -> ,
        initial text = {},
        state/.style ={circle , draw, 
        }
        ]
            \node[state, label=below left:{$\varnothing$}, label={[label distance=0.2cm]315:{$b$}}] (A) {$s_0$};
            \draw [domain=270:360,-] plot ({0.7*cos(\x)}, {0.7*sin(\x)});
            \node[state, label=below:{$\{\psi\}$}] (B) [right=of A] {$s_1$};
            \node[state, label=left:{$\varnothing$}] (C) [below=of A] {$s_2$};
            \path (A) edge [loop left] node[left] {$a$} (A);
            \path (A) edge  node[below] {$0.1$} (B);
            \path (A) edge  node[right] {$0.9$} (C);
            \path (B) edge [loop right] node[right] {$c$} (B);
            \path (C) edge [loop right] node[right] {$d$} (C);
        \end{tikzpicture}
    \end{minipage}
    \begin{minipage}[c]{0.2\linewidth}
        \centering
        \begin {tikzpicture}[
        -latex ,
        auto ,
        node distance =1.5cm and 1.5cm ,
        semithick ,
        -> ,
        initial text = {},
        state/.style ={circle , draw, 
        }
        ]
            \node[state, initial] (A) {$q_0$};
            \node[state, accepting] (B) [below=of A] {$q_1$};
            \path (A) edge [loop right] node[right] {$\psi$} (A);
            \path (B) edge [loop right] node[right] {$\top$} (B);
            \path (A) edge node[right] {$\neg \psi$} (B);
        \end{tikzpicture}
    \end{minipage}
    \begin{minipage}[c]{0.45\linewidth}
        \flushright
        \begin {tikzpicture}[
        -latex ,
        auto ,
        on grid ,
        semithick ,
        -> ,
        initial text = {},
        state/.style ={circle , draw, 
        }
        ]
            \node[state, label={[label distance=0.2cm]315:{$b$}}] (A) at (0,0) {$s_0,q_0$};
            \draw [domain=270:360,-] plot ({0.9*cos(\x)}, {0.9*sin(\x)});
            \node[state] (B) at (2.5,0) {$s_1,q_1$};
            \node[state] (C) at (0,-2.5) {$s_2,q_0$};
            \node[state] (D) at (5,-2.5) {$s_1,q_0$};
            \node[state] (E) at (2.5,-2.5) {$s_2,q_0$};
            \path (E) edge  node[above] {$d$} (C);
            \path (D) edge  node[above] {$c$} (B);
            \path (A) edge [loop left] node[left] {$a$} (A);
            \path (A) edge  node[below] {$0.1$} (B);
            \path (A) edge  node[right] {$0.9$} (C);
            \path (B) edge [loop right] node[right] {$c$} (B);
            \path (C) edge [loop left] node[left] {$d$} (C);
        \end{tikzpicture}
    \end{minipage}
    \caption{From left to right: a small MG $G$ with joint actions $a$, $b$, $c$, and $d$; a simple LDBA $B$ representing the LTL formula $\ltlF \psi$; the resulting product MG $G_B$ (unreachable states not pictured).}
    \label{nonconv}
\end{figure}
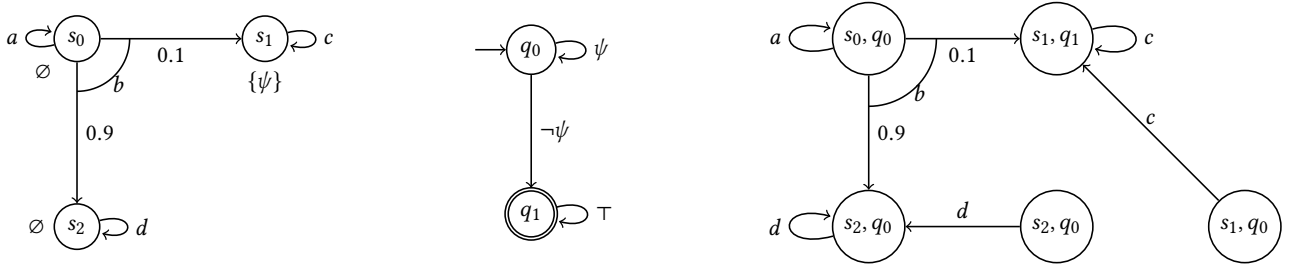

The problem arises as follows. Assume that, for the sake of argument, the Q values are all initialised to 0. This makes no real difference to this example, and in fact has been suggested in earlier work \cite{Hahn2020}. Now suppose that when learning in $G_B$ the agent makes several `lucky' transitions from $(s_0,q_0)$ to $(s_1,q_1)$ again and again in the initial trajectories (which terminate immediately afterwards due to reaching a sink state), updating $Q((s_0,q_0),b)$ to some value $d > 0.1c$. Letting the value of $Q((s_0,q_0),b)$ at time $t$ be given by $q_t$, then each of these updates is described by the rule:
$$q_{t+1} \leftarrow (1-\alpha_t)q_t + \alpha_t[1 + \gamma_V q_t]$$
It is straightforward to see that $q_t$ may quickly exceed $0.1c$. For example, starting from $t = 0$ and setting $\gamma_V = 0.9$ and $\alpha_t = \frac{1}{t + 1}$ then $q_2 = 1.45 > 0.1c = 1$. Now suppose that, starting a new trajectory from $(s_0,q_0)$, the agent repeatedly performs action $a$ and returns to $s_0,q_0$ for some number of timesteps, until $Q((s_0,q_0),a) > 0.1c$. For example, with the setting of the parameters above, then after five updates we reach $Q((s_0,q_0),a) = 1.036$. The problem now is that it will \textit{never} be possible to decrease the value of $Q((s_0,q_0),a)$, due to the option of updating this value in terms of itself:
\begin{align*}
    Q((s_0,q_0),a) &\leftarrow (1-\alpha_t)Q((s_0,q_0),a) + \alpha_t[R(s_0,q_0) + \gamma(s_0,q_0) \max_{a'} Q((s_0,q_0), a')]\\
    &= (1-\alpha_t)Q((s_0,q_0),a) + \alpha_t\max_{a'} Q((s_0,q_0), {a'})\\
    &\geq (1-\alpha_t)Q((s_0,q_0),a) + \alpha_t Q((s_0,q_0), a)\\
    &= Q((s_0,q_0),a)
\end{align*}
However, as the agent will never gain reward after reaching state $(s_2,q_0)$ then we will always have $Q((s_2,q_0),d) = 0$. Hence we can expect that, as $t$ tends to infinity, $Q((s_0,q_0),b)$ will eventually reach its true value of $0.1c$. This can be seen by observing that in this MG, the updates to the Q-values of this particular state-action pair would be identical if using a constant discount factor of $\gamma_V$. But then, once converged, we will have $Q((s_0,q_0), a) > Q((s_0,q_0), b)$. Not only are these Q values incorrect, when extracting a deterministic optimal policy via the standard approach of taking $\pi(s) = \argmax_{a'} Q(s,a')$ then the resulting policy will be incorrect with respect to the original specification $\ltlF\psi$.

\end{document}